%% file: main.tex
\documentclass[a4paper,fleqn]{cas-sc}

\usepackage[numbers,sort&compress]{natbib}

\usepackage[utf8]{inputenc}
\usepackage[T1]{fontenc}
\usepackage{hyperref}
\usepackage{url}
\usepackage{booktabs}
\usepackage{amsfonts}
\usepackage{amssymb,amsmath,amsthm,mathtools}
\usepackage{graphicx}
\usepackage{subcaption}

\usepackage{algorithm}
\usepackage{algorithmicx}
\usepackage{algpseudocode}
\usepackage{float}
\usepackage{needspace}
\newtheorem{proposition}{Proposition}
\newtheorem{theorem}[proposition]{Theorem}
\usepackage{multirow}
\usepackage{array}
\usepackage{tabularx}
\usepackage{xcolor}
\usepackage{tikz}
\usetikzlibrary{decorations.pathreplacing}
\usepackage{pgfplots}
\pgfplotsset{compat=1.18}
\usepackage{cleveref}

\def\A{\mathcal{A}}
\def\V{\mathcal{V}}
\def\G{\mathcal{G}}
\def\Gtilde{\tilde{\mathcal{G}}}
\def\ftilde{\tilde{f}}
\def\ba{\mathbf{a}}
\def\bv{\mathbf{v}}
\def\bb{\mathbf{b}}
\def\bc{\mathbf{c}}
\def\bu{\mathbf{u}}
\def\bA{\mathbf{A}}
\def\bP{\mathbf{P}}
\def\bPhi{\mathbf{\Phi}}

\algnewcommand\algorithmiclet{\textbf{Let:}}
\algnewcommand\Let{\item[\algorithmiclet]}
\makeatletter
\algnewcommand{\AlgLabel}[1]{%
  \item[\textbf{#1:}]%
}
\makeatother

\input{plot_sizes}
\usepackage{placeins}

\makeatletter
\renewcommand\subsection{\@startsection{subsection}{2}{\z@}
  {10pt \@plus 2pt \@minus 1pt}{3pt}{\ssectionfont\raggedright}}
\renewcommand\subsubsection{\@startsection{subsubsection}{3}{\z@}
  {8pt \@plus 1pt \@minus 1pt}{3pt}{\sssectionfont\raggedright}}
\makeatother
\usepackage{casfix}  
\makeatletter\@secpenalty=0\makeatother  
\begin{document}
\shorttitle{Local and Sparse Kernel Operator Learning}
\shortauthors{Sharma, Lowery, Owhadi, and Shankar}

\title[mode = title, align = \raggedright]{Local and Sparse Kernel Operator Learning for Incompressible Flows: An Operator Lagrange Formulation}

\author[1]{Ramansh Sharma}
\ead{ramansh@cs.utah.edu}
\author[1]{Matthew Lowery}
\ead{mlowery@cs.utah.edu}
\author[2]{Houman Owhadi}
\ead{owhadi@caltech.edu}
\author[1]{Varun Shankar}
\cormark[1]
\ead{shankar@cs.utah.edu}

\cortext[1]{Corresponding author}

\affiliation[1]{organization={Kahlert School of Computing, University of Utah},state={UT}, country={USA}}
\affiliation[2]{organization={Department of Computing and Mathematical Sciences, California Institute of Technology},state={CA}, country={USA}}

\input{sections/abstract}

\begin{keywords}
operator learning \sep incompressible flow \sep divergence-free kernels \sep compactly supported kernels \sep RBF-FD \sep sparse Cholesky \sep GPU computing

\MSC[2020] 65D12 \sep 65D15 \sep 46E22 \sep 65F50 \sep 76D05
\end{keywords}

\maketitle
\hypersetup{pdfauthor={Ramansh Sharma, Matthew Lowery, Houman Owhadi, Varun Shankar},pdfsubject={Property-preserving operator learning for incompressible flows},pdfkeywords={operator learning, incompressible flow, kernel methods}}

\input{sections/introduction}
\input{sections/methods}
\input{sections/operator}
\input{sections/lagrange}
\input{sections/results}
\input{sections/conclusion}

\section*{Credit authorship contribution statement}
{\noindent
\textbf{Ramansh Sharma:} conceptualization, methodology, software, investigation, validation, visualization, writing -- original draft, review, editing.\\
\textbf{Matthew Lowery:} methodology, software\\
\textbf{Houman Owhadi:} conceptualization, methodology, supervision.\\
\textbf{Varun Shankar:} conceptualization, funding acquisition, methodology, project administration, resources, software, supervision, validation, visualization, writing.
}

\section*{Declaration of competing interest}
The authors declare that they have no known competing financial interests or personal relationships that could have appeared to influence the work reported in this paper.

\section*{Data availability}
Our implementation, including the scripts that build the tables and figures of this paper, is available in KernelPack-JAX at \url{https://github.com/VarShankar/kernelpack-jax}. The ten benchmark data sets are those of~\citet{sharma2026fluids} and are publicly available on Zenodo~\citep{sharmashankar2026zenodo}.

\input{sections/ack}

\section*{Declaration of generative AI and AI-assisted technologies in the writing process}
During the preparation of this work, the authors used both Codex (OpenAI) and Claude (Anthropic) to help clean up the manuscript and implement parts of the software. After using this tool, the authors reviewed and edited the content as needed and take full responsibility for the content of the publication.
\input{sections/appendix}
\input{sections/gpu_results_appendix}

\FloatBarrier
\bibliographystyle{cas-model2-names}
\bibliography{references}

\end{document}

%% file: plot_sizes.tex
\newcommand{\paperplot}[1]{\includegraphics[width=\csname plotwidth@#1\endcsname pt]{\csname plotfile@#1\endcsname}}
\expandafter\def\csname plotwidth@plots/backward_facing_step/backward_facing_step__lag_0__full__operator_train_time.png\endcsname{216.5289}
\expandafter\def\csname plotwidth@plots/backward_facing_step/backward_facing_step__lag_1__full__spatial_train_time.png\endcsname{200.1668}
\expandafter\def\csname plotwidth@plots/buoyancy_cavity_flow/buoyancy_cavity_flow__lag_0__full__operator_train_time.png\endcsname{214.7374}
\expandafter\def\csname plotwidth@plots/buoyancy_cavity_flow/buoyancy_cavity_flow__lag_1__full__spatial_train_time.png\endcsname{209.1242}
\expandafter\def\csname plotwidth@plots/flow_cylinder_laminar/flow_cylinder_laminar__lag_0__full__operator_train_time.png\endcsname{214.7374}
\expandafter\def\csname plotwidth@plots/flow_cylinder_laminar/flow_cylinder_laminar__lag_1__full__spatial_train_time.png\endcsname{206.7355}
\expandafter\def\csname plotwidth@plots/flow_cylinder_shedding/flow_cylinder_shedding__lag_0__full__operator_train_time.png\endcsname{214.7374}
\expandafter\def\csname plotwidth@plots/flow_cylinder_shedding/flow_cylinder_shedding__lag_1__full__spatial_train_time.png\endcsname{206.7355}
\expandafter\def\csname plotwidth@plots/forced_turb/forced_turb__lag_0__full__operator_train_time.png\endcsname{216.5289}
\expandafter\def\csname plotwidth@plots/forced_turb/forced_turb__lag_1__full__spatial_train_time.png\endcsname{209.1242}
\expandafter\def\csname plotwidth@plots/lid_cavity_flow/lid_cavity_flow__lag_0__full__operator_train_time.png\endcsname{214.7374}
\expandafter\def\csname plotwidth@plots/lid_cavity_flow/lid_cavity_flow__lag_1__full__spatial_train_time.png\endcsname{206.7355}
\expandafter\def\csname plotwidth@plots/merge_vortices_easier/merge_vortices_easier__lag_0__full__operator_train_time.png\endcsname{216.5289}
\expandafter\def\csname plotwidth@plots/merge_vortices_easier/merge_vortices_easier__lag_1__full__spatial_train_time.png\endcsname{209.1242}
\expandafter\def\csname plotwidth@plots/species_transport/species_transport__lag_0__full__operator_train_time.png\endcsname{214.7374}
\expandafter\def\csname plotwidth@plots/species_transport/species_transport__lag_1__full__operator_train_time.png\endcsname{205.9046}
\expandafter\def\csname plotwidth@plots/species_transport/species_transport__lag_1__full__spatial_train_time.png\endcsname{209.1242}
\expandafter\def\csname plotwidth@plots/species_transport/species_transport__lag_1__hnsw__operator_train_time.png\endcsname{205.9046}
\expandafter\def\csname plotwidth@plots/taylor_green/taylor_green__lag_0__full__operator_train_time.png\endcsname{214.7374}
\expandafter\def\csname plotwidth@plots/taylor_green_time/taylor_green_time__lag_0__full__operator_train_time.png\endcsname{214.7374}
\expandafter\def\csname plotwidth@plots/validation_accuracy/backward_facing_step__speed_error.png\endcsname{209.0997}
\expandafter\def\csname plotwidth@plots/validation_accuracy/backward_facing_step__vector_error.png\endcsname{239.8648648648649}
\expandafter\def\csname plotwidth@plots/validation_accuracy/buoyancy_cavity_flow__speed_error.png\endcsname{217.5227}
\expandafter\def\csname plotwidth@plots/validation_accuracy/buoyancy_cavity_flow__vector_error.png\endcsname{239.8648648648649}
\expandafter\def\csname plotwidth@plots/validation_accuracy/flow_cylinder_laminar__speed_error.png\endcsname{209.0997}
\expandafter\def\csname plotwidth@plots/validation_accuracy/flow_cylinder_laminar__vector_error.png\endcsname{239.8648648648649}
\expandafter\def\csname plotwidth@plots/validation_accuracy/flow_cylinder_shedding__speed_error.png\endcsname{217.5227}
\expandafter\def\csname plotwidth@plots/validation_accuracy/flow_cylinder_shedding__vector_error.png\endcsname{239.8648648648649}
\expandafter\def\csname plotwidth@plots/validation_accuracy/forced_turb__speed_error.png\endcsname{209.0997}
\expandafter\def\csname plotwidth@plots/validation_accuracy/forced_turb__vector_error.png\endcsname{262.6126126126126}
\expandafter\def\csname plotwidth@plots/validation_accuracy/lid_cavity_flow__speed_error.png\endcsname{209.0997}
\expandafter\def\csname plotwidth@plots/validation_accuracy/lid_cavity_flow__vector_error.png\endcsname{239.8648648648649}
\expandafter\def\csname plotwidth@plots/validation_accuracy/merge_vortices_easier__speed_error.png\endcsname{209.0997}
\expandafter\def\csname plotwidth@plots/validation_accuracy/merge_vortices_easier__vector_error.png\endcsname{239.8648648648649}
\expandafter\def\csname plotwidth@plots/validation_accuracy/species_transport__speed_error.png\endcsname{209.0997}
\expandafter\def\csname plotwidth@plots/validation_accuracy/species_transport__vector_error.png\endcsname{239.8648648648649}
\expandafter\def\csname plotwidth@plots/validation_accuracy/taylor_green_time__speed_error.png\endcsname{209.0997}
\expandafter\def\csname plotwidth@plots/validation_accuracy/taylor_green_time__vector_error.png\endcsname{237.1621621621622}
\expandafter\def\csname plotwidth@plots/validation_accuracy/taylor_green__speed_error.png\endcsname{209.0997}
\expandafter\def\csname plotwidth@plots/validation_accuracy/taylor_green__vector_error.png\endcsname{239.8648648648649}
\expandafter\def\csname plotwidth@plots/plots_power_law/backward_facing_step/backward_facing_step__lag_0__full__test_inference_time.png\endcsname{229.2531}
\expandafter\def\csname plotwidth@plots/plots_power_law/backward_facing_step/backward_facing_step__lag_1__full__test_inference_time.png\endcsname{229.2531}
\expandafter\def\csname plotwidth@plots/plots_power_law/buoyancy_cavity_flow/buoyancy_cavity_flow__lag_0__full__test_inference_time.png\endcsname{227.4690}
\expandafter\def\csname plotwidth@plots/plots_power_law/buoyancy_cavity_flow/buoyancy_cavity_flow__lag_1__full__test_inference_time.png\endcsname{227.4690}
\expandafter\def\csname plotwidth@plots/plots_power_law/flow_cylinder_laminar/flow_cylinder_laminar__lag_0__full__test_inference_time.png\endcsname{229.2531}
\expandafter\def\csname plotwidth@plots/plots_power_law/flow_cylinder_laminar/flow_cylinder_laminar__lag_1__full__test_inference_time.png\endcsname{229.2531}
\expandafter\def\csname plotwidth@plots/plots_power_law/flow_cylinder_shedding/flow_cylinder_shedding__lag_0__full__test_inference_time.png\endcsname{229.2531}
\expandafter\def\csname plotwidth@plots/plots_power_law/flow_cylinder_shedding/flow_cylinder_shedding__lag_1__full__test_inference_time.png\endcsname{229.2531}
\expandafter\def\csname plotwidth@plots/plots_power_law/forced_turb/forced_turb__lag_0__full__test_inference_time.png\endcsname{227.4690}
\expandafter\def\csname plotwidth@plots/plots_power_law/forced_turb/forced_turb__lag_1__full__test_inference_time.png\endcsname{227.4690}
\expandafter\def\csname plotwidth@plots/plots_power_law/lid_cavity_flow/lid_cavity_flow__lag_0__full__test_inference_time.png\endcsname{229.2531}
\expandafter\def\csname plotwidth@plots/plots_power_law/lid_cavity_flow/lid_cavity_flow__lag_1__full__test_inference_time.png\endcsname{229.2531}
\expandafter\def\csname plotwidth@plots/plots_power_law/merge_vortices_easier/merge_vortices_easier__lag_0__full__test_inference_time.png\endcsname{229.2531}
\expandafter\def\csname plotwidth@plots/plots_power_law/merge_vortices_easier/merge_vortices_easier__lag_1__full__test_inference_time.png\endcsname{229.2531}
\expandafter\def\csname plotwidth@plots/plots_power_law/species_transport/species_transport__lag_0__full__test_inference_time.png\endcsname{227.4690}
\expandafter\def\csname plotwidth@plots/plots_power_law/species_transport/species_transport__lag_1__full__test_inference_time.png\endcsname{227.4690}
\expandafter\def\csname plotwidth@plots/plots_power_law/taylor_green/taylor_green__lag_0__full__test_inference_time.png\endcsname{229.2531}
\expandafter\def\csname plotwidth@plots/plots_power_law/taylor_green/taylor_green__lag_1__full__test_inference_time.png\endcsname{229.2531}
\expandafter\def\csname plotwidth@plots/plots_power_law/taylor_green_time/taylor_green_time__lag_0__full__test_inference_time.png\endcsname{229.2531}
\expandafter\def\csname plotwidth@plots/plots_power_law/taylor_green_time/taylor_green_time__lag_1__full__test_inference_time.png\endcsname{229.2531}
\expandafter\def\csname plotfile@plots/flow_cylinder_laminar/flow_cylinder_laminar__lag_1__full__spatial_train_time.png\endcsname{paper_plots/flow_cylinder_laminar__lag_1__full__spatial_train_time.png}
\expandafter\def\csname plotfile@plots/plots_power_law/lid_cavity_flow/lid_cavity_flow__lag_0__full__test_inference_time.png\endcsname{paper_plots/lid_cavity_flow__lag_0__full__test_inference_time.png}
\expandafter\def\csname plotfile@plots/plots_power_law/species_transport/species_transport__lag_1__full__test_inference_time.png\endcsname{paper_plots/species_transport__lag_1__full__test_inference_time.png}
\expandafter\def\csname plotfile@plots/plots_power_law/forced_turb/forced_turb__lag_1__full__test_inference_time.png\endcsname{paper_plots/forced_turb__lag_1__full__test_inference_time.png}
\expandafter\def\csname plotfile@plots/flow_cylinder_laminar/flow_cylinder_laminar__lag_0__full__operator_train_time.png\endcsname{paper_plots/flow_cylinder_laminar__lag_0__full__operator_train_time.png}
\expandafter\def\csname plotfile@plots/validation_accuracy/merge_vortices_easier__vector_error.png\endcsname{paper_plots/merge_vortices_easier__vector_error.png}
\expandafter\def\csname plotfile@plots/plots_power_law/flow_cylinder_shedding/flow_cylinder_shedding__lag_1__full__test_inference_time.png\endcsname{paper_plots/flow_cylinder_shedding__lag_1__full__test_inference_time.png}
\expandafter\def\csname plotfile@plots/taylor_green_time/taylor_green_time__lag_0__full__operator_train_time.png\endcsname{paper_plots/taylor_green_time__lag_0__full__operator_train_time.png}
\expandafter\def\csname plotfile@plots/lid_cavity_flow/lid_cavity_flow__lag_1__full__spatial_train_time.png\endcsname{paper_plots/lid_cavity_flow__lag_1__full__spatial_train_time.png}
\expandafter\def\csname plotfile@plots/validation_accuracy/flow_cylinder_shedding__vector_error.png\endcsname{paper_plots/flow_cylinder_shedding__vector_error.png}
\expandafter\def\csname plotfile@plots/plots_power_law/merge_vortices_easier/merge_vortices_easier__lag_0__full__test_inference_time.png\endcsname{paper_plots/merge_vortices_easier__lag_0__full__test_inference_time.png}
\expandafter\def\csname plotfile@plots/validation_accuracy/backward_facing_step__vector_error.png\endcsname{paper_plots/backward_facing_step__vector_error.png}
\expandafter\def\csname plotfile@plots/buoyancy_cavity_flow/buoyancy_cavity_flow__lag_0__full__operator_train_time.png\endcsname{paper_plots/buoyancy_cavity_flow__lag_0__full__operator_train_time.png}
\expandafter\def\csname plotfile@plots/backward_facing_step/backward_facing_step__lag_0__full__operator_train_time.png\endcsname{paper_plots/backward_facing_step__lag_0__full__operator_train_time.png}
\expandafter\def\csname plotfile@plots/validation_accuracy/buoyancy_cavity_flow__vector_error.png\endcsname{paper_plots/buoyancy_cavity_flow__vector_error.png}
\expandafter\def\csname plotfile@plots/flow_cylinder_shedding/flow_cylinder_shedding__lag_1__full__spatial_train_time.png\endcsname{paper_plots/flow_cylinder_shedding__lag_1__full__spatial_train_time.png}
\expandafter\def\csname plotfile@plots/plots_power_law/backward_facing_step/backward_facing_step__lag_0__full__test_inference_time.png\endcsname{paper_plots/backward_facing_step__lag_0__full__test_inference_time.png}
\expandafter\def\csname plotfile@plots/forced_turb/forced_turb__lag_1__full__spatial_train_time.png\endcsname{paper_plots/forced_turb__lag_1__full__spatial_train_time.png}
\expandafter\def\csname plotfile@plots/forced_turb/forced_turb__lag_0__full__operator_train_time.png\endcsname{paper_plots/forced_turb__lag_0__full__operator_train_time.png}
\expandafter\def\csname plotfile@plots/plots_power_law/flow_cylinder_laminar/flow_cylinder_laminar__lag_0__full__test_inference_time.png\endcsname{paper_plots/flow_cylinder_laminar__lag_0__full__test_inference_time.png}
\expandafter\def\csname plotfile@plots/taylor_green/taylor_green__lag_0__full__operator_train_time.png\endcsname{paper_plots/taylor_green__lag_0__full__operator_train_time.png}
\expandafter\def\csname plotfile@plots/validation_accuracy/taylor_green__vector_error.png\endcsname{paper_plots/taylor_green__vector_error.png}
\expandafter\def\csname plotfile@plots/validation_accuracy/flow_cylinder_laminar__vector_error.png\endcsname{paper_plots/flow_cylinder_laminar__vector_error.png}
\expandafter\def\csname plotfile@plots/species_transport/species_transport__lag_1__full__spatial_train_time.png\endcsname{paper_plots/species_transport__lag_1__full__spatial_train_time.png}
\expandafter\def\csname plotfile@plots/flow_cylinder_shedding/flow_cylinder_shedding__lag_0__full__operator_train_time.png\endcsname{paper_plots/flow_cylinder_shedding__lag_0__full__operator_train_time.png}
\expandafter\def\csname plotfile@plots/plots_power_law/buoyancy_cavity_flow/buoyancy_cavity_flow__lag_0__full__test_inference_time.png\endcsname{paper_plots/buoyancy_cavity_flow__lag_0__full__test_inference_time.png}
\expandafter\def\csname plotfile@plots/species_transport/species_transport__lag_0__full__operator_train_time.png\endcsname{paper_plots/species_transport__lag_0__full__operator_train_time.png}
\expandafter\def\csname plotfile@plots/plots_power_law/taylor_green_time/taylor_green_time__lag_1__full__test_inference_time.png\endcsname{paper_plots/taylor_green_time__lag_1__full__test_inference_time.png}
\expandafter\def\csname plotfile@plots/plots_power_law/backward_facing_step/backward_facing_step__lag_1__full__test_inference_time.png\endcsname{paper_plots/backward_facing_step__lag_1__full__test_inference_time.png}
\expandafter\def\csname plotfile@plots/buoyancy_cavity_flow/buoyancy_cavity_flow__lag_1__full__spatial_train_time.png\endcsname{paper_plots/buoyancy_cavity_flow__lag_1__full__spatial_train_time.png}
\expandafter\def\csname plotfile@plots/plots_power_law/taylor_green_time/taylor_green_time__lag_0__full__test_inference_time.png\endcsname{paper_plots/taylor_green_time__lag_0__full__test_inference_time.png}
\expandafter\def\csname plotfile@plots/plots_power_law/species_transport/species_transport__lag_0__full__test_inference_time.png\endcsname{paper_plots/species_transport__lag_0__full__test_inference_time.png}
\expandafter\def\csname plotfile@plots/validation_accuracy/forced_turb__vector_error.png\endcsname{paper_plots/forced_turb__vector_error.png}
\expandafter\def\csname plotfile@plots/validation_accuracy/taylor_green_time__vector_error.png\endcsname{paper_plots/taylor_green_time__vector_error.png}
\expandafter\def\csname plotfile@plots/plots_power_law/flow_cylinder_laminar/flow_cylinder_laminar__lag_1__full__test_inference_time.png\endcsname{paper_plots/flow_cylinder_laminar__lag_1__full__test_inference_time.png}
\expandafter\def\csname plotfile@plots/validation_accuracy/lid_cavity_flow__vector_error.png\endcsname{paper_plots/lid_cavity_flow__vector_error.png}
\expandafter\def\csname plotfile@plots/backward_facing_step/backward_facing_step__lag_1__full__spatial_train_time.png\endcsname{paper_plots/backward_facing_step__lag_1__full__spatial_train_time.png}
\expandafter\def\csname plotfile@plots/merge_vortices_easier/merge_vortices_easier__lag_1__full__spatial_train_time.png\endcsname{paper_plots/merge_vortices_easier__lag_1__full__spatial_train_time.png}
\expandafter\def\csname plotfile@plots/plots_power_law/merge_vortices_easier/merge_vortices_easier__lag_1__full__test_inference_time.png\endcsname{paper_plots/merge_vortices_easier__lag_1__full__test_inference_time.png}
\expandafter\def\csname plotfile@plots/plots_power_law/lid_cavity_flow/lid_cavity_flow__lag_1__full__test_inference_time.png\endcsname{paper_plots/lid_cavity_flow__lag_1__full__test_inference_time.png}
\expandafter\def\csname plotfile@plots/plots_power_law/buoyancy_cavity_flow/buoyancy_cavity_flow__lag_1__full__test_inference_time.png\endcsname{paper_plots/buoyancy_cavity_flow__lag_1__full__test_inference_time.png}
\expandafter\def\csname plotfile@plots/validation_accuracy/species_transport__vector_error.png\endcsname{paper_plots/species_transport__vector_error.png}
\expandafter\def\csname plotfile@plots/lid_cavity_flow/lid_cavity_flow__lag_0__full__operator_train_time.png\endcsname{paper_plots/lid_cavity_flow__lag_0__full__operator_train_time.png}
\expandafter\def\csname plotfile@plots/plots_power_law/forced_turb/forced_turb__lag_0__full__test_inference_time.png\endcsname{paper_plots/forced_turb__lag_0__full__test_inference_time.png}
\expandafter\def\csname plotfile@plots/plots_power_law/flow_cylinder_shedding/flow_cylinder_shedding__lag_0__full__test_inference_time.png\endcsname{paper_plots/flow_cylinder_shedding__lag_0__full__test_inference_time.png}
\expandafter\def\csname plotfile@plots/plots_power_law/taylor_green/taylor_green__lag_0__full__test_inference_time.png\endcsname{paper_plots/taylor_green__lag_0__full__test_inference_time.png}
\expandafter\def\csname plotfile@plots/merge_vortices_easier/merge_vortices_easier__lag_0__full__operator_train_time.png\endcsname{paper_plots/merge_vortices_easier__lag_0__full__operator_train_time.png}
\expandafter\def\csname plotfile@plots/plots_power_law/taylor_green/taylor_green__lag_1__full__test_inference_time.png\endcsname{paper_plots/taylor_green__lag_1__full__test_inference_time.png}
\expandafter\def\csname plotfile@plots/species_transport/species_transport__lag_1__full__operator_train_time.png\endcsname{plots/species_transport/species_transport__lag_1__full__operator_train_time.png}
\expandafter\def\csname plotfile@plots/species_transport/species_transport__lag_1__hnsw__operator_train_time.png\endcsname{plots/species_transport/species_transport__lag_1__hnsw__operator_train_time.png}
\expandafter\def\csname plotfile@plots/validation_accuracy/backward_facing_step__speed_error.png\endcsname{plots/validation_accuracy/backward_facing_step__speed_error.png}
\expandafter\def\csname plotfile@plots/validation_accuracy/buoyancy_cavity_flow__speed_error.png\endcsname{plots/validation_accuracy/buoyancy_cavity_flow__speed_error.png}
\expandafter\def\csname plotfile@plots/validation_accuracy/flow_cylinder_laminar__speed_error.png\endcsname{plots/validation_accuracy/flow_cylinder_laminar__speed_error.png}
\expandafter\def\csname plotfile@plots/validation_accuracy/flow_cylinder_shedding__speed_error.png\endcsname{plots/validation_accuracy/flow_cylinder_shedding__speed_error.png}
\expandafter\def\csname plotfile@plots/validation_accuracy/forced_turb__speed_error.png\endcsname{plots/validation_accuracy/forced_turb__speed_error.png}
\expandafter\def\csname plotfile@plots/validation_accuracy/lid_cavity_flow__speed_error.png\endcsname{plots/validation_accuracy/lid_cavity_flow__speed_error.png}
\expandafter\def\csname plotfile@plots/validation_accuracy/merge_vortices_easier__speed_error.png\endcsname{plots/validation_accuracy/merge_vortices_easier__speed_error.png}
\expandafter\def\csname plotfile@plots/validation_accuracy/species_transport__speed_error.png\endcsname{plots/validation_accuracy/species_transport__speed_error.png}
\expandafter\def\csname plotfile@plots/validation_accuracy/taylor_green_time__speed_error.png\endcsname{plots/validation_accuracy/taylor_green_time__speed_error.png}
\expandafter\def\csname plotfile@plots/validation_accuracy/taylor_green__speed_error.png\endcsname{plots/validation_accuracy/taylor_green__speed_error.png}

%% file: sections/abstract.tex
\begin{abstract}
We present a local and sparse kernel method for learning the solution operators of incompressible flows. It builds on our earlier property-preserving kernel method, whose global output reconstruction and dense operator regression scale poorly. Our method replaces the global output reconstruction by local divergence-free interpolants with divergence-free polynomials of any degree, solved in parallel on the GPU. These interpolants are divergence free, periodic on periodic domains, and, with a multiscale kernel, approach Kolmogorov statistics under refinement. A compactly supported kernel, positive semidefinite in any input dimension, makes the operator regression sparse. Its support radius, and hence the sparsity, is fitted to data, and optional affine polynomials in the input capture global trends. Operator Lagrange weights join the two. They cost one sparse solve per new input and act on the training fields before the output reconstruction, so their local coefficients are never formed. We show that, unlike direct evaluation, Lagrange evaluation does not amplify solve errors by the cancellation among spatial coefficients. On ten fluid benchmarks, validation selects sparse matrices nearly everywhere, with densities that mostly decrease as the training set grows, and affine augmentation lowers the error by more than an order of magnitude for forced isotropic turbulence. These sparse models are at least as accurate as our earlier dense method and more accurate than neural operators. Lagrange evaluation is as accurate as direct evaluation with local output reconstructions, and up to nearly three orders of magnitude more accurate with global ones. We also present GPU timings and storage.
\end{abstract}

%% file: sections/introduction.tex
\section{Introduction}
\label{sec:introduction}

Uncertainty quantification, design, and inverse problems for incompressible flows require the velocity field for many initial conditions, boundary conditions, or forcing functions, and the repeated solution of the Navier--Stokes equations then dominates the cost~\citep{sudret2017surrogate,queipo2005surrogate,stuart2010inverse,tezzele2020enhancing}. One remedy is to learn the solution operator, the map from the problem data to the velocity, from a collection of simulations~\citep{boulle2024mathematical}. Reduced-order models have long served this purpose~\citep{quarteroni2015reduced,benner2015survey}, and neural operators such as DeepONets, Fourier and graph neural operators, transformer-based models, and kernel neural operators now learn such maps from large data sets~\citep{lu2021deeponet,li2020fourier,li2020neural_operator,li2023fourier,wu2024transolver,lowery2024kernel}. We seek an operator learner for incompressible flows that satisfies four criteria. First, every prediction should be divergence free, and periodic on periodic domains, however accurately the operator itself is learned. Second, the cost of the output representation should grow linearly with the number of output points, so that fine three-dimensional outputs remain affordable. Third, the regression should handle ten thousand or more training functions without factoring a dense matrix. Fourth, every stage should run in parallel on a graphics processing unit (GPU).

Kernel methods offer an attractive alternative to neural operators~\citep{batlle2024kernel,mora2025operator,kadri2016operator}. Batlle et al.~\citep{batlle2024kernel} learn an operator from point samples in two steps. An operator regression, a kernel regression on the space of sampled inputs, maps the samples of the input function to the samples of the output function. An output reconstruction, a kernel interpolant of these samples in space (or spacetime), then recovers the output function. Since both steps are kernel interpolation or regression problems, the learned operator has a closed form, few parameters, and an error analysis. For flows, the output representation is the natural place to impose physics~\citep{richter2022neural,khorrami2024physics}. In our previous work~\citep{sharma2026fluids}, we therefore regressed the coefficients of a divergence-free kernel interpolant of the output instead of its point samples, so that every prediction is exactly divergence free regardless of the accuracy of the operator regression. We called this the property-preserving kernel method (PPKM). PPKM satisfies the first criterion, but both of its kernel problems are global. Its operator regression factors one dense kernel matrix on all $N$ training inputs, at a cost of $O(N^3)$ operations and $O(N^2)$ storage, and its output reconstruction solves one dense system on all output points. Both limits bind for three-dimensional turbulence, where a training set has ten thousand or more functions and an output has thousands of points. Meeting the other criteria poses three tasks. The output reconstruction must become local without losing the properties of the global one, the operator regression must become sparse, and the two must be joined without forming the local coefficients of every training field.

The first task draws on radial basis function-generated finite differences (RBF-FD). RBF-FD methods replace a global radial basis function (RBF) interpolant, whose dense system couples all nodes, by small interpolants on the stencil of each node, so that every linear system has a fixed size and is independent of the others~\citep{tolstykh2003using,shu2003local,wright2006scattered,fornberg2015solving,fornberg2015primer}. Over the last two decades, RBF-FD methods have been stabilized for convection-dominated problems~\citep{fornberg2011stabilization,shankar2018hyperviscosity}, applied to the Navier--Stokes equations~\citep{shu2003local,flyer2016enhancing}, extended to overlapped stencils, surfaces, and moving domains~\citep{shankar2017overlapped,shankar2018rbfloi,shankar2021efficient}, and formulated in the least-squares sense~\citep{tominec2021least}. They also have error bounds~\citep{davydov2016error}, node generators for irregular domains~\citep{shankar2018robust}, and GPU implementations~\citep{bollig2012solution}. In particular, polyharmonic splines (PHS) augmented with polynomials converge at the order of the polynomial degree and have no shape parameter~\citep{flyer2016role,bayona2017role}. For flows, the local interpolants must also be divergence free. Divergence-free kernels, obtained by applying $-\Delta I+\nabla\nabla^{\mathsf T}$ to a scalar kernel~\citep{narcowich1994generalized,dodu2004irrotational,narcowich2007divergence,fuselier2008sobolev,wendland2009divergence}, have this property by construction and have been used both globally and in partition-of-unity form~\citep{fuselier2016high,drake2021partition}, and Yelnyk~\citep{yelnyk2023rbf} combined divergence-free PHS with polynomials in local Hermite interpolation for Stokes problems. However, the high order of PHS-based RBF-FD comes from the polynomials, so a local divergence-free interpolant of high order needs divergence-free polynomials of any degree, in two and three dimensions, that can be evaluated anywhere on the stencil.

Our approach is to build these polynomials by differentiating products of Legendre polynomials, select an independent subset on each stencil by a column-pivoted QR factorization, and combine them with a divergence-free PHS on the stencils of RBF-FD, used for interpolation instead of differentiation. Every local interpolant is then exactly divergence free, and building the stencils from periodic images of the nodes makes the output reconstruction periodic. For turbulence, we replace the scale-free PHS by the multiscale Mat\'ern kernel of~\citet{sharma2026fluids}, whose weights encode the Kolmogorov scaling of the velocity increments, so that the output reconstruction approaches prescribed Kolmogorov statistics as the nodes are refined. Mat\'ern kernels and PHS are Green's functions of closely related operators, and a Mat\'ern kernel tends to a PHS as its length grows, so this change adds length scales without changing the construction. Since the local systems are independent and of fixed size, we assemble, factor, and evaluate them in batches on the GPU, and our output reconstruction is embarrassingly parallel.

For the second task, the dense kernel matrix of the operator regression can be approximated by low-rank, Nystr\"om, and random-feature methods, inducing points, structured kernel interpolation, or sparse approximate factors of its inverse~\citep{drineas2005approximating,gittens2013revisiting,snelson2006sparse,rahimi2007random,yu2016orthogonal,fine2001efficient,bach2005predictive,schafer2004sparse,si2014memory,wilson2015kernel}. In contrast, a compactly supported kernel gives a sparse matrix that can be factored exactly by sparse Cholesky. Its obstacle is the dimension of the input. A sampled input function has thousands of coordinates, but no compactly supported radial kernel is positive definite in every dimension~\citep{Wendland2004}. Our kernel therefore determines its support from a few principal coordinates of the input, in which a compactly supported function is positive definite, and treats the remaining coordinates with a Mat\'ern factor, which is positive definite in every dimension. The product is positive semidefinite in any input dimension. Its support radius sets the sparsity of the kernel matrix, and we choose it from the data instead of fixing it in advance. A restricted likelihood criterion, with a smooth count of the interacting training pairs as an optional penalty, fits the shape of the kernel and the regularization, and the error on validation fields then selects the radius, so that nearly dense matrices remain among the candidates when accuracy requires them.

However, a compactly supported kernel returns to zero away from the training inputs, so it represents trends in the input poorly, although many operators depend nearly linearly on part of their input. We therefore augment the operator regression with affine polynomials in the leading principal coordinates of the input, which it reproduces exactly, as in polynomially augmented kernel interpolation. The resulting saddle-point system is solved by a sparse Cholesky factorization of the kernel matrix followed by a QR factorization of the whitened polynomial columns, which avoids forming the Schur complement~\citep{belianovich2026unified}. JAX has no sparse direct solver that keeps a factorization between calls, so we factor the kernel matrix on the GPU with cuDSS, through a primitive of our KernelPack-JAX framework that exposes cuDSS to compiled JAX programs~\citep{cattaneo2026kernelpack}.

The third task leads to operator Lagrange weights. Since the same kernel regresses every output coefficient, a prediction is a linear combination of the training outputs, with weights that depend only on the new input. These weights cost one sparse solve per new input, regardless of the size of the output, and they give the coefficients on every stencil at once (Figure~\ref{fig:pipeline}). Because the output reconstruction is linear, the weights can also be applied to the sampled training fields before the output reconstruction, so the local coefficients of the training fields, which would take tens of times the storage of the fields themselves, are never formed. Weights and coefficients give the same prediction in exact arithmetic but round differently. We show that direct evaluation through coefficients amplifies solve errors by a factor that measures the cancellation among the spatial coefficients, while Lagrange evaluation does not.

We test our method in two steps. On scattered nodes, our local output reconstruction converges at close to orders $2$, $4$, and $6$, keeps the divergence and the periodic mismatch at rounding level, and approaches prescribed Kolmogorov statistics under refinement at the separations that the nodes resolve. Beyond a few thousand output points, it is also cheaper to set up than a global output reconstruction, and its memory grows only linearly with their number. We then compare kernels with a fixed and a fitted shape, each with and without affine augmentation, on ten fluid benchmarks with up to ten thousand training functions. Validation selects sparse kernel matrices on all benchmarks except forced isotropic turbulence and the unaugmented Taylor--Green models, and on most of them the selected density decreases as the training set grows. With these sparse matrices, the test errors decrease steadily with the number of training functions. At the training sizes of our earlier study~\citep{sharma2026fluids}, our models are at least as accurate as dense PPKM and the kernel method of~\citet{batlle2024kernel} on every data set, and more accurate than the neural operators reported there. Affine augmentation reduces the test error for forced isotropic turbulence by more than an order of magnitude, and together with the fitted shape it does the same for turbulent species transport. In contrast, the five flows whose training inputs vary along a single direction are insensitive to both choices. Finally, Lagrange evaluation is as accurate as direct evaluation with our local output reconstruction, whose amplification factors are at most about $10^4$, and up to nearly three orders of magnitude more accurate with global output reconstructions, whose amplification factors reach $6\times10^9$.

The remainder of this paper is organized as follows. Section~\ref{sec:methods} constructs the local property-preserving output reconstruction, Section~\ref{sec:operator} the sparse operator regression and its affine augmentation, and Section~\ref{sec:lagrange-section} the operator Lagrange weights, their rounding errors, and the GPU implementation. Section~\ref{sec:results} presents the numerical experiments, and Section~\ref{sec:conclusion} our conclusions.

\subsection{Problem formulation}
\label{sec:overview}
\label{sec:sappkm_algorithm}

We consider an incompressible fluid of constant density, governed by
\begin{align}
\label{eq:ins}
\frac{\partial \bu}{\partial t} + (\bu \cdot \nabla) \bu
&= \nu \nabla^2 \bu - \frac{1}{\rho} \nabla p + \frac{1}{\rho} \mathbf{f},
\quad \text{in } \Omega \times (0,T], \\
\nabla \cdot \bu &= 0, \quad \text{in } \Omega \times (0,T], \\
\bu(\mathbf{x},0) &= \bu_0(\mathbf{x}), \quad \text{in } \Omega, \\
\mathcal{B} \bu &= \mathbf{g}, \quad \text{on } \partial \Omega \times (0,T],
\end{align}
where $\bu$ is the velocity, $p$ the pressure, $\nu$ the kinematic viscosity, $\rho$ the density, $\mathbf{f}$ the body force, and $\mathcal{B}$ the boundary operator. Our operators map initial data (a velocity $\bu_0$, a vorticity, or, for a buoyancy-driven flow, a temperature), the boundary data $\mathbf{g}$, or the forcing $\mathbf{f}$ to the velocity. The training data are simulations of~\eqref{eq:ins} from~\citet{sharma2026fluids} (Appendix~\ref{app:benchmark-configurations}).

Let $\A$ and $\V$ be separable Banach spaces of input and output functions, and let $\G: \A \rightarrow \V$ be the operator we wish to learn. An input $a\in\A$ and its output $v\in\V$ are defined on domains $\Omega_a$ and $\Omega_v$, which may be spatial, spacetime, or parametric, with
\begin{align}
a: \Omega_a \rightarrow \mathbb{R}^{d_a},\quad
v: \Omega_v \rightarrow \mathbb{R}^{d}.
\end{align}
Here $d_a$ and $d$ are the numbers of input and output components, and $D_a$ and $D_v$ are the dimensions of the domains. We sample inputs at $X=\{x_1,\ldots,x_n\}\subset\Omega_a\subset\mathbb{R}^{D_a}$, which gives vectors $\ba\in\mathbb{R}^{n d_a}$. For spatial outputs, $D_v=d$, and we sample at $Y=\{y_j\}_{j=1}^{m}\subset\Omega_v\subset\mathbb{R}^d$, which gives vectors $\bv\in\mathbb{R}^{md}$ stacked in component blocks, each holding one velocity component at all points of $Y$, as in~\citet{sharma2026fluids}. For spacetime outputs, $D_v=d+1$, and sampling the $m$ spatial sites at $N_t$ times gives vectors in $\mathbb{R}^{dmN_t}$ (Section~\ref{sec:spacetime}).

Given $N$ training pairs $\{(a_i,v_i)\}_{i=1}^{N}$ with sampled vectors $\{(\ba_i,\bv_i)\}_{i=1}^{N}$, we write the learned operator, as in~\citet{sharma2026fluids}, as
\begin{align}
v \approx \Gtilde(a) = \chi \circ \ftilde \circ \varphi(a),
\label{eq:sappkm_decomposition}
\end{align}
where $\varphi$ samples the input, the operator regression $\ftilde$ maps the samples to the coefficients of the output, and the output reconstruction $\chi$ maps these coefficients to a function in space (or spacetime). This structure follows the formulation of~\citet{batlle2024kernel}, in which $\ftilde$ maps input samples to output samples, with the difference that $\ftilde$ predicts the coefficients of a divergence-free interpolant. In PPKM~\citep{sharma2026fluids}, $\chi$ is a global divergence-free interpolant and $\ftilde$ a dense operator regression. Here, $\chi$ is a collection of local divergence-free interpolants (Section~\ref{sec:methods}), and $\ftilde$ is a sparse operator regression (Section~\ref{sec:operator}). Figure~\ref{fig:pipeline} shows how operator Lagrange weights join the two (Section~\ref{sec:lagrange-section}).
\begin{figure}[tbp]
\centering
\resizebox{\textwidth}{!}{\input{sections/fig_pipeline}}
\caption{Overview of our method for a new input $\ba_*$. The compactly supported kernel $\lambda$ of Section~\ref{sec:operator} gives a sparse matrix $K_{UU}^{\theta}$ on the sampled training inputs (left and center). One sparse solve with the kernel values $k_U(\ba_*)$ gives the operator Lagrange weights $\ell(\ba_*)$ of Section~\ref{sec:lagrange-section}, shown here without their affine correction. The same weights give the coefficients $\bb_*^j=B_j^{\mathsf T}\ell(\ba_*)$ of the local divergence-free interpolant of Section~\ref{sec:methods} on every stencil $S_j$ of the output (right), where the rows of $B_j$ are the coefficients of the training fields on $S_j$. In practice, we apply the weights to the sampled training fields before the output reconstruction, which gives the same prediction (Section~\ref{sec:commuted-prediction}).}
\label{fig:pipeline}
\end{figure}

We use the index $i$ for training functions, $j$ for stencils, $k$ for components and coordinates, $q$ for new (query) inputs and query points, and $t$ for time. Each stencil has $n_s$ nodes. A subscript $*$ marks a quantity of a new input, a superscript $\star$ a quantity at query points, and a hat a computed quantity in the error analysis. The superscript $\mathsf T$ denotes the transpose, $I$ an identity matrix, and $\|\cdot\|_F$ the Frobenius norm.

%% file: sections/fig_pipeline.tex
\begin{tikzpicture}[x=1cm,y=1cm,font=\small]
\node[font=\small\bfseries] at (1.35,6.05) {Sampled inputs, $\varphi$};
\draw[gray!70,rounded corners=1pt] (0.15,4.60) rectangle (2.55,5.40);
\draw[blue!70!black,thick] plot[domain=0:1,samples=60] ({0.25+2.2*\x},{5.00+(0.183)*sin(180*\x+0.00)+(-0.114)*sin(360*\x+40.11)+(0.012)*sin(540*\x+80.21)+(-0.013)*sin(720*\x+120.32)});
\node[left] at (0.15,5.00) {$\ba_1$};
\draw[gray!70,rounded corners=1pt] (0.15,3.55) rectangle (2.55,4.35);
\draw[blue!70!black,thick] plot[domain=0:1,samples=60] ({0.25+2.2*\x},{3.95+(-0.151)*sin(180*\x+0.00)+(-0.036)*sin(360*\x+40.11)+(-0.225)*sin(540*\x+80.21)+(-0.019)*sin(720*\x+120.32)});
\node[left] at (0.15,3.95) {$\ba_2$};
\draw[gray!70,rounded corners=1pt] (0.15,2.50) rectangle (2.55,3.30);
\draw[blue!70!black,thick] plot[domain=0:1,samples=60] ({0.25+2.2*\x},{2.90+(-0.110)*sin(180*\x+0.00)+(0.211)*sin(360*\x+40.11)+(0.010)*sin(540*\x+80.21)+(-0.011)*sin(720*\x+120.32)});
\node[left] at (0.15,2.90) {$\ba_3$};
\draw[gray!70,rounded corners=1pt] (0.15,0.85) rectangle (2.55,1.65);
\draw[blue!70!black,thick] plot[domain=0:1,samples=60] ({0.25+2.2*\x},{1.25+(-0.120)*sin(180*\x+0.00)+(-0.143)*sin(360*\x+40.11)+(-0.150)*sin(540*\x+80.21)+(-0.042)*sin(720*\x+120.32)});
\node[left] at (0.15,1.25) {$\ba_N$};
\node at (1.35,2.12) {$\vdots$};
\draw[decorate,decoration={brace,amplitude=5pt},gray!70!black] (2.68,5.4) -- (2.68,0.85);
\draw[orange!80!black,thick,rounded corners=1pt] (0.15,-0.55) rectangle (2.55,0.25);
\draw[orange!80!black,thick] plot[domain=0:1,samples=60] ({0.25+2.2*\x},{-0.15+0.3*sin(288*\x+22.92)*exp(-0.5*\x)});
\node[left] at (0.15,-0.15) {$\ba_*$};
\node[font=\footnotesize,orange!70!black] at (1.35,-0.77) {new input};
\node[font=\small\bfseries] at (5.90,6.05) {Sparse operator regression, $\ftilde$};
\foreach \i/\j/\t in {0/0/80,0/1/36,0/2/31,0/3/25,1/0/36,1/1/80,1/2/55,1/3/25,2/0/31,2/1/55,2/2/80,2/3/28,
  2/4/25,2/5/25,3/0/25,3/1/25,3/2/28,3/3/80,3/5/44,3/6/41,3/8/41,4/2/25,4/4/80,4/5/25,
  4/7/25,4/9/39,4/10/26,5/2/25,5/3/44,5/4/25,5/5/80,5/6/50,5/8/59,5/10/25,6/3/41,6/5/50,
  6/6/80,6/8/76,7/4/25,7/7/80,7/9/25,7/11/25,8/3/41,8/5/59,8/6/76,8/8/80,9/4/39,9/7/25,
  9/9/80,9/10/47,9/11/25,10/4/26,10/5/25,10/9/47,10/10/80,11/7/25,11/9/25,11/11/80,11/13/25,11/15/25,
  12/12/80,12/14/32,12/16/28,12/17/25,12/18/27,13/11/25,13/13/80,13/15/71,13/16/25,13/17/38,13/18/25,13/19/31,
  13/20/26,14/12/32,14/14/80,14/16/25,14/18/25,15/11/25,15/13/71,15/15/80,15/16/25,15/17/43,15/18/25,15/19/41,
  15/20/30,16/12/28,16/13/25,16/14/25,16/15/25,16/16/80,16/17/39,16/18/67,17/12/25,17/13/38,17/15/43,17/16/39,
  17/17/80,17/18/30,17/19/30,17/20/27,18/12/27,18/13/25,18/14/25,18/15/25,18/16/67,18/17/30,18/18/80,18/21/25,
  19/13/31,19/15/41,19/17/30,19/19/80,19/20/70,20/13/26,20/15/30,20/17/27,20/19/70,20/20/80,21/18/25,21/21/80,
  21/22/67,21/23/29,22/21/67,22/22/80,22/23/32,23/21/29,23/22/32,23/23/80}
  \fill[blue!\t!white] ({3.55+\j*0.150},{4.35-(\i+1)*0.150}) rectangle +(0.150,0.150);
\draw[gray] (3.55,0.75) rectangle (7.15,4.35);
\node[above] at (5.35,4.37) {$K_{UU}^{\theta}=\bigl[\lambda(\ba_i,\ba_{i'})\bigr]+\theta I$};
\foreach \i/\t in {4/26,5/20,6/20,7/20,8/20,9/51,10/95,12/33,14/21,15/20,16/21,17/20,
  18/20}
  \fill[red!\t!white] (7.35,{4.35-(\i+1)*0.150}) rectangle +(0.28,0.150);
\draw[gray] (7.35,0.75) rectangle (7.63,4.35);
\node[above] at (7.49,4.37) {$\ell(\ba_*)$};
\node at (7.93,2.55) {$=$};
\foreach \i/\t in {9/21,10/95,12/36}
  \fill[orange!\t!white] (8.23,{4.35-(\i+1)*0.150}) rectangle +(0.28,0.150);
\draw[gray] (8.23,0.75) rectangle (8.51,4.35);
\node[above] at (8.52,4.37) {$k_U(\ba_*)$};
\node[font=\footnotesize,align=center] at (5.35,0.25) {one sparse Cholesky solve\\per new input};
\draw[->,gray!70!black] (2.9,3.13) -- (3.47,3.13);
\draw[->,orange!80!black] (2.6,-0.15) -- (8.37,-0.15) -- (8.37,0.69);
\node[font=\small\bfseries] at (12.70,6.05) {Local output reconstruction, $\chi$};
\draw[gray] (10.00,0.60) rectangle (14.00,4.60);
\begin{scope}[shift={(10.00,0.60)},x=4.00cm,y=4.00cm]
\foreach \x/\y in {0.070/0.028,0.157/0.026,0.426/0.062,0.474/0.038,0.567/0.028,0.665/0.066,0.751/0.034,0.863/0.066,0.982/0.058,0.029/0.139,
  0.148/0.110,0.503/0.119,0.591/0.115,0.681/0.138,0.789/0.130,0.836/0.138,0.937/0.150,0.039/0.220,0.160/0.221,0.504/0.241,
  0.617/0.247,0.696/0.245,0.791/0.213,0.863/0.218,0.941/0.231,0.035/0.325,0.141/0.295,0.224/0.312,0.390/0.319,0.494/0.328,
  0.581/0.301,0.752/0.305,0.854/0.302,0.957/0.344,0.024/0.404,0.146/0.431,0.256/0.388,0.320/0.431,0.383/0.414,0.481/0.425,
  0.949/0.395,0.057/0.489,0.129/0.515,0.227/0.517,0.319/0.480,0.406/0.522,0.504/0.527,0.886/0.511,0.957/0.504,0.055/0.581,
  0.146/0.598,0.244/0.590,0.291/0.593,0.430/0.599,0.508/0.585,0.583/0.604,0.795/0.618,0.842/0.569,0.977/0.585,0.032/0.675,
  0.146/0.662,0.427/0.678,0.496/0.685,0.578/0.704,0.665/0.684,0.753/0.689,0.887/0.704,0.933/0.689,0.068/0.755,0.505/0.784,
  0.592/0.763,0.678/0.797,0.755/0.752,0.835/0.793,0.983/0.757,0.032/0.846,0.427/0.881,0.524/0.864,0.597/0.867,0.664/0.864,
  0.798/0.887,0.836/0.873,0.944/0.842,0.024/0.972,0.163/0.956,0.220/0.955,0.337/0.969,0.384/0.938,0.521/0.954,0.600/0.971,
  0.690/0.968,0.775/0.939,0.845/0.983,0.926/0.966}
  \fill[gray!60] (\x,\y) circle (1.2pt);
\draw[teal!70!black,fill=teal!70!black,fill opacity=0.08] (0.241,0.770) circle (0.76cm);
\foreach \x/\y in {0.326/0.783,0.246/0.847,0.227/0.691,0.340/0.862,0.401/0.782,0.131/0.785,0.331/0.653,0.142/0.863} \fill[teal!70!black] (\x,\y) circle (1.5pt);
\fill[teal!70!black] (0.241,0.770) circle (2.2pt);
\draw[violet!80!black,fill=violet!80!black,fill opacity=0.08] (0.697,0.511) circle (0.95cm);
\foreach \x/\y in {0.768/0.429,0.776/0.510,0.653/0.384,0.616/0.411,0.562/0.491,0.657/0.612,0.848/0.419,0.707/0.303} \fill[violet!80!black] (\x,\y) circle (1.5pt);
\fill[violet!80!black] (0.697,0.511) circle (2.2pt);
\draw[green!50!black,fill=green!50!black,fill opacity=0.08] (0.323,0.146) circle (0.71cm);
\foreach \x/\y in {0.317/0.204,0.324/0.073,0.419/0.138,0.238/0.129,0.412/0.252,0.218/0.218,0.254/0.053,0.330/0.294} \fill[green!50!black] (\x,\y) circle (1.5pt);
\fill[green!50!black] (0.323,0.146) circle (2.2pt);
\end{scope}
\foreach \t/\s in {0/74,1/43,2/34,3/51,4/47,5/54,6/52}
  \fill[teal!70!black!\s!white,draw=teal!70!black] ({14.45+\t*0.17},3.59) rectangle +(0.17,0.17);
\draw[teal!70!black,densely dotted] (10.96,3.68) -- (14.42,3.68);
\node[above right,font=\footnotesize,inner sep=1pt] at (14.40,3.79) {$\bb_*^{j_1}=B_{j_1}^{\mathsf T}\ell(\ba_*)$};
\foreach \t/\s in {0/77,1/37,2/77,3/34,4/77,5/47,6/66}
  \fill[violet!80!black!\s!white,draw=violet!80!black] ({14.45+\t*0.17},2.56) rectangle +(0.17,0.17);
\draw[violet!80!black,densely dotted] (12.79,2.64) -- (14.42,2.64);
\node[above right,font=\footnotesize,inner sep=1pt] at (14.40,2.76) {$\bb_*^{j_2}=B_{j_2}^{\mathsf T}\ell(\ba_*)$};
\foreach \t/\s in {0/32,1/53,2/64,3/62,4/70,5/39,6/51}
  \fill[green!50!black!\s!white,draw=green!50!black] ({14.45+\t*0.17},1.10) rectangle +(0.17,0.17);
\draw[green!50!black,densely dotted] (11.29,1.18) -- (14.42,1.18);
\node[above right,font=\footnotesize,inner sep=1pt] at (14.40,1.30) {$\bb_*^{j_3}=B_{j_3}^{\mathsf T}\ell(\ba_*)$};
\node[font=\footnotesize,align=center] at (12.00,0.05) {every stencil $S_j$, $j=1,\ldots,m$, has its own\\coefficients, all from the same $\ell(\ba_*)$};
\draw[->,red!70!black,thick] (8.61,2.55) -- (9.90,2.55) node[midway,above] {$\ell(\ba_*)$};
\end{tikzpicture}

%% file: sections/methods.tex
\section{Local property-preserving output reconstruction}
\label{sec:methods}
We now construct the output reconstruction map $\chi$ of~\eqref{eq:sappkm_decomposition}. In PPKM~\citep{sharma2026fluids}, $\chi$ is a single divergence-free kernel interpolant on all $m$ output points, so its linear system grows with the output resolution. We replace it by local interpolants, one on the stencil of each output point. Each local interpolant combines a divergence-free kernel with divergence-free polynomials, so that it is divergence free for any coefficient vector and reproduces low-degree divergence-free fields. Its system has a fixed size as $m$ grows. We first describe the kernel, the polynomials, and the local system, and then the extensions to turbulent fields, periodic domains, and spacetime outputs. Nothing in this section depends on the operator regression of Section~\ref{sec:operator}.

\subsection{Divergence-free interpolation on stencils}
\label{sec:local_dfphs}
\subsubsection{The divergence-free kernel}
The output points $Y$ serve as the nodes of the output reconstruction. For each node $y_j\in Y$, let the stencil $S_j=\{\tilde y_{j,1},\ldots,\tilde y_{j,n_s}\}$ consist of its $n_s$ nearest nodes. In the tests of Section~\ref{sec:spatial-convergence} on bounded domains, we instead use the boundary-aware stencils of KernelPack-JAX~\citep{cattaneo2026kernelpack}, which keep only nodes visible from the center inside the domain. In periodic domains, some of these are translated images of nodes (Section~\ref{sec:periodic_stencils}). For smooth outputs, we use the cubic PHS $\phi(x,y) = \phi_3(\|x - y\|_2)$, with radial function
\begin{align}
\phi_3(r) = r^3,
\label{eq:phs_scalar}
\end{align}
which has no shape parameter to choose (Section~\ref{sec:local_multiscale} relates it to the Mat\'ern kernels that we use when a length scale is wanted). For the sixth-order tests of Section~\ref{sec:spatial-convergence}, we use $\phi(x,y)=\|x-y\|_2^5$ in the same way. Applying the operator $-\Delta I+\nabla\nabla^{\mathsf T}$ to a scalar kernel gives a matrix-valued kernel whose columns are divergence free~\citep{narcowich1994generalized,fuselier2016high},
\begin{align}
\Phi_{\rm DF}(x,y)=\left(-\Delta_x I+\nabla_x\nabla_x^{\mathsf T}\right)\phi(x,y). \label{eq:dfphs_kernel}
\end{align}
For the cubic PHS, with $z=x-y$ and $r=\|z\|_2>0$,
\begin{equation}
\Phi_{\rm DF}(x,y)=-3dr I+3zz^{\mathsf T}/r,\qquad \Phi_{\rm DF}(y,y)=0.
\label{eq:cubic-df-explicit}
\end{equation}
Each column of $\Phi_{\rm DF}$ has zero divergence in the distributional sense, and zero classical divergence away from its center~\citep{fuselier2016high}. At the center the kernel is continuous but not differentiable. For example, along $z=t\mathbf e_k$, with $\mathbf e_k$ the $k$th coordinate vector, its $k$th diagonal entry is $-3(d-1)|t|$.

\subsubsection{Divergence-free polynomials of any degree}
A PHS interpolant converges at the order set by the degree of its polynomial augmentation~\citep{flyer2016role,bayona2017role}. A divergence-free local interpolant of high order therefore needs divergence-free vector polynomials of any degree $\varpi$, in 2D and 3D, that can be evaluated at any point and not only at the stencil nodes. Our approach is to differentiate scalar potentials. On $S_j$, let $\xi=(y-\bar y_j)/\rho_j$ be centered and scaled coordinates, where $\bar y_j=n_s^{-1}\sum_{l}\tilde y_{j,l}$ is the centroid of the stencil and $\rho_j=\max_l\|\tilde y_{j,l}-\bar y_j\|_2$ its radius. From the products $J_\alpha(\xi)=\prod_kP_{\alpha_k}(\xi_k)$ of normalized Legendre polynomials $P_n$, of total degree $|\alpha|\leq\varpi+1$, we form the rotated gradients in 2D and the curls of the potentials times the coordinate vectors in 3D,
\begin{align}
 g_\alpha(y)&=\nabla_\xi^\perp J_\alpha(\xi), &&d=2,\\
 g_{\alpha,k}(y)&=\nabla_\xi\times\bigl(J_\alpha(\xi)\mathbf e_k\bigr), &&d=3,\quad k=1,2,3,
\label{eq:dfpoly_candidates}
\end{align}
where $\alpha$ is a multi-index, $\nabla_\xi^\perp=(\partial_{\xi_2},-\partial_{\xi_1})^{\mathsf T}$. These candidates have degree at most $\varpi$ and are divergence free in $\xi$, and hence in $y$, since the scaling is uniform. Together, they span the space of divergence-free vector polynomials of degree at most $\varpi$, which has dimension $d\binom{\varpi+d}{d}-\binom{\varpi+d-1}{d}$ (Appendix~\ref{app:polynomial-span}).

The candidates are not independent. Constant potentials give zero fields, different potentials can give the same field, and a particular stencil can add further dependencies. Let $g_1,\ldots,g_{n_{{\rm cand},j}}$ be the candidates, let $G_j(y)=[g_1(y)\ \cdots\ g_{n_{{\rm cand},j}}(y)]\in\mathbb R^{d\times n_{{\rm cand},j}}$ hold their values at $y$, and let $G_j\in\mathbb R^{dn_s\times n_{{\rm cand},j}}$ hold their values at the stencil nodes in component block order. We select an independent subset by column-pivoted QR,
\begin{align}
 G_j\Pi_j=Q_jR_j,
\label{eq:dfpoly_cpqr}
\end{align}
and retain the first $n_{P,j}$ pivoted columns, where $n_{P,j}$ counts the diagonal entries of $R_j$ larger than $10^{-12}$ times the first. Let $I_j$ hold the indices of the retained candidates, $G_{j,I}$ the corresponding columns of $G_j$, $Q_{j,1}$ the first $n_{P,j}$ columns of $Q_j$, and $R_{j,11}\in\mathbb R^{n_{P,j}\times n_{P,j}}$ the leading block of $R_j$. Then
\begin{align}
 G_{j,I}=Q_{j,1}R_{j,11},\qquad
 G_{j,I}R_{j,11}^{-1}=Q_{j,1}.
\label{eq:dfpoly_stencil_basis}
\end{align}
The orthonormal columns of $Q_{j,1}$ are only values at the nodes. To obtain functions, we apply the same triangular transformation to the retained candidate functions $G_{j,I}(y)=[g_c(y)]_{c\in I_j}$,
\begin{align}
 \bP_j(y)=G_{j,I}(y)R_{j,11}^{-1}.
\label{eq:dfpoly_change_basis}
\end{align}
By~\eqref{eq:dfpoly_stencil_basis}, $\bP_j=Q_{j,1}$ at the stencil nodes, so the polynomial block of the local system is well conditioned. Every column of $\bP_j(y)$ is a fixed combination of divergence-free fields, and is thus divergence free everywhere. At a new point, we evaluate the retained candidates and apply $R_{j,11}^{-1}$ by a triangular solve. If the stencil geometry forces the rank below the full dimension, the interpolant reproduces only the retained subspace.

We batch this construction over the stencils. On stencils that are certainly of full rank, pivoting is unnecessary, and we use an unpivoted QR factorization instead (Appendix~\ref{app:local-solves}).

\subsubsection{The local system}
On $S_j$, we determine the kernel and polynomial coefficients together, from the interpolation conditions and the polynomial moment conditions. The saddle-point matrix
\begin{align}
\bA_j
&=
\begin{bmatrix}
\bPhi_j & \bP_j\\
\bP_j^{\mathsf T} & 0
\end{bmatrix}\in\mathbb{R}^{r_j\times r_j},
\label{eq:local_dfphs_matrix}
\end{align}
has size $r_j=dn_s+n_{P,j}$, where $\bPhi_j\in\mathbb{R}^{dn_s\times dn_s}$ is the Gram matrix of the divergence-free kernel on $S_j$. For training field $i$, the coefficient vector
\begin{align*}
    \bb_i^j =
    \begin{bmatrix}
        \bb_{i,\Phi}^j\\
        \bb_{i,P}^j
    \end{bmatrix}
    \in \mathbb{R}^{r_j}
\end{align*}
holds kernel coefficients $\bb_{i,\Phi}^j \in \mathbb{R}^{dn_s}$ and polynomial coefficients $\bb_{i,P}^j \in \mathbb{R}^{n_{P,j}}$, and defines the local interpolant
\begin{align}
\chi_i^j(y)
=\sum_{l=1}^{n_s}\Phi_{\rm DF}(y,\tilde y_{j,l})\bb_{i,\Phi}^{j,l}
+\bP_j(y)\bb_{i,P}^j,
\label{eq:local_dfphs_interpolant}
\end{align}
where $\bb_{i,\Phi}^{j,l}\in\mathbb{R}^{d}$ holds the kernel coefficients of node $l$. Interpolation of the training field at the stencil nodes (with the value of the original node at a periodic image) and orthogonality of the kernel coefficients to the polynomials give
\begin{align}
\bA_j\bb_i^j
=
\begin{bmatrix}
\left.\bv_i\right|_{S_j}\\0
\end{bmatrix},
\label{eq:local_dfphs_system}
\end{align}
where $\left.\bv_i\right|_{S_j}\in\mathbb{R}^{dn_s}$ holds the sampled velocities on the stencil. Since $\bP_j$ has full column rank, $\bA_j$ is nonsingular whenever $\bPhi_j$ is positive definite on the null space of $\bP_j^{\mathsf T}$. Conditional positive definiteness of the scalar PHS does not guarantee this for the matrix kernel on a particular stencil, so we check the solves numerically (Appendix~\ref{app:local-solves}). We factor all local systems by batched LU on the GPU.

\paragraph{Evaluation}
To reconstruct a prediction at query points $y_q^\star\in Y^\star$, $q=1,\ldots,m^\star$, we assign each query point to a stencil $S_{j(q)}$ and evaluate that stencil's basis there. Let $\bPhi_{j(q),q,\mathrm{comp}}^\star\in\mathbb R^{d\times dn_s}$ hold the kernel values in the component block order of the coefficients (Appendix~\ref{app:coefficient-ordering}) and $\bP_{j(q),q}^\star \in \mathbb{R}^{d \times n_{P,j(q)}}$ the polynomial values, and let
\begin{align}
\bA_q^\star =
\begin{bmatrix}
    \bPhi_{j(q),q,\mathrm{comp}}^\star & \bP_{j(q),q}^\star
\end{bmatrix}
\in \mathbb{R}^{d \times r_{j(q)}}.
\label{eq:local_dfphs_evaluation}
\end{align}
For coefficients $\bb_*^{j}$ predicted by the operator regression (Section~\ref{sec:lagrange}), the reconstructed velocity is then
\begin{align}
\tilde{\bv}_q^\star = \bA_q^\star \bb_*^{j(q)}
\in \mathbb{R}^{d},
\label{eq:local_dfphs_prediction}
\end{align}
and $\tilde{\bv}^\star$ stacks the $\tilde{\bv}_q^\star$ in component blocks. In the operator benchmarks, the query points are the nodes themselves, each with its own stencil. Appendix~\ref{app:stencil-assignment} gives the assignment of other points.

The output reconstruction is divergence free on each stencil, for any coefficients. However, the local interpolants are not coupled, so the reconstructed field can jump where the assigned stencil changes, and its divergence is controlled only within each stencil.

On the other hand, every system has the fixed size $r_j$, independent of the number of output points $m$ and of the number of training functions $N$. Refining the output adds systems but does not enlarge them. The systems are also independent of each other, so their assembly, factorization, and evaluation are embarrassingly parallel, and we process them in batches on the GPU with no communication between stencils. In contrast, a global output reconstruction couples all output points in one dense system with $dm$ unknowns, whose factorization costs $O((dm)^3)$ operations and does not decompose into independent parts. Section~\ref{sec:spatial-cost} measures both costs. Section~\ref{sec:commuted-prediction} shows how the same systems serve every prediction.

\subsection{Multiscale Mat\'ern output reconstruction for turbulence}\label{sec:local_multiscale}
Turbulent velocity fields vary over many length scales, which a kernel can represent only if it has length scales of its own. Polyharmonic splines have none, since they are homogeneous (dilating the argument of $r^3$ or $r^5$ multiplies it by a constant, and so does the derived matrix kernel), and sums of dilated copies add nothing. Mat\'ern kernels have one length each, and they are close relatives of the PHS. The Mat\'ern kernels $M_{3/2}(r/\sigma)$ and $M_{5/2}(r/\sigma)$ of length $\sigma$, with $M_{3/2}(r)=(1+r)e^{-r}$ and $M_{5/2}$ given in~\eqref{eq:local_matern_scalar} below, have Fourier transforms proportional to $(\sigma^{-2}+\|\mathbf k\|_2^2)^{-(3+d)/2}$ and $(\sigma^{-2}+\|\mathbf k\|_2^2)^{-(5+d)/2}$ in the wave vector $\mathbf k$. They are therefore, up to constants, the Green's functions of $(\sigma^{-2}-\Delta)^{(3+d)/2}$ and $(\sigma^{-2}-\Delta)^{(5+d)/2}$ in $\mathbb R^d$~\citep{whittle1954stationary,stein1999interpolation,lindgren2011explicit,fasshauer2011reproducing}. Letting $\sigma\to\infty$ removes the length and leaves $(-\Delta)^{(3+d)/2}$ and $(-\Delta)^{(5+d)/2}$, whose fundamental solutions are multiples of $r^3$ and $r^5$~\citep{duchon1977splines,Wendland2004}. The two Mat\'ern kernels are thus the counterparts, with a length, of the PHS $r^3$ and $r^5$. For example,
\begin{align}
M_{5/2}\!\left(\frac{r}{\sigma}\right)=1-\frac{r^2}{6\sigma^2}+\frac{r^4}{24\sigma^4}-\frac{r^5}{45\sigma^5}+O\!\left(\frac{r^6}{\sigma^6}\right).
\label{eq:matern_phs_limit}
\end{align}
The even terms are polynomials in $x-y$, and after the operator $-\Delta I+\nabla\nabla^{\mathsf T}$ their contribution to an interpolant is a divergence-free polynomial of degree at most two, which the polynomial part of the local interpolant absorbs when $\varpi\ge 2$. Since rescaling the kernel does not change the interpolant, the local interpolant with $M_{5/2}(r/\sigma)$ thus tends to the one with $r^5$ as $\sigma\to\infty$. We use the PHS when no length scale is wanted, for smooth outputs whose accuracy is set by the polynomial degree, and Mat\'ern kernels when lengths are wanted. A mixture of Mat\'ern kernels can have many lengths. This motivates localizing the multiscale Mat\'ern construction of our earlier work~\citep{sharma2026fluids}, which follows~\citet{owhadi2023gaussian}. On $S_j$, we sum five Mat\'ern kernels with dyadic lengths and weights
\begin{align}
\sigma_{j,\kappa}=\frac{\sigma_{j,0}}{2^\kappa},\qquad
\sigma_{j,0}=c_\sigma\rho_j,\qquad
\vartheta_{j,\kappa}=\left(\frac{\sigma_{j,\kappa}}{\sigma_{j,0}}\right)^{8/3},\qquad \kappa=1,\ldots,5,
\label{eq:local_matern_scales}
\end{align}
where $\rho_j$ is the stencil radius and $c_\sigma$ a fixed factor. Each scale uses the scalar Mat\'ern function
\begin{align}
M_{5/2}(r)=\left(1+r+\frac{r^2}{3}\right)e^{-r}
\label{eq:local_matern_scalar}
\end{align}
at the argument $r/\sigma_{j,\kappa}$.
The weights are applied to the scalar kernels, before the divergence-free operator, whose two derivatives account for the $2$ in the exponent $8/3=2+2/3$. The remaining $2/3$ is the Kolmogorov exponent of the velocity increments in three dimensions. The local multiscale kernel is
\begin{align}
\Phi_j^{\rm M}(x,y)=\frac{1}{c_j}\left(-\Delta_x I+\nabla_x\nabla_x^{\mathsf T}\right)
\sum_{\kappa=1}^{5}3\vartheta_{j,\kappa}M_{5/2}\!\left(\frac{\|x-y\|_2}{\sigma_{j,\kappa}}\right),\qquad
c_j=(d-1)\sum_{\kappa=1}^{5}\vartheta_{j,\kappa}\sigma_{j,\kappa}^{-2},
\label{eq:local_multiscale_matern}
\end{align}
normalized so that $\Phi_j^{\rm M}(y,y)=I$. The normalization rescales the coefficients and does not change the interpolant. We replace $\Phi_{\rm DF}$ by $\Phi_j^{\rm M}$ in~\eqref{eq:local_dfphs_matrix} and~\eqref{eq:local_dfphs_evaluation} and keep the polynomials, so each stencil still solves one system with $r_j$ unknowns for the whole mixture. Unlike the cubic kernel, the mixture is smooth at its centers, so the reconstructed fields are classically divergence free everywhere on the stencil.

The weights build the prescribed increment statistics into the kernel~\citep{owhadi2023gaussian}. Whether an interpolant on finite stencils reproduces those statistics also depends on the stencil extent and the node spacing, and we test this under refinement in Section~\ref{sec:spatial-convergence} and for the turbulent benchmarks in Appendix~\ref{app:multiscale-spatial}.

\subsection{Periodic stencils}
\label{sec:periodic_stencils}
In a periodic domain, the nearest neighbors of a node near one face of the box can lie near the opposite face. Our stencils are therefore built from translated images of the nodes. If coordinate $k$ has period $\Lambda_k$, a node $y_l$ has images $y_l+\sum_k\nu_k\Lambda_k\mathbf e_k$, with integers $\nu_k$ over the periodic coordinates, which carry its index and its sampled velocity,
\begin{align}
 v_i\Bigl(y_l+\sum_k\nu_k\Lambda_k\mathbf e_k\Bigr)=v_i(y_l).
\label{eq:periodic_image_values}
\end{align}
For a stencil center $y_j$, we take the displacements $\delta_l=y_l-y_j$ and shift every periodic component to its nearest representative,
\begin{equation}
\delta_{l,k}\leftarrow\delta_{l,k}-\Lambda_k\operatorname{round}(\delta_{l,k}/\Lambda_k).
\label{eq:periodic-displacement}
\end{equation}
Since all periodic coordinates are shifted together, this also finds the right images near the edges and corners of the box, where a node's neighbors are shifted in two or three coordinates (Figure~\ref{fig:periodic}).
\begin{figure}[tbp]
\centering
\resizebox{0.5\textwidth}{!}{\input{sections/fig_periodic}}
\caption{Periodic stencils in a box with periods $\Lambda_1$ and $\Lambda_2$. Gray dots are nodes (dark) and their translated images (light). A stencil near a face (blue) and a stencil near a corner (red) are built from the nearest images given by~\eqref{eq:periodic-displacement}. Dashed arrows link each image in a stencil to its node (circled), whose sampled velocity it carries.}
\label{fig:periodic}
\end{figure} The stencil consists of the $n_s$ nearest of the points $y_j+\delta_l$, with one representative of each node, and the kernel and polynomials are evaluated in these translated coordinates. Translation does not change the divergence. Appendix~\ref{app:periodic-correction} gives the periods used for each benchmark.

\subsection{Spacetime outputs}\label{sec:spacetime}
For outputs on $\Omega_v=\Omega\times\Gamma$, with $\Gamma$ a time interval, PPKM~\citep{sharma2026fluids} used a global product kernel
\begin{align}
\Phi_{\rm ST}((y,t),(y',t'))=\Phi_{\rm DF}^{\rm Mat}(y,y')\Phi_t(t,t'),
\label{eq:spacetime_product_kernel}
\end{align}
with a divergence-free Mat\'ern kernel in space and a scalar Mat\'ern kernel $\Phi_t$ in time, and $dmN_t$ coefficients in one system. We instead keep each stencil and its spatial basis fixed over the sampled times $\mathcal T=\{t_s\}_{s=1}^{N_t}$ and interpolate in time. Let $\bb_i^{j,s}\in\mathbb R^{r_j}$ be the coefficients of training field $i$ on $S_j$ at time $t_s$. Every snapshot solves~\eqref{eq:local_dfphs_system} with the same matrix $\bA_j$, and the target of the operator regression is the stacked vector
\begin{align}
\bb_{i,\mathcal T}^j=[(\bb_i^{j,1})^{\mathsf T},\ldots,(\bb_i^{j,N_t})^{\mathsf T}]^{\mathsf T}\in\mathbb R^{N_t r_j}.
\label{eq:spacetime_coefficient_snapshots}
\end{align}
Between the sampled times, we interpolate with the Lagrange polynomials
\begin{align}
\omega_s(t)=\prod_{s'\ne s}\frac{t-t_{s'}}{t_s-t_{s'}},\qquad s=1,\ldots,N_t,
\label{eq:temporal_polynomial_weights}
\end{align}
which need no temporal kernel parameter and no further linear system. With $\bA_j^\star(y)$ the evaluation of the spatial basis on $S_j$ as in~\eqref{eq:local_dfphs_evaluation}, the spacetime output reconstruction of a prediction is
\begin{align}
\chi_*^j(y,t)=\sum_{s=1}^{N_t}\omega_s(t)\bA_j^\star(y)\bb_*^{j,s}.
\label{eq:local_spacetime_interpolant}
\end{align}
Its polynomial part is the tensor product of the retained spatial polynomials with polynomials of degree at most $N_t-1$ in time. Since the weights $\omega_s$ do not depend on $y$, the spacetime output reconstruction is divergence free wherever the spatial one is. The local representation has $N_t\sum_j r_j$ coefficients, but each system keeps the size $r_j$. Another sampled time adds only a right-hand side.

%% file: sections/fig_periodic.tex
\begin{tikzpicture}[x=5cm,y=5cm,font=\small]
\clip (-0.40,-0.34) rectangle (1.36,1.36);
\foreach \a/\b in {-1/-1,-1/0,-1/1,0/-1,0/1,1/-1,1/0,1/1}
  \draw[gray!45,dashed] (\a,\b) rectangle +(1,1);
\draw[thick] (0,0) rectangle (1,1);
\def\nodes{0.073/0.094,0.210/0.041,0.297/0.092,0.398/0.088,0.586/0.060,0.672/0.045,0.793/0.058,0.938/0.067,
  0.102/0.211,0.197/0.227,0.290/0.160,0.447/0.151,0.525/0.189,0.685/0.221,0.823/0.189,0.937/0.167,
  0.023/0.288,0.203/0.289,0.302/0.273,0.464/0.285,0.544/0.343,0.688/0.340,0.824/0.332,0.905/0.316,
  0.063/0.467,0.176/0.445,0.277/0.429,0.423/0.410,0.588/0.428,0.726/0.445,0.821/0.449,0.952/0.410,
  0.058/0.542,0.180/0.530,0.350/0.540,0.451/0.547,0.592/0.575,0.658/0.590,0.848/0.595,0.943/0.534,
  0.038/0.722,0.192/0.662,0.343/0.699,0.443/0.678,0.555/0.667,0.651/0.718,0.810/0.691,0.923/0.708,
  0.025/0.802,0.150/0.782,0.350/0.825,0.432/0.814,0.592/0.800,0.695/0.827,0.801/0.814,0.959/0.845,
  0.035/0.972,0.148/0.958,0.337/0.908,0.431/0.963,0.524/0.948,0.711/0.939,0.831/0.916,0.913/0.927}
\foreach \a in {-1,0,1} \foreach \b in {-1,0,1} \foreach \x/\y in \nodes
  \fill[gray!30] ({\x+\a},{\y+\b}) circle (1.3pt);
\foreach \x/\y in \nodes \fill[gray!70] (\x,\y) circle (1.3pt);
\draw[red!80!black,fill=red!80!black,fill opacity=0.07] (0.913,0.927) circle (1.36cm);
\foreach \x/\y in {0.913/0.927,0.831/0.916,0.959/0.845,1.035/0.972,0.938/1.067,0.801/0.814,1.025/0.802,0.793/1.058,0.711/0.939,0.923/0.708,1.073/1.094,1.148/0.958} \fill[red!80!black] (\x,\y) circle (1.7pt);
\foreach \x/\y/\u/\v in {1.035/0.972/0.035/0.972,0.938/1.067/0.938/0.067,1.025/0.802/0.025/0.802,0.793/1.058/0.793/0.058,
  1.073/1.094/0.073/0.094,1.148/0.958/0.148/0.958} {
  \draw[red,densely dashed,->,opacity=0.6,shorten >=2.5pt] (\x,\y) -- (\u,\v);
  \draw[red!80!black,thick] (\u,\v) circle (2.6pt);}
\fill[red!80!black] (0.913,0.927) circle (2.6pt);
\draw[blue!80!black,fill=blue!80!black,fill opacity=0.07] (0.063,0.467) circle (1.39cm);
\foreach \x/\y in {0.063/0.467,0.058/0.542,0.176/0.445,-0.048/0.410,0.180/0.530,-0.057/0.534,0.023/0.288,0.277/0.429,-0.095/0.316,0.203/0.289,0.192/0.662,-0.179/0.449} \fill[blue!80!black] (\x,\y) circle (1.7pt);
\foreach \x/\y/\u/\v in {-0.048/0.410/0.952/0.410,-0.057/0.534/0.943/0.534,-0.095/0.316/0.905/0.316,-0.179/0.449/0.821/0.449} {
  \draw[blue,densely dashed,->,opacity=0.6,shorten >=2.5pt] (\x,\y) -- (\u,\v);
  \draw[blue!80!black,thick] (\u,\v) circle (2.6pt);}
\fill[blue!80!black] (0.063,0.467) circle (2.6pt);
\draw[<->] (0,-0.07) -- (1,-0.07) node[midway,below] {$\Lambda_1$};
\draw[<->] (-0.27,0) -- (-0.27,1) node[midway,left] {$\Lambda_2$};
\end{tikzpicture}

%% file: sections/operator.tex
\section{Sparse operator regression}\label{sec:operator}
Having fixed the output representation, we now learn its coefficients as functions of the input. Each output coefficient is a function on the space of sampled inputs, and we approximate all of them by kernel regression with the same kernel. PPKM~\citep{sharma2026fluids} used a kernel with global support, whose $N\times N$ kernel matrix is dense, so that its Cholesky factorization costs $O(N^3)$ operations and $O(N^2)$ storage for $N$ training functions. We now construct a compactly supported kernel that is positive semidefinite on inputs of any dimension, so that the kernel matrix is sparse. The rest of the section defines the operator interpolant built from this kernel and its affine augmentation, derives their linear systems, factors them, and chooses the kernel parameters.

\subsection{A compactly supported kernel in any input dimension}
\label{sec:sparse_wendland}
To make the kernel matrix sparse, the kernel must vanish for inputs that are far apart, and it must remain positive definite on the space of sampled inputs, whose dimension $nd_a$ can be in the thousands. These two requirements conflict for radial kernels. By Schoenberg's theorem, a radial function that is positive definite in every dimension is a scale mixture of Gaussians and therefore never compactly supported~\citep{Wendland2004}. Every compactly supported radial kernel~\citep{wendland1995piecewise,gneiting2002compactly} is thus positive definite only up to some dimension $D_{\max}$. For the Wendland functions, the polynomial degree grows linearly with $D_{\max}$ at fixed smoothness~\citep{wendland1995piecewise}, so a function valid in thousands of dimensions would have a degree in the thousands. Tapering~\citep{furrer2006covariance}, which multiplies a globally supported kernel by a compactly supported one, inherits the same restriction from its taper. Low numerical rank of the training inputs does not remove the restriction either, since a new input need not lie in their span.

Our approach is to separate the two requirements. We let a low-dimensional projection of the input determine the support, and use a compactly supported radial function only there, in a dimension in which it is positive definite. The remaining coordinates enter through a Mat\'ern factor, which is positive definite in every dimension but not compactly supported. The product is positive semidefinite in any input dimension and has the support of its first factor. In this paper, we use the $C^4$ Wendland function
\begin{equation}
w(t)=(1-t)_+^6\left(1+6t+\frac{35}{3}t^2\right),
\label{eq:wendland}
\end{equation}
which is positive definite in up to three dimensions~\citep{wendland1995piecewise}, so we project onto at most three coordinates. Another member of the Wendland family would allow a projection of higher dimension.

Let $\bar\ba$ be the sampled input after centering and scaling each coordinate by its training mean and standard deviation (coordinates with zero training variance are set to zero). Let the columns of $V_z$ be the first $n_z$ principal directions of the standardized training inputs, where $n_z$ is the smaller of three and their numerical rank (Appendix~\ref{app:residual-gate}), and split
\begin{equation}
z(\ba)=V_z^{\mathsf T}\bar\ba,\qquad
\zeta(\ba)=(I-V_zV_z^{\mathsf T})\bar\ba.
\label{eq:operator-projection}
\end{equation}
For a support radius $\varrho>0$, a residual scale $\alpha_\perp>0$, and a positive diagonal matrix $D_\parallel$, our kernel is
\begin{align}
\lambda(\ba,\ba')={}&\eta(\ba)\eta(\ba')\,
w\!\left(\frac{\|D_\parallel(z(\ba)-z(\ba'))\|_2}{\varrho}\right)
M_{5/2}\!\left(\frac{\alpha_\perp\|\zeta(\ba)-\zeta(\ba')\|_2}{\delta}\right).
\label{eq:dimension-safe-kernel}
\end{align}
Here $M_{5/2}$ is the Mat\'ern function~\eqref{eq:local_matern_scalar}, $\delta>0$ is a residual length, and $\eta$ is a gate. The Wendland factor sets the support, since two inputs interact only if their principal coordinates are closer than $\varrho$ in the metric $D_\parallel$. The Mat\'ern factor separates inputs with the same principal coordinates. Note that the kernel is not radial in the full input.

The gate switches the kernel off for inputs whose residual $\zeta(\ba)$ is much larger than those of the training inputs. With $\zeta_i=\zeta(\ba_i)$, we set
\begin{equation}
\delta=\max\left\{0.01\max_i\|\bar\ba_i\|_2,\ 2.2\max_i\|\zeta_i\|_2,\ 10^{-8}\right\},
\label{eq:residual-length}
\end{equation}
and, with the polynomial $H(t)=126t^5-420t^6+540t^7-315t^8+70t^9$, which increases from $H(0)=0$ to $H(1)=1$ since $H'(t)=630t^4(1-t)^4$,
\begin{equation}
\eta(\ba)=\hat\eta(\|\zeta(\ba)\|_2/\delta),\qquad
\hat\eta(r)=\begin{cases}
1,&0\leq r\leq1/2,\\
H(2(1-r)),&1/2<r<1,\\
0,&r\geq1.
\end{cases}
\label{eq:residual-gate}
\end{equation}
The gate is $C^4$ and positive for $\|\zeta(\ba)\|_2<\delta$. Since $\|\zeta_i\|_2\leq\delta/2.2$, it equals one on every training input. A new input need not lie in the span of the principal directions $V_z$, and its gate decreases to zero as its residual grows.

\begin{proposition}[Positive definiteness in any input dimension]
\label{prop:dimension-safe-pd}
For any finite sampled-input dimension, the kernel~\eqref{eq:dimension-safe-kernel} is positive semidefinite. Its Gram matrix is strictly positive definite on distinct standardized inputs with positive gates. Consequently, adding a positive multiple of the identity gives a positive-definite kernel matrix, including when sampled inputs coincide or gates vanish.
\end{proposition}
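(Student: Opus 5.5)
The kernel~\eqref{eq:dimension-safe-kernel} is a product of three factors, and I will treat each factor separately and then combine them with the Schur product theorem. Write $\lambda(\ba,\ba')=\eta(\ba)\eta(\ba')\,k_1(\ba,\ba')\,k_2(\ba,\ba')$, where
\[
k_1(\ba,\ba')=w\bigl(\|D_\parallel(z(\ba)-z(\ba'))\|_2/\varrho\bigr),\qquad
k_2(\ba,\ba')=M_{5/2}\bigl(\alpha_\perp\|\zeta(\ba)-\zeta(\ba')\|_2/\delta\bigr).
\]

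The factor $k_1$ is the composition of the feature map $\ba\mapsto D_\parallel z(\ba)/\varrho\in\mathbb R^{n_z}$, with $n_z\le3$, and the radial function $w(\|\cdot\|_2)$. Since $w$ is positive definite on $\mathbb R^3$, it is also positive definite on $\mathbb R^{n_z}$. Composition with any map preserves positive semidefiniteness, so $k_1$ is positive semidefinite. Similarly, $M_{5/2}$ is positive definite in every dimension: its Fourier transform is positive on $\mathbb R^{D}$ for every $D$, and equivalently it is a scale mixture of Gaussians. Composing with $\ba\mapsto\alpha_\perp\zeta(\ba)/\delta$, which lies in a space of finite dimension $nd_a$, therefore gives a positive semidefinite $k_2$. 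By Schur, the Gram matrix of $k_1k_2$ is positive semidefinite. Multiplying by the rank-one factor $\eta(\ba)\eta(\ba')$ corresponds to the congruence $\mathrm{diag}(\eta)\,K\,\mathrm{diag}(\eta)$, which preserves semidefiniteness. This proves the first claim.

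For strict definiteness, take distinct standardized inputs $\bar\ba_1,\dots,\bar\ba_N$ with $\eta(\ba_i)>0$ for all $i$. The diagonal congruence is then invertible, so it suffices to show that $K_1\circ K_2$ is positive definite. The main obstacle is that neither factor need be strictly positive definite on its own. Two inputs can share principal coordinates, so the $z_i$ may coincide and $K_1$ can be singular. Two inputs can also share residuals, so the $\zeta_i$ may coincide and $K_2$ can be singular.

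I will use the refined Schur theorem: if $A$ is positive semidefinite with positive diagonal and $B$ is positive definite, then $A\circ B$ is positive definite. This holds because $x^{\mathsf T}(A\circ B)x=\operatorname{tr}(A\,\mathrm{diag}(x)\,B\,\mathrm{diag}(x))$, and this trace is at least $\lambda_{\min}(B)\sum_i A_{ii}x_i^2>0$. Here $K_1$ has unit diagonal since $w(0)=1$. However, $K_2$ is positive definite only if the $\zeta_i$ are distinct, and distinct $\bar\ba_i$ need not have distinct residuals.

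To handle this, I will group the indices by equal residual $\zeta$. Within a group the $\bar\ba_i$ are distinct and share $\zeta$, so they have distinct $z_i$. Across groups the residuals differ. I will then argue through the Fourier side, working directly with the tensor kernel on $\mathbb R^{n_z}\times\mathbb R^{nd_a}$. The kernel $(z,\zeta)\mapsto w(\|D_\parallel z\|/\varrho)\,M_{5/2}(\alpha_\perp\|\zeta\|/\delta)$ is translation invariant, and its Fourier transform is the product $\hat w_{D}\otimes\hat M$. This product is nonnegative and positive on a set of positive measure: $\hat M>0$ everywhere, and $\hat w$ is nonnegative and not identically zero. By the standard Bochner argument, such a kernel is strictly positive definite on distinct points. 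The points $(z_i,\zeta_i)$ are distinct because $\bar\ba_i=V_zz_i+\zeta_i$ is determined by them.

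The final claim follows directly. Since the Gram matrix is positive semidefinite, adding $\theta I$ with $\theta>0$ makes it positive definite, with no distinctness or gate hypothesis needed.
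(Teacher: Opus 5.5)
Your proof is correct. The semidefiniteness part and the final regularization step match the paper's. For strict definiteness you take a different route.

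The paper's argument is elementary. It groups the inputs by their distinct principal coordinates $z^{(\iota)}$ and writes the quadratic form as $\sum_{\iota,\iota'}W_{\iota\iota'}\langle f_\iota,f_{\iota'}\rangle$. Here $W$ is the positive-definite Wendland Gram matrix on the distinct $z^{(\iota)}$, and $f_\iota$ is the gate-weighted sum of Mat\'ern feature vectors in group $\iota$. It bounds the form below by $\lambda_{\min}(W)\sum_\iota\|f_\iota\|^2$. Inputs in one group differ in their residuals, so strict definiteness of the Mat\'ern kernel gives $f_\iota\ne0$. This is the general fact that a product of strictly positive-definite kernels on separate coordinates is strictly positive definite on the product. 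It uses only each factor's strict definiteness on its own space, with no stationarity, integrability, or Fourier analysis.

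You instead treat the product as one translation-invariant function on $\mathbb R^{n_z}\times\mathbb R^{nd_a}$. You then apply the Bochner-type criterion: a continuous $L_1$ function whose Fourier transform is nonnegative and not identically zero is strictly positive definite. This is shorter, and the injectivity of $\bar\ba\mapsto(z,\zeta)$ becomes the only combinatorial input. It relies on both factors being stationary and integrable, which you correctly check.

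Two parts of your write-up are never used. The grouping of indices by equal residual is announced but plays no role once you pass to the Fourier side, and the refined Schur lemma is also unused. You could drop both. Alternatively, you could finish the grouping argument with the Mat\'ern Gram matrix on distinct residuals as the outer positive-definite matrix and Wendland features within each group. That is essentially the paper's proof with the roles of the two factors swapped.
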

\begin{proof}[Proof sketch]
Both factors are positive definite functions composed with linear maps, and products and gate multiplications preserve positive semidefiniteness. Distinct inputs differ in their principal or their residual coordinates. Appendix~\ref{app:dimension-safe-proof} gives the proof.
\end{proof}
The proposition holds for any orthogonal projection, so it needs no estimate of an intrinsic dimension. The principal components only choose the directions along which the support is cut off.

\paragraph{Fixed and fitted shapes}
We compare two choices of the shape parameters. The fixed-shape kernel uses $D_\parallel=I$ and $\alpha_\perp=1$. The fitted-shape kernel also fits $\alpha_\perp$ and $D_\parallel=\operatorname{diag}(e^{\upsilon_1},\ldots,e^{\upsilon_{n_z}})$, with $\sum_{k=1}^{n_z} \upsilon_k=0$ so that $\varrho$ alone sets the overall support and the $\upsilon_k$ only redistribute it among the principal directions. Note that when $n_z=1$ the constraint forces $D_\parallel=[1]$, and when the residual coordinates are negligible we also keep $\alpha_\perp=1$. The fitted-shape kernel then has the shape of the fixed-shape one and differs from it only through its radius and regularization (Section~\ref{sec:protocol} lists these cases).

\subsection{The operator interpolant}
\label{sec:operator-interpolant}
We now use the kernel $\lambda$ to approximate the map from a sampled input to the local coefficients of its output. Let $\bb_i\in\mathbb R^{m_{\rm rhs}}$ stack the local coefficients $\bb_i^j$ of training field $i$ on all stencils (and at all sampled times), where $m_{\rm rhs}=\sum_jr_j$ (times $N_t$ for spacetime outputs). Every one of these coefficients is a function of the same input, so we approximate all of them with the same scalar kernel, i.e., with the matrix-valued kernel $\lambda(\ba,\ba')I$. The operator regression $\ftilde$ of~\eqref{eq:sappkm_decomposition} is the operator interpolant
\begin{equation}
\ftilde(\ba)=\sum_{i=1}^N\lambda(\ba,\ba_i)\bc_i,
\label{eq:operator-interpolant-plain}
\end{equation}
with coefficients $\bc_i\in\mathbb R^{m_{\rm rhs}}$ determined by the regularized interpolation conditions
\begin{equation}
\ftilde(\ba_i)+\theta\bc_i=\bb_i,\qquad i=1,\ldots,N,
\label{eq:operator-interpolant-plain-conditions}
\end{equation}
where $\theta>0$ is a regularization parameter. In the limit $\theta\to0$, $\ftilde$ interpolates the training coefficients, and for $\theta>0$ it is their kernel ridge regression. Since $\chi$ is linear, the learned operator~\eqref{eq:sappkm_decomposition} is
\begin{equation}
\Gtilde(a)=\sum_{i=1}^N\lambda(\ba,\ba_i)\chi(\bc_i),
\label{eq:operator-continuous-plain}
\end{equation}
a combination of $N$ fixed fields with weights that depend on the input. Every one of these fields is a local divergence-free output reconstruction, so $\Gtilde(a)$ is divergence free on each stencil for every input, regardless of the accuracy of the operator regression.

\subsection{Affine augmentation}
\label{sec:operator-affine}
Operators often depend nearly linearly on part of their input. A compactly supported kernel regression represents such a trend poorly, since it returns to zero away from the training inputs. As in polynomially augmented interpolation~\citep{belianovich2026unified}, we augment the operator regression with affine functions of up to $15$ principal coordinates of the input (more than the at most three that set the support), which it reproduces exactly, and let the kernel represent the remainder. These polynomials in the input are unrelated to the divergence-free polynomials in space of Section~\ref{sec:methods}, which are part of the output representation in every model.

Let the columns of $V_p$ be up to $15$ principal directions of the training inputs, let $\varsigma_k$ be the training standard deviation along direction $k$, and set
\begin{equation}
p(\ba)=\left(1,\frac{(V_p^{\mathsf T}\bar\ba)_1}{\varsigma_1},\ldots,
\frac{(V_p^{\mathsf T}\bar\ba)_{n_{\rm aff}-1}}{\varsigma_{n_{\rm aff}-1}}\right)^{\mathsf T},
\label{eq:operator-mean}
\end{equation}
with $n_{\rm aff}-1\leq15$ limited by the rank of the training inputs. The augmented operator interpolant is
\begin{equation}
\ftilde(\ba)=\sum_{i=1}^N\lambda(\ba,\ba_i)\bc_i+\boldsymbol\beta^{\mathsf T}p(\ba),
\label{eq:operator-interpolant}
\end{equation}
with kernel coefficients $\bc_i\in\mathbb R^{m_{\rm rhs}}$ and affine coefficients $\boldsymbol\beta\in\mathbb R^{n_{\rm aff}\times m_{\rm rhs}}$. Its coefficients are determined by the regularized interpolation conditions of~\eqref{eq:operator-interpolant-plain-conditions}, now for the augmented $\ftilde$, together with moment conditions on the kernel coefficients,
\begin{equation}
\ftilde(\ba_i)+\theta\bc_i=\bb_i,\quad i=1,\ldots,N,\qquad
\sum_{i=1}^Np(\ba_i)\bc_i^{\mathsf T}=0.
\label{eq:operator-interpolant-conditions}
\end{equation}
The moment conditions make~\eqref{eq:operator-interpolant} reproduce every affine function of the principal coordinates exactly, for any $\theta$. If $\bb_i=T^{\mathsf T}p(\ba_i)$ for some matrix $T$, then $\bc_i=0$ and $\boldsymbol\beta=T$ satisfy~\eqref{eq:operator-interpolant-conditions}. As in~\eqref{eq:operator-continuous-plain}, the augmented learned operator is
\begin{equation}
\Gtilde(a)=\sum_{i=1}^N\lambda(\ba,\ba_i)\chi(\bc_i)+\sum_{k=1}^{n_{\rm aff}}p_k(\ba)\chi(\boldsymbol\beta_{k,:}),
\label{eq:operator-continuous}
\end{equation}
a combination of $N+n_{\rm aff}$ fixed divergence-free fields. Without augmentation, $n_{\rm aff}=0$, the affine terms and the moment conditions disappear, and~\eqref{eq:operator-interpolant}--\eqref{eq:operator-continuous} reduce to~\eqref{eq:operator-interpolant-plain}--\eqref{eq:operator-continuous-plain}.

Both interpolants have a variational characterization. The function $\ftilde=f_{\mathcal H}+\boldsymbol\beta^{\mathsf T}p$ minimizes
\begin{equation}
\sum_{i=1}^N\bigl\|f_{\mathcal H}(\ba_i)+\boldsymbol\beta^{\mathsf T}p(\ba_i)-\bb_i\bigr\|_2^2+\theta\|f_{\mathcal H}\|_{\mathcal H}^2
\label{eq:operator-variational}
\end{equation}
over the affine coefficients $\boldsymbol\beta$ and the functions $f_{\mathcal H}$ in the reproducing kernel Hilbert space $\mathcal H$ of $\lambda(\ba,\ba')I$, since by the representer theorem the minimizing $f_{\mathcal H}$ has the form $\sum_i\lambda(\cdot,\ba_i)\bc_i$, and~\eqref{eq:operator-interpolant-conditions} makes the gradient of the convex objective vanish (Appendix~\ref{app:operator-variational}). Without augmentation, $\boldsymbol\beta$ is absent and~\eqref{eq:operator-variational} is the objective of kernel ridge regression.

\subsection{Linear systems}
\label{sec:operator-linear}
We now write the conditions~\eqref{eq:operator-interpolant-plain-conditions} and~\eqref{eq:operator-interpolant-conditions} as linear systems. For training inputs $U=\{\ba_i\}_{i=1}^N$ and $N_*$ query inputs $U_*=\{\ba_*^{(q)}\}_{q=1}^{N_*}$, we define the kernel matrices
\begin{equation}
(K_{UU})_{ii'}=\lambda(\ba_i,\ba_{i'}),\qquad
(K_{U_*U})_{qi}=\lambda(\ba_*^{(q)},\ba_i),\qquad
K_{UU}^{\theta}=K_{UU}+\theta I,
\label{eq:operator_kernel_matrices}
\end{equation}
with $\theta$ the regularization of~\eqref{eq:operator-interpolant-plain-conditions}, and the affine matrices $P_U$ and $P_{U_*}$ with rows $(P_U)_{i,:}=p(\ba_i)^{\mathsf T}$ and $(P_{U_*})_{q,:}=p(\ba_*^{(q)})^{\mathsf T}$. We keep every nonzero entry of $K_{UU}$, with no neighbors discarded by an approximate search. Its density $\operatorname{nnz}(K_{UU})/N^2$, where $\operatorname{nnz}$ counts the nonzero entries, is thus set by $\varrho$ and the data, and the storage of its Cholesky factor also depends on the fill.

Since the kernel $\lambda(\ba,\ba')I$ does not couple different output coefficients, the conditions decouple into one system per stencil. On stencil $j$, let $B_j\in\mathbb R^{N\times r_j}$ hold the $\bb_i^j$ as rows, and let $C_j\in\mathbb R^{N\times r_j}$ and $\beta_j\in\mathbb R^{n_{\rm aff}\times r_j}$ hold the corresponding entries of the $\bc_i$ and the columns of $\boldsymbol\beta$. Evaluating~\eqref{eq:operator-interpolant} at the training inputs turns~\eqref{eq:operator-interpolant-conditions} into the first equation of
\begin{equation}
\begin{bmatrix}K_{UU}^{\theta}&P_U\\P_U^{\mathsf T}&0\end{bmatrix}
\begin{bmatrix}C_j\\\beta_j\end{bmatrix}
=\begin{bmatrix}B_j\\0\end{bmatrix},\qquad
B_{*,j}=K_{U_*U}C_j+P_{U_*}\beta_j,
\label{eq:operator-augmented-primal}
\end{equation}
and evaluating it at the query inputs gives the second, where row $q$ of $B_{*,j}$ holds the coefficients that $\ftilde$ predicts on stencil $j$ for query input $q$. The block $P_U^{\mathsf T}C_j=0$ is the moment condition. Without augmentation, the same steps applied to~\eqref{eq:operator-interpolant-plain} and~\eqref{eq:operator-interpolant-plain-conditions} give
\begin{equation}
K_{UU}^{\theta}C_j=B_j,\qquad
B_{*,j}=K_{U_*U}C_j.
\label{eq:operator_coefficient_solve}
\end{equation}

\subsection{Sparse Cholesky and whitened QR}
\label{sec:operator-factors}
The system~\eqref{eq:operator-augmented-primal} is an indefinite saddle-point system. Forming the Schur complement $P_U^{\mathsf T}(K_{UU}^{\theta})^{-1}P_U$ squares the condition number of the whitened polynomial columns, as in the normal equations of least squares. We instead use the Cholesky--QR elimination of our unified kernel and polynomial interpolation~\citep{belianovich2026unified}. Since $K_{UU}^{\theta}$ is sparse and positive definite, we factor it by sparse Cholesky, whiten the polynomial columns, and factor them by thin QR. Suppressing the fill-reducing permutation, we obtain
\begin{equation}
K_{UU}^{\theta}=L_KL_K^{\mathsf T},\qquad
L_K^{-1}P_U=Q_PR_P,\qquad
\mathcal E_U=R_P^{-1}Q_P^{\mathsf T}L_K^{-1}.
\label{eq:operator-whitened-qr}
\end{equation}
Here $Q_P$ has orthonormal columns and $R_P$ is a nonsingular $n_{\rm aff}\times n_{\rm aff}$ triangular matrix, and we apply $\mathcal E_U$ by triangular solves without forming $L_K^{-1}$. Block elimination of~\eqref{eq:operator-augmented-primal} then gives
\begin{equation}
\beta_j=\mathcal E_UB_j,\qquad
K_{UU}^{\theta}C_j=B_j-P_U\beta_j.
\label{eq:operator-primal-elimination}
\end{equation}
Without augmentation, $Q_P$, $R_P$, and $\mathcal E_U$ are empty and the correction disappears. The same factors serve the parameter selection below and the prediction of Section~\ref{sec:lagrange}, which never forms the $C_j$.

The size of the Cholesky factor, and the speed of its triangular solves, depend on the ordering of the training inputs. Since two inputs interact only if their principal coordinates are close, the sparsity pattern of $K_{UU}$ resembles that of a discretization on a mesh in at most three dimensions, and we order it by nested dissection. For inputs of numerical rank one ($n_z=1$), sorting the inputs by $z$ even makes $K_{UU}$ a band matrix (the matrix of an interval graph), whose Cholesky factor has no entries outside the band. However, the elimination of a band matrix is a single chain, which leaves the triangular solves of a prediction little parallelism on the GPU, while nested dissection gives a balanced elimination tree. Thus, nested dissection is our default, and the band ordering is preferable when memory is more limited than prediction time (Section~\ref{sec:gpu-sparsity}). The ordering changes neither the operator regression nor its predictions beyond rounding.

\subsection{Choosing the kernel parameters}
\label{sec:kernel-training}
The support radius $\varrho$, the regularization $\theta$, and, for the fitted-shape kernel, the shape parameters remain to be chosen. The support radius is the most important of these, since it trades accuracy against sparsity. We use two criteria in turn. For the fitted-shape kernel, a restricted likelihood criterion fits the shape and the regularization from the training data alone. The error on validation fields then selects the radius and refines the regularization.

Let $F\in\mathbb R^{N\times n_v}$ hold all sampled training outputs, with $n_v=dm$ values per field, or $dmN_t$ for spacetime outputs, after the amplitude normalization~\eqref{eq:tg-response-transform} where it applies, let $Q_0$ be an orthonormal basis of the range of $P_U$, $E_0=(I-Q_0Q_0^{\mathsf T})F$, and $E=\sqrt N E_0/\|E_0\|_F$, and let
\begin{equation}
\Pi_U=(K_{UU}^{\theta})^{-1}
-(K_{UU}^{\theta})^{-1}P_U
\left[P_U^{\mathsf T}(K_{UU}^{\theta})^{-1}P_U\right]^{-1}
P_U^{\mathsf T}(K_{UU}^{\theta})^{-1},
\label{eq:residual-precision}
\end{equation}
which we apply by sparse solves and never store. With $n_{\rm res}=N-n_{\rm aff}$, we minimize
\begin{align}
\mathcal J_{c_d}={}&\frac12\left[
\frac{\log\det K_{UU}^{\theta}+2\log|\det R_P|}{n_{\rm res}}
+\log\!\left(\frac{\operatorname{tr}(E^{\mathsf T}\Pi_U E)}{n_{\rm res}}\right)\right]
+c_d\widetilde d,\label{eq:reml-objective}\\
\widetilde d={}&\frac1{N(N-1)}\sum_{i\ne i'}
\operatorname{sigmoid}\!\left(
\frac{1-\|D_\parallel(z_i-z_{i'})\|_2/\varrho}{0.03}\right).
\label{eq:soft-density}
\end{align}
The bracket can be interpreted in two ways. In kernel terms, $\operatorname{tr}(E^{\mathsf T}\Pi_UE)$ is the squared norm, in the native space of the regularized kernel, of the augmented interpolants of the training outputs after their affine part is removed, and the log-determinants penalize kernels whose matrices have a large determinant, which are the more flexible ones. The bracket thus favors kernels that represent the outputs with a small norm, while penalizing overly flexible kernels. In probabilistic terms, if the outputs are modeled as samples of a Gaussian process with covariance $\lambda$, independent noise of variance $\theta$, and an affine mean with unknown coefficients, the bracket is, up to constants, the negative log restricted likelihood with the variance profiled out (with $\det R_P=1$ without augmentation). We borrow the criterion from this setting, but use it only to select kernel parameters, not for uncertainty quantification.

The term $\widetilde d$, with $z_i=z(\ba_i)$ and the logistic function $\operatorname{sigmoid}(x)=1/(1+e^{-x})$, is a smooth count of the interacting pairs, and its weight $c_d\in\{0,1\}$ trades accuracy against sparsity. We fit one candidate with each weight. All terms of $\mathcal J_{c_d}$ come from the sparse factors (Appendix~\ref{app:cost-details}).

The criterion measures how well the kernel represents the training outputs, but our goal is the error of the predicted velocity for new inputs. Thus, we discard the radius that minimizes the likelihood criterion and choose the radius by validation alone. We first evaluate the five radii that give the densities $0.10$, $0.25$, $0.50$, $0.75$, and $0.90$, with the regularization from the likelihood. We then take gradient steps in the radius and the regularization on the mean relative velocity error on validation fields, from four starts (half and a quarter of the largest of these radii, each with $\theta=10^{-6}$ and $10^{-3}$), while the fitted shape stays unchanged. Among all candidates with density at most $0.90$, we keep the one with the smallest validation error. The fixed-shape kernel goes through the same search without the likelihood step, with $\theta=10^{-6}$ at the five radii. All choices are made before the test fields are used (Appendix~\ref{app:gpu-selection}).

We thus compare four models, the fixed-shape and the fitted-shape kernel, each with and without affine augmentation. A comparison at the same augmentation isolates the effect of fitting the shape. Since all parameters are refit when the augmentation changes, a comparison with and without it measures the whole procedure.

%% file: sections/lagrange.tex
\section{Operator Lagrange prediction}
\label{sec:training-costs}
\label{sec:lagrange-section}
We now combine the operator regression of Section~\ref{sec:operator} with the output reconstruction of Section~\ref{sec:methods}. The direct way to predict is to solve~\eqref{eq:operator-augmented-primal} for the coefficients $C_j$ and $\beta_j$ of every stencil. This needs one operator right-hand side for every spatial coefficient, and the local coefficients of every training field. However, an operator regression with the same kernel for all outputs is a weighted combination of the training outputs, and the weights depend only on the new input. It suffices to solve once per new input for these weights and apply them to the whole output at once. Since the output reconstruction is linear, we can apply them even before the output reconstruction, to the sampled training fields.

\subsection{Operator Lagrange weights}
\label{sec:lagrange}
We first define the weights as functions on the input space. For an input $\ba$, let $k_U(\ba)=[\lambda(\ba_i,\ba)]_{i=1}^N$. We define its operator Lagrange weights $\ell(\ba)\in\mathbb R^N$ by
\begin{equation}
\begin{bmatrix}K_{UU}^{\theta}&P_U\\P_U^{\mathsf T}&0\end{bmatrix}
\begin{bmatrix}\ell(\ba)\\\mu(\ba)\end{bmatrix}
=\begin{bmatrix}k_U(\ba)\\p(\ba)\end{bmatrix},
\label{eq:augmented-lagrange-system}
\end{equation}
where the multiplier $\mu(\ba)$ enforces the reproduction of affine functions. Block elimination with the factors~\eqref{eq:operator-whitened-qr} gives
\begin{equation}
\ell(\ba)=\bigl(I-\mathcal E_U^{\mathsf T}P_U^{\mathsf T}\bigr)(K_{UU}^{\theta})^{-1}k_U(\ba)+\mathcal E_U^{\mathsf T}p(\ba),
\label{eq:lagrange-continuous}
\end{equation}
so every weight $\ell_i$ is a combination of the kernel functions $\lambda(\cdot,\ba_{i'})$ and the affine functions, of the same form as~\eqref{eq:operator-interpolant}. Since the matrix of~\eqref{eq:augmented-lagrange-system} is the symmetric matrix of~\eqref{eq:operator-augmented-primal}, the operator interpolant~\eqref{eq:operator-interpolant} and the learned operator~\eqref{eq:operator-continuous} take the Lagrange form
\begin{equation}
\ftilde(\ba)=\sum_{i=1}^N\ell_i(\ba)\bb_i,\qquad
\Gtilde(a)=\sum_{i=1}^N\ell_i(\ba)\chi(\bb_i),
\label{eq:lagrange-form}
\end{equation}
a combination of the reconstructed training fields with weights that depend only on the input (Appendix~\ref{app:augmented-lagrange-proof}). Without augmentation, $\ell(\ba)=(K_{UU}^{\theta})^{-1}k_U(\ba)$, and~\eqref{eq:lagrange-form} is the Lagrange form of~\eqref{eq:operator-interpolant-plain} and~\eqref{eq:operator-continuous-plain}. The weights are regularized Lagrange (cardinal) functions of the training inputs~\citep{santin2018greedy,Wendland2004,fasshauer2007meshfree}. For $\theta=0$, they are cardinal at the training inputs. For $\theta>0$, they are not, but they still reproduce the affine functions exactly.

\begin{proposition}[Reproduction and preservation under shared weights]
\label{prop:operator-reproduction}
Assume $K_{UU}^{\theta}$ is positive definite and $P_U$ has full column rank. Then~\eqref{eq:lagrange-continuous} is the unique weight vector in~\eqref{eq:augmented-lagrange-system}, and $P_U^{\mathsf T}\ell(\ba)=p(\ba)$. In particular, the augmented model reproduces constants and the scaled principal coordinates in $p(\ba)$. Let $\chi$ be any common linear output reconstruction of the training coefficient vectors, and let $\mathcal D$ be a linear constraint with $\mathcal D\chi(\bb_i)=0$ for every training field. Then
\begin{equation}
\chi\!\left(\sum_i\ell_i(\ba)\bb_i\right)
=\sum_i\ell_i(\ba)\chi(\bb_i),\qquad
\mathcal D\chi\!\left(\sum_i\ell_i(\ba)\bb_i\right)=0.
\label{eq:operator-property-preservation}
\end{equation}
\end{proposition}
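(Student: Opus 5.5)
The plan is to treat the saddle-point matrix of \eqref{eq:augmented-lagrange-system} by block elimination, derive uniqueness and the reproduction identity from it, and then obtain the preservation statement from linearity alone.

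\textbf{Step 1: the saddle-point matrix is nonsingular.} Write $\mathcal M=\begin{bmatrix}K_{UU}^{\theta}&P_U\\P_U^{\mathsf T}&0\end{bmatrix}$. Suppose $\mathcal M[\ell;\mu]=0$. The first block row gives $\ell=-(K_{UU}^{\theta})^{-1}P_U\mu$. Substituting into the second block row gives $P_U^{\mathsf T}(K_{UU}^{\theta})^{-1}P_U\mu=0$. This Schur complement is symmetric positive definite, because $K_{UU}^{\theta}$ is positive definite and $P_U$ has full column rank. Hence $\mu=0$, and then $\ell=0$. So $\mathcal M$ is nonsingular and \eqref{eq:augmented-lagrange-system} has a unique solution $(\ell(\ba),\mu(\ba))$.

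\textbf{Step 2: the unique solution is \eqref{eq:lagrange-continuous}.} Using the factors \eqref{eq:operator-whitened-qr}, I would check directly that the candidate $\ell=(I-\mathcal E_U^{\mathsf T}P_U^{\mathsf T})(K_{UU}^{\theta})^{-1}k_U+\mathcal E_U^{\mathsf T}p$ satisfies $P_U^{\mathsf T}\ell=p$. This needs the identity $P_U^{\mathsf T}\mathcal E_U^{\mathsf T}=I$, which holds because
\[
\mathcal E_UP_U=R_P^{-1}Q_P^{\mathsf T}L_K^{-1}P_U=R_P^{-1}Q_P^{\mathsf T}Q_PR_P=I .
\]
With this identity,
\[
P_U^{\mathsf T}\ell=P_U^{\mathsf T}(K_{UU}^{\theta})^{-1}k_U-P_U^{\mathsf T}(K_{UU}^{\theta})^{-1}k_U+p=p .
\]

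\textbf{Step 3: a consistent multiplier.} It remains to exhibit $\mu$ with $K_{UU}^{\theta}\ell+P_U\mu=k_U$. A direct computation gives
\[
k_U-K_{UU}^{\theta}\ell=K_{UU}^{\theta}\mathcal E_U^{\mathsf T}\bigl(P_U^{\mathsf T}(K_{UU}^{\theta})^{-1}k_U-p\bigr).
\]
So the main point, and the only real obstacle, is the identity $K_{UU}^{\theta}\mathcal E_U^{\mathsf T}=P_UR_P^{-1}R_P^{-\mathsf T}$, which lies in the range of $P_U$. It follows from $L_KL_K^{\mathsf T}L_K^{-\mathsf T}Q_PR_P^{-\mathsf T}=L_KQ_PR_P^{-\mathsf T}$ together with $L_KQ_P=P_UR_P^{-1}$. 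This yields an explicit $\mu(\ba)$, and by uniqueness from Step 1 the candidate is the solution. The reproduction claim is exactly the second block row, $P_U^{\mathsf T}\ell(\ba)=p(\ba)$. Reading it off row by row gives $\sum_i\ell_i(\ba)=1$, because the first entry of $p$ is the constant $1$, and $\sum_i\ell_i(\ba)p_k(\ba_i)=p_k(\ba)$ for each scaled principal coordinate.

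\textbf{Step 4: preservation under a linear output reconstruction.} This part is pure linearity. Because $\chi$ is linear, $\chi\bigl(\sum_i\ell_i\bb_i\bigr)=\sum_i\ell_i\chi(\bb_i)$, since $\ell_i(\ba)$ are scalars. Applying the linear constraint $\mathcal D$ then gives $\sum_i\ell_i(\ba)\,\mathcal D\chi(\bb_i)=0$, using $\mathcal D\chi(\bb_i)=0$ for every training field.

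For completeness, I would also note the Lagrange form \eqref{eq:lagrange-form}. By symmetry of $\mathcal M$, each coefficient satisfies
\[
\ftilde(\ba)=[k_U^{\mathsf T}\ p^{\mathsf T}]\mathcal M^{-1}[B;0]=[\ell^{\mathsf T}\ \mu^{\mathsf T}][B;0]=\ell^{\mathsf T}B .
\]
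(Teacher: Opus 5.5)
Your proposal is correct and follows the paper's proof essentially step for step. You use the identities $\mathcal E_UP_U=I$ and $K_{UU}^{\theta}\mathcal E_U^{\mathsf T}=P_UR_P^{-1}R_P^{-\mathsf T}$ to verify the constraint row and to build the explicit multiplier $\mu(\ba)$, you read reproduction off $P_U^{\mathsf T}\ell(\ba)=p(\ba)$, and you obtain preservation from linearity. The only cosmetic difference is that you prove uniqueness through positive definiteness of the Schur complement $P_U^{\mathsf T}(K_{UU}^{\theta})^{-1}P_U$, whereas the paper argues that a homogeneous solution satisfies $v^{\mathsf T}K_{UU}^{\theta}v=0$; both arguments are immediate.
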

\begin{proof}[Proof sketch]
The factors~\eqref{eq:operator-whitened-qr} give $\mathcal E_UP_U=I$, from which reproduction follows by substitution, and uniqueness follows from the block structure. The identities~\eqref{eq:operator-property-preservation} are linearity. Appendix~\ref{app:augmented-lagrange-proof} gives the details.
\end{proof}
With $\mathcal D$ the divergence on a stencil, the proposition shows that every prediction is divergence free on each stencil, for both kernels and with or without augmentation. Since the temporal weights~\eqref{eq:temporal_polynomial_weights} do not depend on the input, the operator weights also commute with interpolation in time.

We now turn to the discrete form. For a new input $\ba_*$, restricting the first identity of~\eqref{eq:lagrange-form} to stencil $j$ gives the predicted coefficients
\begin{equation}
\bb_*^j=B_j^{\mathsf T}\ell(\ba_*),
\label{eq:operator_coefficient_prediction}
\end{equation}
and, with the weights of $N_*$ new inputs as the rows of $L\in\mathbb R^{N_*\times N}$, the second equation of~\eqref{eq:operator-augmented-primal} becomes
\begin{equation}
B_{*,j}=LB_j.
\label{eq:lagrange_map}
\end{equation}
Figure~\ref{fig:pipeline} illustrates~\eqref{eq:operator_coefficient_prediction}. One sparse solve gives the weights of a new input, and these weights produce the coefficients on every stencil. The number of operator solves now equals the number of new inputs and no longer depends on the size of the output representation. To compute~\eqref{eq:lagrange-continuous}, we first solve $K_{UU}^{\theta}\ell_0(\ba)=k_U(\ba)$ and then correct for the affine augmentation,
\begin{equation}
\ell(\ba)=\ell_0(\ba)+\mathcal E_U^{\mathsf T}
\left[p(\ba)-P_U^{\mathsf T}\ell_0(\ba)\right].
\label{eq:augmented-lagrange}
\end{equation}
Without augmentation, $L=K_{U_*U}(K_{UU}^{\theta})^{-1}$, so that
\begin{equation}
K_{UU}^{\theta}L^{\mathsf T}=K_{U_*U}^{\mathsf T}.
\label{eq:lagrange_solve}
\end{equation}

\subsection{Applying the weights before the output reconstruction}
\label{sec:commuted-prediction}
With $\bb_i$ as in Section~\ref{sec:operator-interpolant}, let $\mathcal R$ be the linear map from a sampled output $\bv_i$ to its local output reconstruction $\chi(\bb_i)$. Since the same stencils, polynomial bases, and temporal weights serve every training field, linearity gives
\begin{equation}
\begin{aligned}
\Gtilde(a)
&=\chi\!\left(\sum_{i=1}^N\ell_i(\ba)\bb_i\right)
=\sum_{i=1}^N\ell_i(\ba)\chi(\bb_i)\\
&=\mathcal R\!\left(\sum_{i=1}^N\ell_i(\ba)\bv_i\right).
\label{eq:reconstruction-commutation}
\end{aligned}
\end{equation}
Our GPU implementation uses the last form. It combines the sampled training fields with the weights and then applies $\mathcal R$ once per new input. At setup, adjoint solves with the local matrices $\bA_j$ turn $\mathcal R$ into compact evaluation weights on each stencil. Prediction then stores the sampled training fields, the evaluation weights, and the stencil indices, but none of the $N$ sets of local coefficients. For the Taylor--Green benchmarks, whose responses we divide by the input amplitude before fitting (Section~\ref{sec:protocol}), the same identity holds for the normalized responses.

\subsection{Rounding errors of direct and Lagrange evaluation}
\label{sec:lagrange-theorem}
The forms in~\eqref{eq:reconstruction-commutation} agree in exact arithmetic, but they round differently. The difference is important when a moderate velocity is the result of cancellation among large spatial coefficients, which can happen for a global interpolant. A solve error then enters the velocity amplified by the output reconstruction. We now quantify this for the model without augmentation and for evaluation through the coefficients, which is the setting of our comparison in Section~\ref{sec:lagrange_accuracy}. The analysis does not cover the affine correction~\eqref{eq:augmented-lagrange} or the order of operations of Section~\ref{sec:commuted-prediction}.

Assume $K_{UU}^{\theta}$ is symmetric and nonsingular. We fix one stencil and drop its index, writing $r=r_j$, $B=B_j\in\mathbb R^{N\times r}$, and $C=C_j\in\mathbb R^{N\times r}$, so that each row of $B$ holds the spatial coefficients of one training field. Let $\bA^\star\in\mathbb R^{n_{\rm eval}\times r}$ stack the evaluation matrices~\eqref{eq:local_dfphs_evaluation} of the $n_{\rm eval}$ velocity values reconstructed from this stencil, so that $B(\bA^\star)^{\mathsf T}$ holds the reconstructed training velocities. The direct coefficients $C$ and the Lagrange weights $L$ satisfy
\begin{equation}
K_{UU}^{\theta}C=B,\qquad LK_{UU}^{\theta}=K_{U_*U},
\end{equation}
and both give the prediction $V_*=K_{U_*U}C(\bA^\star)^{\mathsf T}=LB(\bA^\star)^{\mathsf T}$. The same notation describes a global output reconstruction, with a single stencil.

\Needspace{10\baselineskip}
\begin{theorem}[Solve errors and multiplication rounding]
\label{prop:lagrange-error}
For computed approximations $\widehat C$ and $\widehat L$, define the solve residuals by evaluating their defining equations in exact arithmetic,
\begin{equation}
R_{\rm dir}=B-K_{UU}^{\theta}\widehat C,\qquad
R_{\rm Lag}=K_{U_*U}-\widehat L K_{UU}^{\theta}.
\label{eq:lagrange-residuals}
\end{equation}
Before subsequent multiplication rounding, the prediction errors satisfy
\begin{align}
K_{U_*U}\widehat C (\bA^\star)^{\mathsf T}-V_*&=-L R_{\rm dir} (\bA^\star)^{\mathsf T},\label{eq:direct-error-identity}\\
\widehat L B(\bA^\star)^{\mathsf T}-V_*&=-R_{\rm Lag}(K_{UU}^{\theta})^{-1}B(\bA^\star)^{\mathsf T}.\label{eq:lagrange-error-identity}
\end{align}
Define, with $\operatorname{fl}(\cdot)$ denoting evaluation in floating point, $\widehat V_{*,\rm dir}=\operatorname{fl}(\operatorname{fl}(K_{U_*U}\widehat C)(\bA^\star)^{\mathsf T})$ and $\widehat V_{*,\rm Lag}=\operatorname{fl}(\operatorname{fl}(\widehat L B)(\bA^\star)^{\mathsf T})$.

Suppose the coefficient products and the output reconstruction use standard floating-point dot products, possibly split into $d$ component blocks followed by summation.

Assume no overflow or underflow, and set $\gamma_k=ku_{\rm mach}/(1-ku_{\rm mach})$, where $u_{\rm mach}$ is unit roundoff and $\max\{N,r+d\}u_{\rm mach}<1$. With $\gamma_{\rm mult}=\gamma_N+\gamma_{r+d}+\gamma_N\gamma_{r+d}$, we have
\begin{align}
\|\widehat V_{*,\rm dir}-V_*\|_F
&\leq \|L\|_F\|R_{\rm dir}\|_F\|(\bA^\star)^{\mathsf T}\|_F
 +\gamma_{\rm mult}\big\|\,|K_{U_*U}|\,|\widehat C|\,|(\bA^\star)^{\mathsf T}|\,\big\|_F,
\label{eq:direct-full-error-bound}\\
\|\widehat V_{*,\rm Lag}-V_*\|_F
&\leq \|R_{\rm Lag}\|_F\|(K_{UU}^{\theta})^{-1}B(\bA^\star)^{\mathsf T}\|_F
 +\gamma_{\rm mult}\big\|\,|\widehat L|\,|B|\,|(\bA^\star)^{\mathsf T}|\,\big\|_F.
\label{eq:lagrange-full-error-bound}
\end{align}
Here absolute values are taken entrywise. The residual identities require symmetry and nonsingularity, but not positive definiteness.
\end{theorem}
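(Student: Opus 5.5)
The plan is to prove the two exact residual identities first, and then combine each with a standard floating-point bound for a product of three matrices via the triangle inequality.

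\emph{Identities.} Write $K=K_{UU}^{\theta}$, which is symmetric and nonsingular. From $LK=K_{U_*U}$ we get $L=K_{U_*U}K^{-1}$.
\begin{itemize}
\item[(i)] Direct evaluation. From the definition of $R_{\rm dir}$ we have $\widehat C=K^{-1}(B-R_{\rm dir})=C-K^{-1}R_{\rm dir}$. Then $K_{U_*U}\widehat C=K_{U_*U}C-LR_{\rm dir}$, and multiplying on the right by $(\bA^\star)^{\mathsf T}$ gives~\eqref{eq:direct-error-identity}.
\item[(ii)] Lagrange evaluation. Similarly $\widehat L=(K_{U_*U}-R_{\rm Lag})K^{-1}=L-R_{\rm Lag}K^{-1}$. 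Multiplying on the right by $B(\bA^\star)^{\mathsf T}$ gives~\eqref{eq:lagrange-error-identity}.
\end{itemize}
Only nonsingularity is used in these steps. Symmetry is needed only to identify $L$ with the transpose of the column-solve form~\eqref{eq:lagrange_solve}. Positive definiteness is never used, which justifies the last sentence of the theorem.

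\emph{Floating-point step.} We split each total error into two parts:
\[
\widehat V_{*,\rm dir}-V_*=\bigl(K_{U_*U}\widehat C(\bA^\star)^{\mathsf T}-V_*\bigr)+\bigl(\widehat V_{*,\rm dir}-K_{U_*U}\widehat C(\bA^\star)^{\mathsf T}\bigr),
\]
and analogously for the Lagrange form. The first part is bounded by submultiplicativity of the Frobenius norm applied to the identities above. This yields $\|L\|_F\|R_{\rm dir}\|_F\|(\bA^\star)^{\mathsf T}\|_F$ for the direct form. For the Lagrange form we keep $\|R_{\rm Lag}\|_F\,\|K^{-1}B(\bA^\star)^{\mathsf T}\|_F$ without splitting it further.

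For the second part we use Higham's componentwise bound for matrix products:
\[
\operatorname{fl}(XY)=XY+\Delta,\qquad |\Delta|\le\gamma_k|X||Y|,
\]
where $k$ is the inner dimension. The first product has inner dimension $N$. For the second product, the inner dimension is $r$. If the dot product is split into $d$ component-block partial sums followed by their summation, each term still undergoes at most $r+d$ roundings, giving $\gamma_{r+d}$. Writing $M=K_{U_*U}\widehat C$ and $\widehat M=\operatorname{fl}(M)=M+\Delta_1$, we obtain
\[
\operatorname{fl}\bigl(\widehat M(\bA^\star)^{\mathsf T}\bigr)-M(\bA^\star)^{\mathsf T}=\Delta_1(\bA^\star)^{\mathsf T}+\Delta_2,
\]
with $|\Delta_1|\le\gamma_N|K_{U_*U}||\widehat C|$ and $|\Delta_2|\le\gamma_{r+d}|\widehat M||(\bA^\star)^{\mathsf T}|$. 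Using $|\widehat M|\le(1+\gamma_N)|K_{U_*U}||\widehat C|$, these combine into the entrywise bound
\[
(\gamma_N+\gamma_{r+d}+\gamma_N\gamma_{r+d})\,|K_{U_*U}||\widehat C||(\bA^\star)^{\mathsf T}|=\gamma_{\rm mult}\,|K_{U_*U}||\widehat C||(\bA^\star)^{\mathsf T}|.
\]
Monotonicity of the Frobenius norm under entrywise domination then gives the second term of~\eqref{eq:direct-full-error-bound}. The Lagrange case is identical with $X=\widehat L$ and $Y=B$, since its inner dimension is also $N$.

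\emph{Main obstacle.} The main obstacle is the blocked dot-product accounting, i.e., justifying $\gamma_{r+d}$ rather than $\gamma_r$ for an arbitrary split into $d$ blocks followed by summation. First we apply the general result that any summation order of $k$ terms yields $|\text{error}|\le\gamma_{k'}\sum|x_iy_i|$, where $k'$ is the maximal depth of the summation tree plus one for the multiplication. Then we check that a block split adds at most $d$ to this depth. Under the assumption $\max\{N,r+d\}u_{\rm mach}<1$, all $\gamma$'s are well defined. The no-overflow/no-underflow assumption makes the model $\operatorname{fl}(a\circ b)=(a\circ b)(1+\epsilon)$ valid throughout.
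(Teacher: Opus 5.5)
Your proposal is correct and follows the paper's proof. The identities come from $\widehat C=(K_{UU}^{\theta})^{-1}(B-R_{\rm dir})$ and $\widehat L=(K_{U_*U}-R_{\rm Lag})(K_{UU}^{\theta})^{-1}$, and the rounding term comes from Higham's componentwise product bound applied twice (with $\gamma_N$, then $\gamma_{r+d}$ for block dot products followed by at most $d-1$ additions), combined through $|\operatorname{fl}(K_{U_*U}\widehat C)|\le(1+\gamma_N)|K_{U_*U}||\widehat C|$ and finished with the triangle inequality and submultiplicativity; your remark that only nonsingularity enters the identities once $L$ is defined by $LK_{UU}^{\theta}=K_{U_*U}$ is accurate and slightly sharper than the paper's statement.
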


\begin{proof}[Proof sketch]
Substituting the residuals into the two predictions gives~\eqref{eq:direct-error-identity}--\eqref{eq:lagrange-error-identity}. The standard bound for floating-point dot products then bounds the multiplications. Appendix~\ref{app:lagrange-proof} gives the proof.
\end{proof}

The bounds become clearer with the amplification factor
\begin{equation}
\alpha_{\rm spatial}=\frac{\|B\|_F\|(\bA^\star)^{\mathsf T}\|_F}{\|B(\bA^\star)^{\mathsf T}\|_F}\geq1
\label{eq:spatial-amplification}
\end{equation}
which measures how much the coefficients cancel in the output reconstruction of the training velocities. If $\|R_{\rm dir}\|_F\leq\eta_{\rm dir}\|B\|_F$ and $\|R_{\rm Lag}\|_F\leq\eta_{\rm Lag}\|K_{U_*U}\|_F$, the solve errors satisfy
\begin{subequations}
\label{eq:solve-amplification}
\begin{align}
\|K_{U_*U}\widehat C(\bA^\star)^{\mathsf T}-V_*\|_F
&\leq\eta_{\rm dir}\|L\|_F\alpha_{\rm spatial}\|B(\bA^\star)^{\mathsf T}\|_F,\\
\|\widehat L B(\bA^\star)^{\mathsf T}-V_*\|_F
&\leq\eta_{\rm Lag}\|K_{U_*U}\|_F\|(K_{UU}^{\theta})^{-1}\|_F\|B(\bA^\star)^{\mathsf T}\|_F.
\end{align}
\end{subequations}
In direct evaluation, the relative solve error $\eta_{\rm dir}$ is amplified by $\alpha_{\rm spatial}$. In Lagrange evaluation, the coefficients have already been combined into velocities, and $\alpha_{\rm spatial}$ does not appear. A global interpolant with a smooth kernel can have very large $\alpha_{\rm spatial}$, since its coefficients are large and cancel, while a local interpolant on a small stencil has moderate coefficients. The rounding of the multiplications, however, contains the entrywise product with $|(\bA^\star)^{\mathsf T}|$ in both bounds (of $|\widehat C|$ in direct and of $|B|$ in Lagrange evaluation), so cancellation in the output reconstruction also enters the rounding, and the theorem does not make Lagrange evaluation more accurate in general. We test these predictions in Section~\ref{sec:lagrange_accuracy}.

\subsection{GPU implementation}
\label{sec:algorithm}
Algorithm~\ref{alg:sappkm} summarizes our implementation. Every floating-point computation runs on the GPU in double precision, from the normalization of the inputs and the stencil selection to the factorizations and the output reconstruction, using the routines of KernelPack-JAX~\citep{cattaneo2026kernelpack}. Only the integer bookkeeping of the cell-based stencil search for large non-periodic node sets (Section~\ref{sec:spatial-cost}) runs on the host. The kernel is assembled pair by pair over its support, in passes of bounded size, so that no dense kernel or distance matrix is ever formed.

However, JAX has no sparse direct solver that keeps a factorization between calls. Our implementation therefore factors $K_{UU}^{\theta}$ with NVIDIA's cuDSS through the primitive \texttt{kernelpack.sparse.cudss} of KernelPack-JAX, which exposes cuDSS to XLA, the compiler behind JAX. We use cuDSS in its symmetric positive-definite mode, with its nested-dissection ordering (Section~\ref{sec:operator-factors}). The factors stay on the GPU, and the right-hand sides of a batch of new inputs are solved as the columns of one call. A change of the support (i.e., of $\varrho$) requires a new ordering and symbolic analysis, while a change of the values only requires a numerical refactorization. Appendix~\ref{app:query-batches} describes the batching and the checks of the timed runs.

\begin{algorithm}[!htbp]
\caption{Local and sparse kernel operator learning on the GPU.}
\label{alg:sappkm}
\begin{algorithmic}[1]
\Require training and validation pairs, new inputs, output points, spatial kernel and polynomial degree
\AlgLabel{Output reconstruction}
\State select the stencils, with periodic images where needed (Section~\ref{sec:periodic_stencils})
\State build the divergence-free polynomials and the local systems, and prepare $\mathcal R$ (Section~\ref{sec:local_dfphs})
\State for spacetime outputs, add the temporal weights~\eqref{eq:temporal_polynomial_weights}
\AlgLabel{Operator regression}
\State standardize the training inputs and compute their principal directions
\State for each choice of augmentation, keep the fixed shape or fit it by~\eqref{eq:reml-objective}
\State select the radius and regularization on the validation fields
\State assemble $K_{UU}^{\theta}$ in sparse form and factor it with cuDSS
\State with augmentation, form the whitened QR correction~\eqref{eq:operator-whitened-qr}
\AlgLabel{Prediction}
\State solve for the Lagrange weights~\eqref{eq:augmented-lagrange} of a batch of new inputs
\State combine the sampled training fields and apply $\mathcal R$ by~\eqref{eq:reconstruction-commutation}
\State check the solve residuals and the reproduction of affine functions
\end{algorithmic}
\end{algorithm}

\subsection{Work and storage}
\label{sec:complexity}
Table~\ref{tab:complexity} lists the work and storage of each step. Let $\mathcal N_K=\operatorname{nnz}(K_{UU})$ and $\mathcal N_{\rm fac}=\operatorname{nnz}(L_K)$, where $L_K$ is the Cholesky factor of $K_{UU}^{\theta}$ in~\eqref{eq:operator-whitened-qr}. With $\mathcal L_i$ the set of rows in which column $i$ of $L_K$ is nonzero, so that $\sum_{i=1}^N|\mathcal L_i|=\mathcal N_{\rm fac}$, the sparse factorization takes, up to a constant factor,
\begin{equation}
F_{\rm chol}=\sum_{i=1}^N|\mathcal L_i|^2,\qquad
\frac{\mathcal N_{\rm fac}^2}{N}\leq F_{\rm chol}\leq N\mathcal N_{\rm fac},
\label{eq:cholesky-work}
\end{equation}
floating-point operations~\citep{davis2006direct}, where the lower bound follows from the Cauchy--Schwarz inequality and the upper bound from $|\mathcal L_i|\leq N$. For a dense factor, $|\mathcal L_i|=N-i+1$ and $F_{\rm chol}\approx N^3/3$. Let $b_q$ be the number of new inputs in a batch, let $m_{\rm rhs}$ be the number of spatial coefficients, and let $n_v$ be the number of sampled output values per field ($n_v=dm$, or $dmN_t$ for spacetime outputs). We count the work of a triangular solve as proportional to the number of nonzeros of the factor.

\begin{table}[!htbp]
\centering
\caption{Work and storage. Here $\mathcal N_K$ and $\mathcal N_{\rm fac}$ are the numbers of nonzeros of the kernel matrix and of its Cholesky factor, $F_{\rm chol}$ the work~\eqref{eq:cholesky-work} of the factorization, $b_q$ the number of new inputs in a batch, $n_{\rm aff}$ the number of affine columns, $n_v$ the number of sampled output values per field, $m^\star$ the number of query points, $r_j$ the size of the local system on stencil $j$, and $m_{\rm rhs}=\sum_j r_j$. The last two rows belong to direct evaluation through coefficients, which our predictions avoid. For spacetime outputs, the counts of these two rows are multiplied by $N_t$.}
\label{tab:complexity}
\small
\begin{tabular}{p{0.27\linewidth}p{0.36\linewidth}p{0.29\linewidth}}
\toprule
Step & Work & Storage\\
\midrule
Kernel factorization & $F_{\rm chol}$ & $\mathcal N_K$, $\mathcal N_{\rm fac}$\\
Affine correction ($n_{\rm aff}$ columns) & solves with $n_{\rm aff}$ right-hand sides, $O(Nn_{\rm aff}^2)$ QR & $O(Nn_{\rm aff})$\\
Lagrange weights, $b_q$ queries & $O(b_q\mathcal N_{\rm fac})+O(b_qNn_{\rm aff})$ & $b_qN$ (dense)\\
Combining training fields & $O(b_qNn_v)$ & $Nn_v$\\
Local systems (setup) & $O(r_j^3)$ per stencil & \\
Local evaluation, $m^\star$ points & $O(b_qm^\star d^2n_s)$ per snapshot & $O(m^\star d^2n_s)$ weights, $O(m^\star n_s)$ indices\\
\midrule
Training coefficients & $O(N\sum_jr_j^2)$ & $N\sum_jr_j$\\
Applying weights to coefficients & $O(b_qNm_{\rm rhs})$ & \\
\bottomrule
\end{tabular}
\end{table}

Two observations follow from the table. First, the local output reconstruction costs $O(r_j^3)$ per stencil, with $r_j=dn_s+n_{P,j}$ fixed by the stencil size and the polynomial degree, so its cost grows linearly with the number of output points instead of cubically. Second, the cost of the operator regression is governed by the factor of $K_{UU}^{\theta}$, whose size depends on the fill as well as on the density of $K_{UU}$~\citep{davis2006direct}. At a fixed density, the kernel matrix and its factor still grow as $N^2$. For this reason, we report density, factor storage, and prediction time separately in Section~\ref{sec:results}.

Note that the Lagrange weights are dense even when $K_{UU}$ is sparse. Direct evaluation instead solves $m_{\rm rhs}$ right-hand sides once and can reuse its coefficients for any number of new inputs, so the cheaper route depends on how many predictions are needed. Appendix~\ref{app:cost-details} gives the storage accounting of our experiments.

%% file: sections/results.tex
\FloatBarrier
\section{Numerical experiments}
\label{sec:results}

We now test our method in three steps. We first test the local output reconstruction alone, with exact fields in place of predictions, for convergence and for the three properties it preserves (Section~\ref{sec:spatial-convergence}). We then compare four operator models on ten fluid benchmarks as the training set grows, in accuracy, sparsity, and storage (Section~\ref{sec:regression-results}). Finally, we compare direct and Lagrange evaluation in floating point and measure the cost of prediction (Section~\ref{sec:lagrange_accuracy}).

\input{sections/spatial_convergence}

\FloatBarrier
\subsection{Sparse operator regression}
\label{sec:regression-results}

\subsubsection{Benchmarks and models}
\label{sec:protocol}
We use the ten data sets of our previous work~\citep{sharma2026fluids}, listed in Table~\ref{tab:gpu-benchmarks}. Eight are two-dimensional laminar flows, two of them the analytical Taylor--Green vortices, and two are three-dimensional turbulent flows, a Reynolds-averaged simulation of a static mixer and a large-eddy simulation of forced isotropic turbulence. In each, the input is an initial velocity, temperature, or vorticity, an inflow, or a body force, sampled at points of the domain or of its inflow boundary, and the output is the velocity at a fixed later time (or, for Taylor--Green vortices (spacetime), at four times), sampled at $m$ output points. The output points are approximate Fekete points, i.e., well-spaced points selected from the mesh~\citep{sharma2026fluids}. Appendix~\ref{app:benchmark-configurations} gives the geometries, parameter distributions, and simulations.

Each data set has $10000$ training pairs, $100$ validation pairs, and $100$ test pairs. We fit every model on the first $N\in\{500,1000,5000,10000\}$ training pairs, so the training sets are nested, and the validation and test pairs are the same for every $N$. The validation and test pairs were also used in our previous study and during the development of the method, so our comparisons are retrospective. Freezing all choices before the final test runs prevents test-based selection in this study but does not make the test set new. Our error measure, the test error, is the relative velocity error $\|\tilde\bv^\star-\bv^\star\|_2/\|\bv^\star\|_2$ of the prediction $\tilde\bv^\star$ of a test field $\bv^\star$ at the output points, averaged over the $100$ test fields (and, for Taylor--Green vortices (spacetime), over the four times). It measures the prediction for inputs that are used neither for fitting nor for validation.

\begin{table}[pos=tbp]
\centering\small
\setlength{\tabcolsep}{4pt}
\caption{The ten benchmarks: input and output of the operator, number of sampled input coordinates, number $n_z$ of principal input coordinates (the numerical rank of the training inputs, capped at three), spatial dimension $d$, number $m$ of output points, and spatial kernel. Taylor--Green vortices (spacetime) has spacetime outputs, at the same $500$ points at four times.}
\label{tab:gpu-benchmarks}
\begin{tabular}{llrrrrl}
\toprule
Data set & Input $\to$ output & Input coord. & $n_z$ & $d$ & $m$ & Spatial kernel \\
\midrule
Forced isotropic turbulence & $\mathbf f\to\bu(\cdot,0.5)$ & 21000 & 3 & 3 & 7000 & Five-scale Mat\'ern \\
Turbulent species transport & inflow $\to\bu(\cdot,0.5)$ & 1388 & 2 & 3 & 7000 & Five-scale Mat\'ern \\
Merging vortices & $\omega(\cdot,0)\to\bu(\cdot,0.4)$ & 500 & 3 & 2 & 500 & Cubic PHS \\
\midrule
Lid-driven cavity & $\bu(\cdot,0)\to\bu(\cdot,5)$ & 2000 & 1 & 2 & 1000 & Cubic PHS \\
Cylinder flow (no shedding) & $\bu(\cdot,0)\to\bu(\cdot,10)$ & 2000 & 1 & 2 & 1000 & Cubic PHS \\
Cylinder flow (vortex shedding) & $\bu(\cdot,0)\to\bu(\cdot,10)$ & 2000 & 1 & 2 & 1000 & Cubic PHS \\
Backward-facing step & inflow $\to\bu(\cdot,5)$ & 11 & 1 & 2 & 1000 & Cubic PHS \\
Buoyancy-driven cavity & $T(\cdot,0)\to\bu(\cdot,5)$ & 5000 & 1 & 2 & 5000 & Cubic PHS \\
\midrule
Taylor--Green vortices (space) & $\bu(\cdot,0)\to\bu(\cdot,1)$ & 1000 & 1 & 2 & 500 & Cubic PHS \\
Taylor--Green vortices (spacetime) & $\bu(\cdot,0)\to\bu(\cdot,t)$, $t\in[0.7,1]$ & 1000 & 1 & 2 & 500 & Cubic PHS \\
\bottomrule
\end{tabular}
\end{table}

The benchmarks differ less in their flows than in their inputs, and this affects how we interpret the results. In three of them, forced isotropic turbulence, turbulent species transport, and merging vortices, the inputs vary in at least two independent directions, and the operator is a multivariate map. In the other seven, the training inputs have numerical rank one. These inputs are multiples of one input function (e.g., the speed of the lid or the amplitude of the inflow), so the operator restricted to the data is a curve in the output space, and the operator regression is one-dimensional. Two of these, the Taylor--Green vortices, are degenerate in a further sense. Their outputs are proportional to the input amplitude, so we divide the responses by the amplitude before fitting and multiply by it after prediction. With $\mathcal A_i>0$ the amplitude of training input $i$ relative to the median training amplitude and $\mathcal A(\ba)$ that of a new input, we fit the normalized responses $\bv_i/\mathcal A_i$ and predict
\begin{equation}
\tilde\bv(\ba)=\mathcal A(\ba)\sum_{i=1}^N\ell_i(\ba)\frac{\bv_i}{\mathcal A_i}.
\label{eq:tg-response-transform}
\end{equation}
The normalized operator is then constant, which affine augmentation reproduces, and Proposition~\ref{prop:operator-reproduction} applies to the normalized responses. All four models use this normalization for both Taylor--Green data sets, and the other eight data sets use unnormalized responses. Taylor--Green vortices (spacetime) thus tests our spacetime output reconstruction but not the operator regression.

We compare four models (Section~\ref{sec:kernel-training}). The \emph{fixed-shape} kernel uses $D_\parallel=I$ and $\alpha_\perp=1$ in~\eqref{eq:dimension-safe-kernel}, an isotropic metric on the principal coordinates and a unit scale for the residual coordinates. The \emph{fitted-shape} kernel has its shape fitted by the restricted likelihood. Each is used with and without affine augmentation, and each chooses its support radius and regularization by validation. For inputs of rank one, the shape has no free parameters, so the fitted-shape kernel differs from the fixed-shape one only through the regularization that the likelihood proposes, which enters the validation search.

All four models share one local output reconstruction, with divergence-free polynomials of degree $\varpi=4$ (dimension $20$ in 2D and $85$ in 3D) on stencils of $n_s=21$ and $59$ nodes, the cubic PHS for the smooth flows, and the multiscale Mat\'ern kernel with $c_\sigma=16$ (which sets its lengths $\sigma_{j,\kappa}=c_\sigma\rho_j/2^\kappa$ relative to the stencil radius $\rho_j$ in~\eqref{eq:local_matern_scales}) for the two turbulent flows. Section~\ref{sec:spatial-ranks} compares these choices with alternatives. All computations run in double precision on one NVIDIA RTX 5090.

\FloatBarrier
\subsubsection{Sparsity and convergence in $N$}
\label{sec:gpu-accuracy}
We first ask whether a sparse kernel matrix is compatible with accurate prediction. Validation chooses the support radius, and with it the density $d_K=\mathrm{nnz}(K_{UU})/N^2$ of the kernel matrix, among candidates with densities up to $90\%$. If accuracy required a dense matrix, validation would select the largest candidates. Figure~\ref{fig:gpu-summary} (right) shows the densities selected at $N=10000$ for all ten benchmarks and all four models of Section~\ref{sec:protocol}. Validation selects $10\%$ for the lid-driven cavity, both cylinder flows, and the buoyancy-driven cavity, about $33\%$ for the backward-facing step, and between $9\%$ and $25\%$ for turbulent species transport and merging vortices. Nearly dense matrices are selected only for forced isotropic turbulence and for Taylor--Green without augmentation. The value $10\%$ is the sparsest of the five prescribed densities that validation evaluates, although its gradient refinement can also reach sparser matrices.

Figure~\ref{fig:gpu-density-main} shows how the selected density changes with $N$. For most benchmarks, it decreases as $N$ grows, from between $20\%$ and $90\%$ at $N=500$ to $10\%$ at $N=10000$ for the lid-driven cavity and both cylinder flows, so the sparsity of the operator regression increases with the size of the training set. Forced isotropic turbulence stays at $90\%$ for every $N$, and the backward-facing step at about $33\%$.

\begin{figure}[pos=tbp]
\centering
\includegraphics{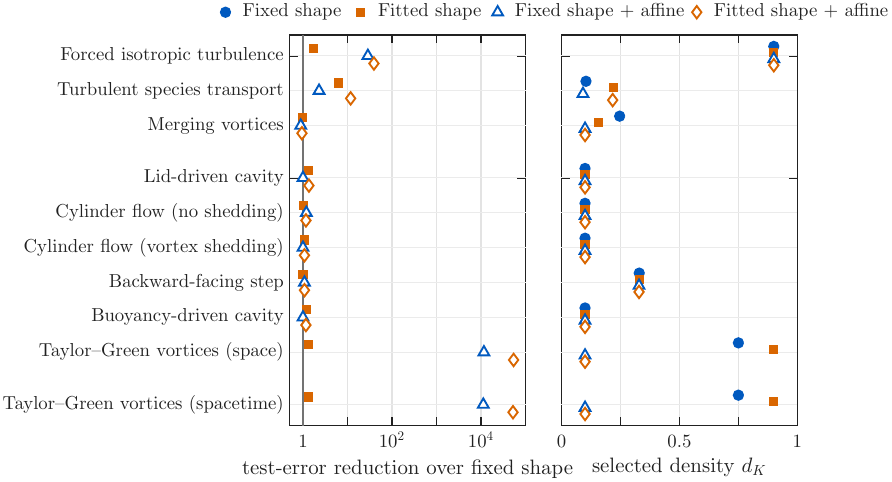}
\caption{All benchmarks at $N=10000$. Left: test error of the fixed-shape kernel ($D_\parallel=I$, $\alpha_\perp=1$) without augmentation, divided by the test error of each other model (values above one are improvements). Right: density $d_K$ of the selected kernel matrix.}
\label{fig:gpu-summary}
\end{figure}

\begin{figure}[pos=tbp]
\centering
\includegraphics{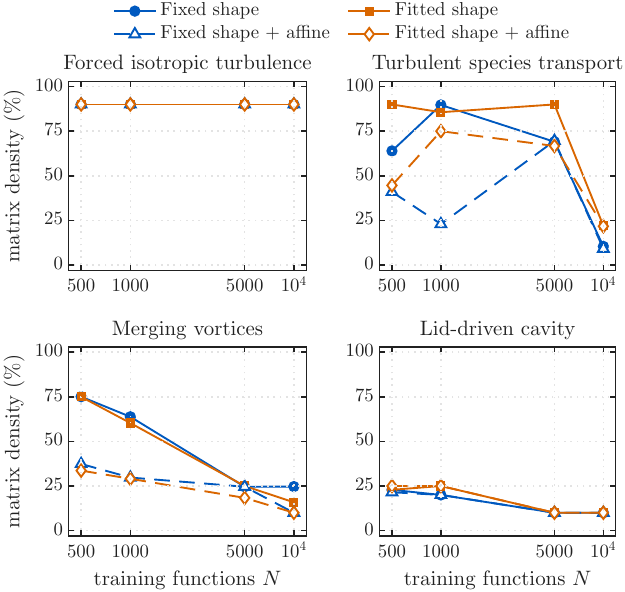}
\caption{Selected density $d_K$ of the kernel matrix against the number $N$ of training functions for the three multivariate flows and the lid-driven cavity. The affine columns are not counted. Figure~\ref{fig:gpu-density-additional} shows the other benchmarks.}
\label{fig:gpu-density-main}
\end{figure}

With these sparse matrices, the test errors decrease steadily as the training set grows. Figure~\ref{fig:gpu-error-main} shows the errors for the three multivariate flows and for the lid-driven cavity, a representative flow with rank-one inputs. Least-squares fits give errors proportional to $N^{-\tau}$ with $\tau$ between $0.4$ and $1.3$ for every benchmark and model (Table~\ref{tab:gpu-decay-rates} in Appendix~\ref{app:gpu-results}), with two exceptions. With augmentation, the backward-facing step decays only like $N^{-0.15}$. For Taylor--Green, the augmented models reproduce the constant normalized operator to rounding error, while the error of the unaugmented models grows slowly with $N$.

\begin{figure}[pos=tbp]
\centering
\includegraphics{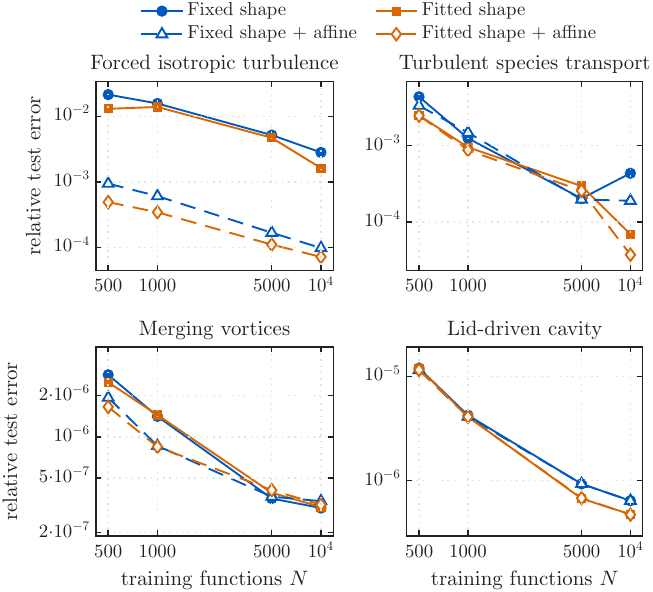}
\caption{Test error (mean relative velocity error of the predictions for the 100 test inputs) against the number $N$ of training functions for the three multivariate flows and the lid-driven cavity. Dashed: with affine augmentation. Figure~\ref{fig:gpu-error-additional} shows the other benchmarks.}
\label{fig:gpu-error-main}
\end{figure}

\subsubsection{Kernel shape and affine augmentation}
\label{sec:gpu-variants}
We next compare the four models. Figure~\ref{fig:gpu-summary} (left) shows, for each benchmark at $N=10000$, the error of the fixed-shape kernel without augmentation divided by the error of each other model. On the five rank-one flows other than Taylor--Green, these ratios lie between $0.98$ and $1.36$, so neither the fitted shape nor the augmentation changes the error by more than about a quarter. This is consistent with the operator regression being one-dimensional on these data.

On the multivariate flows, however, the ratios are larger and depend on the flow. For forced isotropic turbulence, augmentation reduces the error by a factor of $29$, and by $39$ together with the fitted shape, at the same density. For turbulent species transport, the fitted shape reduces it by a factor of $6$, and by $12$ together with augmentation, while doubling the selected density. For merging vortices, all four errors lie within $12\%$ of each other. Augmentation also lowers the selected density for merging vortices and for both Taylor--Green benchmarks. Neither choice is uniformly better, so we select both by validation.

\subsubsection{Comparison with earlier results}
\label{sec:prior-comparison}
We now compare our models with the results that~\citet{sharma2026fluids} report on the same data sets for dense PPKM, for the kernel method of~\citet{batlle2024kernel} (VKM), which maps the samples of the input to the samples of the output, and for two neural operators, Geo-FNO~\citep{li2023fourier} and Transolver~\citep{wu2024transolver}, each trained with and without a divergence penalty. Table~\ref{tab:prior-comparison} lists these errors at the training sizes used there, together with the error of our model that has the smallest validation error among the four. The comparison has two caveats. The earlier errors are averages over all $200$ held-out pairs, which include our $100$ validation pairs, and the neural operators were trained in single precision.

\begin{table}[pos=tbp]
\centering\footnotesize
\setlength{\tabcolsep}{3pt}
\caption{Test error (mean relative velocity error of the predictions for held-out inputs) of our sparse local model (the one of the four with the smallest validation error) and of the methods reported by~\citet{sharma2026fluids} at the same training size $N$: dense PPKM, the kernel method of~\citet{batlle2024kernel} on output samples (VKM), and Geo-FNO and Transolver, each with the better of its errors with and without a divergence penalty. The earlier errors are averages over all $200$ held-out pairs and ours over the $100$ test pairs. Our models of the cylinder flow without shedding at $N=100$ were fitted for this comparison, with the settings of the other training sizes. Bold: smallest test error in each row.}
\label{tab:prior-comparison}
\begin{tabular}{lrrrrrr}
\toprule
Data set & $N$ & Ours & PPKM & VKM & Geo-FNO & Transolver \\
\midrule
Forced isotropic turbulence & 10000 & $\mathbf{7.1\times10^{-5}}$ & $3.6\times10^{-4}$ & $1.6\times10^{-4}$ & $1.5\times10^{-4}$ & $1.2\times10^{-3}$ \\
Turbulent species transport & 10000 & $\mathbf{3.7\times10^{-5}}$ & $2.4\times10^{-4}$ & $2.4\times10^{-4}$ & $5.0\times10^{-4}$ & $2.7\times10^{-3}$ \\
Merging vortices & 500 & $\mathbf{1.9\times10^{-6}}$ & $1.1\times10^{-4}$ & $1.1\times10^{-4}$ & $1.1\times10^{-3}$ & $3.4\times10^{-3}$ \\
\midrule
Lid-driven cavity & 10000 & $\mathbf{4.7\times10^{-7}}$ & $1.4\times10^{-6}$ & $1.5\times10^{-6}$ & $1.5\times10^{-4}$ & $2.1\times10^{-4}$ \\
Cylinder flow (no shedding) & 100 & $\mathbf{2.8\times10^{-5}}$ & $4.0\times10^{-5}$ & $5.3\times10^{-5}$ & $1.6\times10^{-4}$ & $5.7\times10^{-3}$ \\
Cylinder flow (vortex shedding) & 10000 & $\mathbf{3.0\times10^{-7}}$ & $3.1\times10^{-7}$ & $2.3\times10^{-6}$ & $3.7\times10^{-5}$ & $1.5\times10^{-4}$ \\
Backward-facing step & 500 & $\mathbf{9.5\times10^{-8}}$ & $1.5\times10^{-6}$ & $2.9\times10^{-6}$ & $9.3\times10^{-5}$ & $7.2\times10^{-4}$ \\
Buoyancy-driven cavity & 10000 & $\mathbf{4.5\times10^{-6}}$ & $6.3\times10^{-6}$ & $6.1\times10^{-6}$ & $1.4\times10^{-4}$ & $1.9\times10^{-4}$ \\
\midrule
Taylor--Green vortices (space) & 5000 & $\mathbf{6.0\times10^{-11}}$ & $8.8\times10^{-10}$ & $1.7\times10^{-9}$ & $4.2\times10^{-4}$ & $6.1\times10^{-5}$ \\
Taylor--Green vortices (spacetime) & 5000 & $\mathbf{5.4\times10^{-11}}$ & $8.7\times10^{-10}$ & $8.7\times10^{-10}$ & $7.8\times10^{-4}$ & $2.2\times10^{-4}$ \\
\bottomrule
\end{tabular}
\end{table}

Our models are at least as accurate as dense PPKM on all ten data sets, by factors between $1.06$ (cylinder flow with vortex shedding) and about $56$ (merging vortices), and the same holds against VKM. Against the better of the two neural operators, the factor ranges from $2$ for forced isotropic turbulence to more than $10^6$ for Taylor--Green. The gains over dense PPKM are likely due to the affine augmentation, the fitted kernel shape, and the validation of the radius and regularization, and for Taylor--Green also to the amplitude normalization. Thus, at these training sizes, the sparse operator regression is no less accurate than the dense one.

In addition, the neural-operator predictions in~\citet{sharma2026fluids} have maximum pointwise divergences between about $0.1$ and $20$, while ours are divergence free on each stencil. The training times reported there, from seconds to about a minute and a half for the kernel methods with fixed shape parameters and from minutes to almost a day for the neural operators, are not comparable to our selection times, which include the search over radii and regularizations (Appendix~\ref{app:gpu-offline}).

\subsubsection{Storage}
\label{sec:gpu-sparsity}
The storage of a model consists mainly of the Cholesky factor of its kernel matrix and the sampled training fields, which the prediction~\eqref{eq:reconstruction-commutation} uses in place of their local coefficients. Figure~\ref{fig:gpu-storage-main} shows the storage against $N$. Over our range of $N$, the training fields dominate, and the storage of most models grows roughly linearly in $N$, with least-squares exponents between $0.9$ and $1.1$. The exceptions are merging vortices, the backward-facing step, and the unaugmented Taylor--Green models, with exponents between $1.2$ and $1.7$, likely because the factor is a larger share of their storage (small outputs, or densities that do not decrease). The Cholesky factor itself grows like $N^{1.4}$ to $N^{2.4}$, since even a decreasing density leaves a number of nonzeros that grows faster than $N$, and it would eventually dominate for larger training sets.

The size of the factor depends on the ordering as well as on the density. With the nested-dissection ordering of Section~\ref{sec:operator-factors}, the factors at $N=10000$ have between $1$ and $4.3$ times fewer entries (a median of $2.8$) than with the approximate-minimum-degree ordering, with identical predictions. Forced isotropic turbulence gains nothing, since its nearly dense kernel matrix has a dense factor in any ordering. The fill also makes storage not proportional to the density. For example, the fitted-shape augmented species model has $2.4$ times the density of the fixed-shape augmented one, but only $1.4$ times as many factor entries. For this reason, we report storage separately from density (Table~\ref{tab:gpu-cost-n10000}).

\begin{figure}[pos=tbp]
\centering
\includegraphics{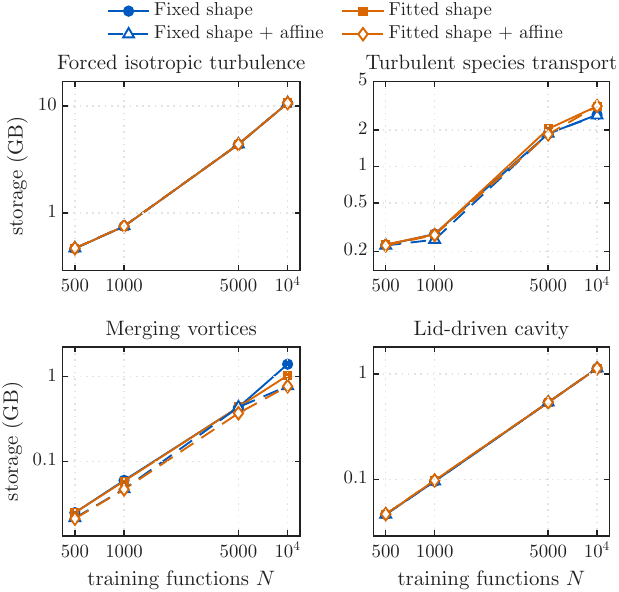}
\caption{Accounted storage of the fitted models (stored arrays and factors, not measured peak memory) against $N$ for the three multivariate flows and the lid-driven cavity. Figure~\ref{fig:gpu-storage-additional} shows the other benchmarks.}
\label{fig:gpu-storage-main}
\end{figure}

\FloatBarrier
\subsection{Operator Lagrange weights}
\label{sec:lagrange_accuracy}
We now turn to the operator Lagrange weights of Section~\ref{sec:lagrange-section}. Their main purpose is computational. They make a prediction cost one sparse solve per new input, regardless of the size of the output representation, and they let us combine the training fields before the output reconstruction. We first measure these savings. We then test the prediction of Theorem~\ref{prop:lagrange-error} that the weights are also more accurate than direct evaluation, but only when the output reconstruction amplifies solve errors.

\subsubsection{Storage of the training data}
Since the weights are applied to the sampled training fields, a prediction needs the $Nn_v$ sampled values, with $n_v=dm$, and not the $N\sum_j r_j$ local coefficients of the training fields. With one stencil per output point, the ratio of the two is $\sum_j r_j/(dm)=n_s+n_P/d$ when every stencil keeps the full polynomial space, of dimension $n_P$. For our degree-four output reconstructions, the local coefficients would thus take $31$ times the storage of the training fields in 2D and $87$ times in 3D. For forced isotropic turbulence at $N=10000$, for example, this is the difference between $1.7$ and $147$ GB in double precision.

\subsubsection{Prediction cost}
\label{sec:gpu-timing}
Figure~\ref{fig:gpu-warm-main} shows the time to predict all $100$ test fields once the model is set up, including the evaluation of the kernel, the sparse solves for the weights, and the local output reconstruction. For the lid-driven cavity, a twentyfold increase in $N$ raises this time seven- to eightfold, so prediction grows sublinearly in $N$ over this range. The turbulent flows cost more at $N=10000$, about three times the time of the lid-driven cavity for forced isotropic turbulence and $1.7$--$2.0$ times for turbulent species transport, since they select denser kernels with more fill.

The ordering also affects the prediction time. Nested dissection makes the median prediction $1.3$ times faster at $N=10000$ than approximate minimum degree, since its smaller factors also give more parallel triangular solves. Fitting the kernel shape has no consistent effect on prediction time, but it adds offline cost, since a fitted-shape model takes several times longer to select than a fixed-shape one (Appendix~\ref{app:gpu-offline}).

\begin{figure}[pos=tbp]
\centering
\includegraphics{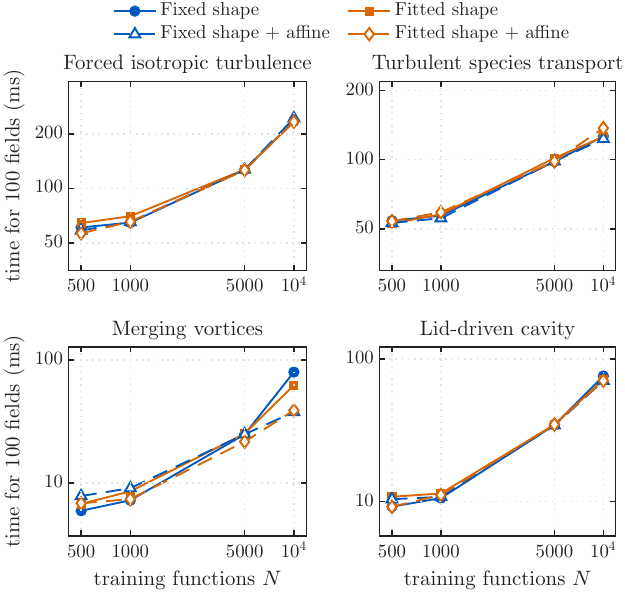}
\caption{Median time to predict all $100$ test fields after setup, including the kernel evaluation, the sparse solves, and the local output reconstruction. Fitting and setup are excluded (Appendix~\ref{app:gpu-offline}).}
\label{fig:gpu-warm-main}
\end{figure}

\subsubsection{Accuracy of direct and Lagrange evaluation}
Recall the amplification factor $\alpha_{\rm spatial}=\|B\|_F\|(\bA^\star)^{\mathsf T}\|_F/\|B(\bA^\star)^{\mathsf T}\|_F$ of~\eqref{eq:spatial-amplification}, which compares the size of the spatial coefficients $B$ of the training fields and of the evaluation matrix $\bA^\star$ with the size of the reconstructed velocities. It is large when large coefficients cancel in the output reconstruction. By~\eqref{eq:solve-amplification}, a solve error enters direct evaluation multiplied by $\alpha_{\rm spatial}$, and Lagrange evaluation without this factor. The theorem therefore predicts a gain from Lagrange evaluation where $\alpha_{\rm spatial}$ is large, and none where it is small. We test this prediction in two steps.

First, we compute $\alpha_{\rm spatial}$ for all ten benchmarks, from $1000$ training fields, for the global divergence-free Mat\'ern output reconstruction of our earlier work and for our local output reconstruction. For the local output reconstruction, which has one $B_j$ and one $\bA_j^\star$ per stencil, we combine the stencils as $(\sum_j\|B_j\|_F^2\|\bA_j^\star\|_F^2)^{1/2}/\|B(\bA^\star)^{\mathsf T}\|_F$, which bounds the amplification of the solve errors summed over all stencils and reduces to $\alpha_{\rm spatial}$ for a single global stencil. Table~\ref{tab:amplification} shows that the coefficients of the global output reconstructions are large and cancel, with $\alpha_{\rm spatial}$ above $10^7$, up to $6\times10^9$, on all benchmarks except Taylor--Green vortices (spacetime), whose kernel is less flat. Our local output reconstructions have $\alpha_{\rm spatial}$ between $6$ and $150$ in 2D, $396$ for forced isotropic turbulence, and about $10^4$ for turbulent species transport, where more than half the stencils have a truncated polynomial rank (Table~\ref{tab:spatial-ranks}).

\input{sections/amplification_table}

Second, we evaluate the same operator models both ways, with everything else held fixed. These are the unaugmented models of Section~\ref{sec:regression-results}, which the theorem covers: the fixed-shape kernel at the radii of the five prescribed densities and at its validated radius, and the fitted-shape kernel at its validated parameters, for $N$ from $500$ to $10000$ (Appendix~\ref{app:lagrange-numerics}). Table~\ref{tab:lagrange-all} lists the ratio of the direct error to the Lagrange error on all ten benchmarks.

\input{sections/lagrange_all_table}

The results agree with this prediction. With global output reconstruction, Lagrange evaluation is more accurate by up to a factor of $868$ (merging vortices), with median gains above $5$ for merging vortices, the lid-driven cavity, and the backward-facing step. With our local output reconstruction, the two evaluations agree to within a relative difference of $3\times10^{-6}$ for every model, as the small $\alpha_{\rm spatial}$ implies. A large $\alpha_{\rm spatial}$ allows a gain but does not guarantee one, since the gain also requires a solve residual that is not negligible. Forced isotropic turbulence, for example, has $\alpha_{\rm spatial}\approx4\times10^9$ with its global output reconstruction but a gain of at most $1.2$.

Over all $560$ pairs, Lagrange evaluation is never less accurate than direct evaluation by more than $0.02\%$. Appendix~\ref{app:lagrange-numerics} follows one global model of the lid-driven cavity as $N$ grows (Table~\ref{tab:lagrange-matched-control}). There, the gain grows from $1.3$ to $5.1$ together with the norm of the spatial coefficients, and solving the direct system for the sampled velocities instead of the coefficients removes it, which indicates that the large coefficients are the cause.

Lagrange evaluation thus serves our local output reconstruction through its cost and storage, and it would also protect a global or otherwise ill-conditioned output reconstruction, such as that of our earlier method, from the amplification of solve errors.

%% file: sections/spatial_convergence.tex
\subsection{Local output reconstruction}
\label{sec:spatial-convergence}

Exact velocity fields in place of predictions separate the error of the output reconstruction from that of the operator regression. We use these tests to answer three questions. Does it converge at the order set by its polynomial degree on scattered nodes? Are its divergence and its periodic mismatch at rounding level? And does the multiscale Mat\'ern kernel reproduce prescribed turbulence statistics as the nodes are refined?

\subsubsection{Convergence on scattered nodes}
We interpolate manufactured divergence-free fields in the smooth domains $\Omega_d=\{F_d>0\}$, $d=2,3$, bounded by a quartic curve and a quartic surface. Each field is the curl of a trigonometric stream function ($d=2$) or vector potential ($d=3$) multiplied by $F_d^2$, so it vanishes on the boundary (Appendix~\ref{app:manufactured-spatial} gives $F_d$ and the potentials). We use divergence-free polynomials of degree $\varpi=1$, $3$, and $5$ for target orders $2$, $4$, and $6$. Each stencil has $n_s=2\lceil n_P/d\rceil+1$ nodes, where $n_P$ is the dimension of the polynomial space, and both the PHS ($r^3$ for degrees $1$ and $3$, and $r^5$ for degree $5$) and the multiscale Mat\'ern kernel ($c_\sigma=16$, which sets the Mat\'ern lengths relative to the stencil radius in~\eqref{eq:local_matern_scales}) use the same stencils and polynomials. The node sets are four refinements generated by the node generator of KernelPack-JAX~\citep{cattaneo2026kernelpack}, with $172$ to $1907$ nodes in 2D and $197$ to $2570$ in 3D, and the stencils are boundary aware (Section~\ref{sec:local_dfphs}).

Let $v_h$ be the output reconstruction of the exact field $v$ from the nodes $Y$ of a node set with spacing $h$. We measure its relative error on a fixed set $Q$ of uniformly random interior evaluation points ($4096$ in 2D and $32768$ in 3D), against the fill distance sampled on the same set,
\begin{align}
 E_{\rm sp}
 &=\left(\frac{\sum_{y\in Q}\|v_h(y)-v(y)\|_2^2}
 {\sum_{y\in Q}\|v(y)\|_2^2}\right)^{1/2},
 & h_Q&=\max_{y\in Q}\min_{y'\in Y}\|y-y'\|_2,
 \label{eq:spatial-study-error}
\end{align}
where each evaluation point uses the stencil of its nearest node.

Figure~\ref{fig:spatial-convergence} shows the results. In 2D, both kernels converge at close to the target orders, with least-squares rates of $2.1$, $3.9$, and $5.8$. In 3D, the rates over all four node sets are lower, $1.7$--$1.9$, $3.4$--$3.7$, and $5.0$, but the sixth-order rate rises to $5.5$ over the three finest node sets. The 3D deficit is likely due to the coarsest node sets, which are not yet in the asymptotic range. The two kernels give nearly the same errors on smooth fields, so the multiscale kernel does not reduce the accuracy on smooth fields.

\begin{figure}[pos=tbp]
\centering
\includegraphics{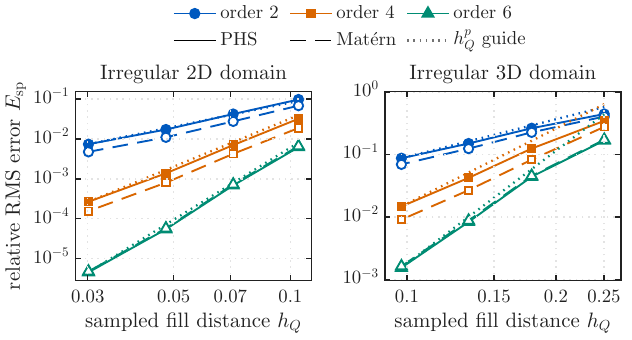}
\caption{Convergence of the local output reconstruction on scattered nodes in the domains $\Omega_2$ (left) and $\Omega_3$ (right), with PHS and multiscale Mat\'ern kernels and divergence-free polynomials. Relative error $E_{\rm sp}$ at random evaluation points that are not nodes, against the sampled fill distance $h_Q$ of~\eqref{eq:spatial-study-error}. Dotted lines: $h_Q^2$, $h_Q^4$, $h_Q^6$.}
\label{fig:spatial-convergence}
\end{figure}

\subsubsection{Divergence and periodicity}
Since every basis function is divergence free, the divergence of the output reconstruction should vanish up to rounding, however large its approximation error. Differentiating each local expansion on its own stencil, we find a normalized root-mean-square (RMS) divergence at rounding level for both kernels, all degrees, and all node sets (Appendix~\ref{app:manufactured-spatial}). We test periodic stencils on scattered nodes in $[0,2\pi)^d$, comparing the reconstructed velocities and their Jacobians at periodically equivalent points on faces, edges, and corners. The differences are again at rounding level for both kernels and all degrees, while the approximation error varies with degree and resolution as expected.

\subsubsection{Kolmogorov statistics}
The multiscale kernel builds a Kolmogorov law into its weights. We now test whether interpolation on finite stencils preserves this law. We reconstruct a three-dimensional solenoidal Fourier ensemble whose velocity increments follow the Kolmogorov law $S_2(r)\propto r^{2/3}$ over the separations $0.03\leq r\leq0.8$. The ensemble is a sum of $1514$ Fourier modes with Gaussian coefficients perpendicular to their wave vectors and a $-5/3$ energy spectrum, restricted to $\Omega_3$ (Appendix~\ref{app:k41-ensemble}). Below $r=0.03$, its finite bandwidth makes the increments smooth. For pairs of points $x_k(r)$ and $y_k(r)$ at distance $r$ along eight fixed directions $k=1,\ldots,8$, we compute the second-order structure function exactly from the covariance of the ensemble,
\begin{align}
 S_{2,k}(r)&=\mathbb E\bigl[\|v(x_k(r))-v(y_k(r))\|_2^2\bigr],
 \qquad \|x_k(r)-y_k(r)\|_2=r,\notag\\
 S_2(r)&=\frac{1}{8}\sum_{k=1}^{8}S_{2,k}(r),
 \label{eq:spatial-k41-s2}
\end{align}
both for the ensemble and, with $v_h$ in place of $v$, for its output reconstruction ($S_{2,k,h}$). We measure the worst relative error over the directions,
\begin{equation}
 E_{S_2}(r)=\max_{1\leq k\leq8}
 \frac{|S_{2,k,h}(r)-S_{2,k}(r)|}{S_{2,k}(r)}.
 \label{eq:spatial-k41-error}
\end{equation}
We use the multiscale Mat\'ern kernel with polynomial degrees $1$, $3$, and $5$ on six node sets, with node spacings $h$ from $0.26$ to $0.05$ ($128$ to $13567$ nodes).

Figure~\ref{fig:spatial-k41} shows the structure functions approaching the reference under refinement. At the largest separation, $r=0.8$, the error decays like $h^{0.64}$, $h^{0.73}$, and $h^{0.67}$ for the three degrees. These are fits over a finite range of $h$, not asymptotic rates. For degree five, the fitted slope of $S_2$ over the upper part of the range moves toward the reference slope. However, the error at small separations stays large even on the finest node set, since these separations are below the node spacing. To see whether locality is to blame, we compare local and global interpolation on one node set, with the same kernel and polynomials (Appendix~\ref{app:k41-ensemble}). Both have nearly the same error, at $r=0.8$ and over the whole range, so the small-separation error is likely due to the resolution of the nodes and not to the stencils.

\begin{figure}[pos=tbp]
\centering
\includegraphics{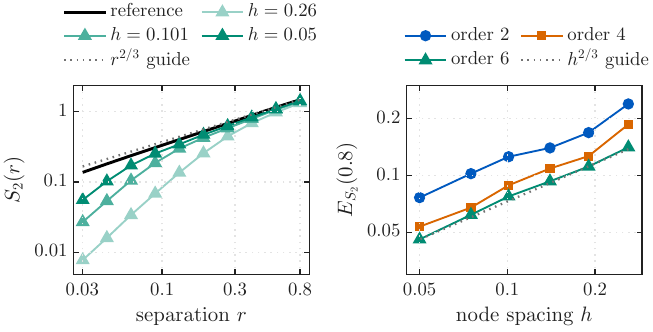}
\caption{Kolmogorov statistics of the local multiscale Mat\'ern output reconstruction. Left: structure functions for degree five on three of the six node sets, and the exact reference. Right: worst relative error $E_{S_2}(0.8)$ of~\eqref{eq:spatial-k41-error} of the reconstructed structure function against the node spacing $h$, with an $h^{2/3}$ guide.}
\label{fig:spatial-k41}
\end{figure}

\subsubsection{Output reconstruction in the fluid benchmarks}
\label{sec:spatial-ranks}
We now test the output reconstruction on the predictions of the operator benchmarks (Section~\ref{sec:protocol}). The data sets hold the velocity at more mesh points than the $m$ output points that we use as nodes, and the remaining points lie between the nodes. We reconstruct the predicted nodal velocities of the model with the smallest validation error at $N=10000$ with the global divergence-free Mat\'ern interpolant of our earlier work and with our local output reconstruction, and measure the error at these points (Table~\ref{tab:spatial-reconstruction-controls}, Appendix~\ref{app:spatial-supporting}). Neither output reconstruction is uniformly more accurate. The global one is more accurate for both cylinder flows, the backward-facing step, and Taylor--Green vortices (space), by factors between $1.06$ and $2.7$. The local one is more accurate for the other six, by factors between $1.04$ (turbulent species transport) and $10$ (buoyancy-driven cavity), although each local system has at most $62$ unknowns in 2D and $262$ in 3D, against up to $21000$ in the global system. For several flows, the errors between the nodes are much larger than those at the nodes, so they are set by the resolution of the $m$ nodes and not by the operator regression. For both turbulent flows, the multiscale Mat\'ern kernel has a lower error than the PHS, and validation selects $c_\sigma=16$ for both. Almost all stencils keep the full polynomial space, except in the species-transport mixer, whose geometry truncates the rank on more than half the stencils (Table~\ref{tab:spatial-ranks} in Appendix~\ref{app:local-solves}).

\input{sections/spatial_controls_table}

\subsubsection{Cost of the local output reconstruction}
\label{sec:spatial-cost}
We now compare the setup cost of our local output reconstruction with that of a global divergence-free output reconstruction, as in PPKM, on scattered nodes in the unit square and the unit cube. The global setup assembles the dense $dm\times dm$ Gram matrix of a divergence-free Mat\'ern kernel and factors it by Cholesky. The local setup selects the stencils, builds the polynomial bases and the local systems, factors them, and forms the evaluation weights of Section~\ref{sec:commuted-prediction}, with the degree-four output reconstruction of the operator benchmarks. Both run on the GPU in double precision, and we time them after compilation. Figure~\ref{fig:spatial-cost} shows the results.

\begin{figure}[pos=tbp]
\centering
\includegraphics{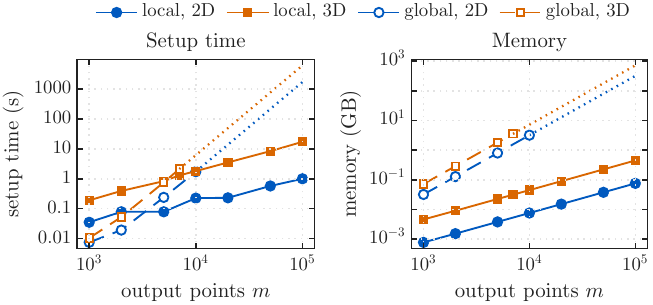}
\caption{Setup time and memory of our local output reconstruction (degree four, cubic PHS) and of a global divergence-free Mat\'ern output reconstruction (dense Gram matrix and Cholesky factor) against the number $m$ of output points, on scattered nodes in the unit square (2D) and cube (3D). Dotted: the global cost extrapolated as $(dm)^3$ in time and $(dm)^2$ in memory beyond the largest size that a single dense factorization fits on our GPU. All times are measured after compilation.}
\label{fig:spatial-cost}
\end{figure}

The local setup grows at most linearly in $m$, in both time and memory, as Table~\ref{tab:complexity} predicts. The global setup grows like $(dm)^3$ in time and $(dm)^2$ in memory, and a single dense factorization no longer fits on our GPU beyond $dm\approx2.5\times10^4$. It could be factored in blocks, at the same cubic cost. On small node sets, the dense global factorization is faster, since its work is done by a few large and efficient GPU kernels. The local setup becomes the faster one near $m=3\times10^3$ in 2D and $m=5\times10^3$ in 3D, and its memory is smaller at every size we tested, by a factor that grows linearly with $m$.

Two choices keep the local setup linear in $m$, and neither changes the output reconstruction beyond rounding. The polynomial bases are built by QR factorizations instead of singular value decompositions (Appendix~\ref{app:local-solves}), and non-periodic node sets of more than $4096$ nodes find their stencils by an exact search on a grid of cells, whose work grows linearly with $m$. Together, they make the setup of the output reconstruction a small part of the offline cost of every operator benchmark (Appendix~\ref{app:gpu-offline}).

%% file: sections/spatial_controls_table.tex
\begin{table}[pos=tbp]
\centering\small
\setlength{\tabcolsep}{4pt}
\caption{Test error between the nodes. We reconstruct the predicted nodal velocities of the 100 test inputs (model with the smallest validation error at $N=10000$) and measure the mean relative velocity error at the evaluation points, the mesh points of each data set that are not among its $m$ nodes. Global: the divergence-free Mat\'ern interpolant of~\citet{sharma2026fluids} on all nodes. Local: our output reconstruction with the cubic PHS, or with the multiscale Mat\'ern kernel ($c_\sigma=16$) where marked (Appendix~\ref{app:multiscale-spatial} compares it with the PHS). Bold: smaller error in each row.}
\label{tab:spatial-reconstruction-controls}
\begin{tabular}{lrrrr}
\toprule
Data set & Nodes $m$ & Evaluation points & Global & Local \\
\midrule
Forced isotropic turbulence & 7000 & 3000 & $1.88\times10^{-2}$ & $\mathbf{1.65\times10^{-2}}$ (multiscale) \\
Turbulent species transport & 7000 & 4388 & $1.52\times10^{-1}$ & $\mathbf{1.47\times10^{-1}}$ (multiscale) \\
Merging vortices & 500 & 6947 & $1.88\times10^{-1}$ & $\mathbf{1.35\times10^{-1}}$ \\
Lid-driven cavity & 1000 & 8566 & $6.15\times10^{-1}$ & $\mathbf{8.71\times10^{-2}}$ \\
Cylinder flow (no shedding) & 1000 & 8520 & $\mathbf{8.67\times10^{-2}}$ & $9.19\times10^{-2}$ \\
Cylinder flow (vortex shedding) & 1000 & 8520 & $\mathbf{9.65\times10^{-2}}$ & $1.12\times10^{-1}$ \\
Backward-facing step & 1000 & 6359 & $\mathbf{5.84\times10^{-3}}$ & $1.04\times10^{-2}$ \\
Buoyancy-driven cavity & 5000 & 5514 & $9.42\times10^{-2}$ & $\mathbf{9.31\times10^{-3}}$ \\
Taylor--Green vortices (space) & 500 & 6947 & $\mathbf{2.30\times10^{-4}}$ & $6.20\times10^{-4}$ \\
Taylor--Green vortices (spacetime) & 500 & 6947 & $1.99\times10^{-3}$ & $\mathbf{6.20\times10^{-4}}$ \\
\bottomrule
\end{tabular}
\end{table}

%% file: sections/amplification_table.tex
\begin{table}[pos=tbp]
\centering\small
\setlength{\tabcolsep}{4pt}
\caption{Amplification factor $\alpha_{\rm spatial}=\|B\|_F\|(\bA^\star)^{\mathsf T}\|_F/\|B(\bA^\star)^{\mathsf T}\|_F$ of~\eqref{eq:spatial-amplification}, which measures the cancellation among the spatial coefficients, for $N=1000$ training fields, for the global and for our local output reconstruction (stencils combined as in Section~\ref{sec:lagrange_accuracy}), and the largest ratio of the direct to the Lagrange error with global output reconstruction from Table~\ref{tab:lagrange-all}. Bold: smaller amplification factor in each row.}
\label{tab:amplification}
\begin{tabular}{lrrr}
\toprule
Data set & Global $\alpha_{\rm spatial}$ & Local $\alpha_{\rm spatial}$ & Largest gain (global) \\
\midrule
Forced isotropic turbulence & $3.7\times10^{9}$ & $\mathbf{396}$ & $1.2$ \\
Turbulent species transport & $2.9\times10^{9}$ & $\mathbf{9882}$ & $7.1$ \\
Merging vortices & $4.8\times10^{9}$ & $\mathbf{73}$ & $868.1$ \\
Lid-driven cavity & $5.8\times10^{9}$ & $\mathbf{13}$ & $34.5$ \\
Cylinder flow (no shedding) & $5.1\times10^{7}$ & $\mathbf{149}$ & $4.8$ \\
Cylinder flow (vortex shedding) & $5.2\times10^{7}$ & $\mathbf{149}$ & $2.0$ \\
Backward-facing step & $5.3\times10^{8}$ & $\mathbf{38}$ & $215.3$ \\
Buoyancy-driven cavity & $5.9\times10^{9}$ & $\mathbf{6}$ & $14.8$ \\
Taylor--Green vortices (space) & $1.6\times10^{7}$ & $\mathbf{73}$ & $6.1$ \\
Taylor--Green vortices (spacetime) & $1.9\times10^{2}$ & $\mathbf{145}$ & $1.0$ \\
\bottomrule
\end{tabular}
\end{table}

%% file: sections/lagrange_all_table.tex
\begin{table}[pos=tbp]
\centering\small
\setlength{\tabcolsep}{4pt}
\caption{Ratio of the test error of direct evaluation to that of Lagrange evaluation, for the same unaugmented operator models (the fixed-shape kernel of Section~\ref{sec:sparse_wendland} at the radii of the five prescribed densities and at its validated radius, and the fitted-shape kernel at its validated radius, for $N=500$, $1000$, $5000$, and $10000$) with global and local output reconstruction. Values above one favor Lagrange evaluation. Over all 560 pairs, the smallest ratio is $0.99987$.}
\label{tab:lagrange-all}
\begin{tabular}{lrrrrr}
\toprule
 & \multicolumn{3}{c}{Global output reconstruction} & \multicolumn{2}{c}{Local output reconstruction} \\
\cmidrule(lr){2-4}\cmidrule(lr){5-6}
Data set & Models & Median & Max & Models & Max $|$ratio$-1|$ \\
\midrule
Forced isotropic turbulence & 28 & 1.00 & 1.2 & 28 & $5.9\times10^{-11}$ \\
Turbulent species transport & 28 & 1.00 & 7.1 & 28 & $8.6\times10^{-7}$ \\
Merging vortices & 28 & 48.96 & 868.1 & 28 & $4.2\times10^{-8}$ \\
Lid-driven cavity & 28 & 5.07 & 34.5 & 28 & $1.4\times10^{-7}$ \\
Cylinder flow (no shedding) & 28 & 1.05 & 4.8 & 28 & $7.0\times10^{-8}$ \\
Cylinder flow (vortex shedding) & 28 & 1.01 & 2.0 & 28 & $2.4\times10^{-8}$ \\
Backward-facing step & 28 & 16.51 & 215.3 & 28 & $2.0\times10^{-8}$ \\
Buoyancy-driven cavity & 28 & 1.47 & 14.8 & 28 & $1.9\times10^{-8}$ \\
Taylor--Green vortices (space) & 28 & 1.05 & 6.1 & 28 & $1.7\times10^{-6}$ \\
Taylor--Green vortices (spacetime) & 28 & 1.00 & 1.0 & 28 & $2.4\times10^{-6}$ \\
\bottomrule
\end{tabular}
\end{table}

%% file: sections/conclusion.tex
\FloatBarrier
\section{Conclusions}
\label{sec:conclusion}

In this article, we presented a local and sparse kernel method for learning the solution operators of incompressible flows. It keeps the guarantee of our earlier property-preserving method~\citep{sharma2026fluids}, that every prediction is divergence free, and removes the two costs of that method that grow with the problem size. Local divergence-free interpolants replace the global output reconstruction, so the output can be refined without enlarging any linear system, and the local systems are independent and solved in parallel on the GPU. The output reconstruction is divergence free, periodic on periodic domains, and, with the multiscale Mat\'ern kernel, approaches Kolmogorov statistics under refinement at the separations that the nodes resolve. A compactly supported kernel that is positive semidefinite in any input dimension replaces the dense operator regression, so the kernel matrix is sparse. Its support radius, and with it the sparsity, is fitted to the data, and optional affine polynomials in the input, which the operator regression reproduces exactly, represent trends that a compactly supported kernel cannot. Operator Lagrange weights join the two, so each new input costs one sparse solve and the local coefficients of the training fields are never formed.

The local output reconstruction converges at close to the order of its polynomial degree on scattered nodes, is about as accurate between the nodes as the global output reconstruction (more accurate on six of ten benchmarks and less accurate on the other four) with systems that are smaller by about two orders of magnitude, is cheaper to set up than a global output reconstruction beyond a few thousand output points, with memory that grows linearly with their number, keeps the divergence and the periodic mismatch at rounding level, and, with the multiscale Mat\'ern kernel, approaches prescribed Kolmogorov statistics at the separations that the nodes resolve.

In the operator benchmarks, the test errors decrease roughly like $N^{-\tau}$ with $\tau$ between $0.4$ and $1.3$ in all but two degenerate cases, and the most accurate kernels are sparse on all benchmarks but one, with densities that mostly decrease as the training set grows. At the training sizes of our earlier study, these sparse models are at least as accurate as dense PPKM on every data set, and more accurate than the neural operators reported there by factors from $2$ to more than $10^6$. Over our range of training sizes, the storage of most models grows roughly linearly in $N$. The benchmarks with rank-one inputs, apart from Taylor--Green, are insensitive to the choice of kernel and augmentation. The benchmarks with multivariate inputs are not, and the best choice differs between them. Affine augmentation gives the largest gain for forced isotropic turbulence, a fitted kernel shape for turbulent species transport, and neither helps for merging vortices. Our recommendation is therefore to select these choices by validation and to report error, density, and storage together.

Lagrange evaluation removes the need to store the local coefficients of the training fields, which would take $n_s+n_P/d$ times (the stencil size plus the number of polynomials per velocity component) the storage of the training fields themselves. Its accuracy agrees with our rounding analysis. With our local output reconstruction, whose amplification factors are at most about $10^4$, it is as accurate as direct evaluation. With global output reconstructions, whose amplification factors reach $6\times10^9$, it is up to nearly three orders of magnitude more accurate.

Several extensions are natural. First, benchmarks with richer input families, and a comparison with dense operator regression and with neural operators on identical test sets and hardware, would test these conclusions beyond our ten benchmarks. Second, our local interpolants are divergence free on each stencil but are not coupled across stencils. Blending them with a partition of unity would give a continuous field, but the blending weights introduce divergence. The divergence-free partition-of-unity method of~\citet{drake2021partition} avoids this by blending potentials instead of velocities. Third, at a fixed density, the storage of the kernel matrix still grows quadratically with the number of training functions, which hierarchical or multilevel sparse factorizations could reduce. Finally, our rounding analysis covers the model without augmentation and evaluation through coefficients. Extending it to the affine correction and to evaluation before the output reconstruction would complete the analysis.

%% file: sections/ack.tex
\section*{Acknowledgements}
\label{sec:ack}
RS, ML, and VS were supported by the National Science Foundation under DMS 2505986, Cooperative Agreement 2421782, and the Simons Foundation grant MPS-AI-00010515 awarded to the NSF-Simons AI Institute for Cosmic Origins — CosmicAI, \url{https://www.cosmicai.org}. VS was additionally supported by the Air Force Office of Scientific Research (AFOSR) under LRIR award FA9550-25-1-0042. HO acknowledges support from the Air Force Office of Scientific Research under MURI award number FOA-AFRL-AFOSR-2023-0004 (Mathematics of Digital Twins), the Department of Energy under award number DE-SC0023163 (SEA-CROGS: Scalable, Efficient, and Accelerated Causal Reasoning Operators, Graphs and Spikes for Earth and Embedded Systems) and the DoD Vannevar Bush Faculty Fellowship Program under ONR award number N00014-18-1-2363.

%% file: sections/appendix.tex
\FloatBarrier
\appendix
\section{Details of the local output reconstruction}
\label{app:implementation}

\subsection{Ordering of vector coefficients}\label{app:coefficient-ordering}
The coefficient vectors of~\eqref{eq:local_dfphs_system} are ordered by component, with all values of one component before those of the next, but the kernel evaluation at a query point $y_q^\star$ is naturally ordered by stencil node,
\begin{align}
\bPhi_{j(q),q,\mathrm{pt}}^\star =
\begin{bmatrix}
    \Phi_{\rm DF}(y_q^\star,\tilde y_{j(q),1}) & \cdots & \Phi_{\rm DF}(y_q^\star,\tilde y_{j(q),n_s})
\end{bmatrix}
\in \mathbb{R}^{d \times d n_s}.
\label{eq:local_dfphs_kernel_evaluation}
\end{align}
Let $\Pi_{d,n_s}$ be the permutation with $\Pi_{d,n_s}\bb_{\Phi,\mathrm{comp}}=\bb_{\Phi,\mathrm{pt}}$, where $\bb_{\Phi,\mathrm{pt}}$ groups the $d$ components of each node. The evaluation matrix in~\eqref{eq:local_dfphs_evaluation} is then
\begin{align}
\bPhi_{j(q),q,\mathrm{comp}}^\star
&=\bPhi_{j(q),q,\mathrm{pt}}^\star\Pi_{d,n_s}.
\label{eq:local_dfphs_component_order}
\end{align}

\subsection{Polynomial basis and local solves on the GPU}
\label{app:local-solves}
We first remove the dependencies among the candidates that do not depend on the stencil, once, before sampling. A singular value decomposition of the fixed map from candidates to polynomial coefficients gives an orthonormal basis $\Xi$ of its row space, shared by all stencils of one dimension and degree, with
\begin{equation}
G_j=G_j\Xi\Xi^{\mathsf T},\qquad \Xi^{\mathsf T}\Xi=I,
\label{eq:gpu-poly-rowspace}
\end{equation}
in exact arithmetic. Thus $G_j\Xi$ has the nonzero singular values of $G_j$, and we factor it by thin QR, $G_j\Xi=\widetilde Q_j\widetilde R_j$. On most stencils, $\widetilde R_j$ is far from singular, and the polynomial basis is
\begin{equation}
\bP_j(y)=G_j(y)\Xi\widetilde R_j^{-1},
\label{eq:gpu-poly-changebasis}
\end{equation}
whose values at the stencil nodes are the orthonormal columns of $\widetilde Q_j$, as in~\eqref{eq:dfpoly_change_basis}. We use~\eqref{eq:gpu-poly-changebasis} when the eigenvalues of $\widetilde R_j^{\mathsf T}\widetilde R_j$ certify that its smallest singular value is at least $10^{-6}$ times its largest, far above the rank threshold of $10^{-12}$, since these eigenvalues carry absolute errors of the order of the unit roundoff times the largest one. On the remaining stencils, the column-pivoted QR factorization $\widetilde R_j\Pi_j=\widehat Q_j\widehat R_j$ determines the rank as in~\eqref{eq:dfpoly_cpqr}, and the basis is~\eqref{eq:dfpoly_change_basis} with $G_{j,I}$ replaced by the retained columns of $G_j\Xi\Pi_j$. Both factorizations are batched over the stencils, and the columns of different stencils may differ, since every system is padded to the same width (below). A singular value decomposition on every stencil would decide the rank as well, but its batched form is slow on the GPU for these matrices ($85\times85$ in 3D at degree four, about $10$ ms each). On the species-transport node set, where most stencils are rank deficient, both rank decisions agree on every stencil, and the output reconstructions between the nodes agree to $10^{-12}$.

To keep the batches of fixed size, every system carries all $20$ (2D) or $85$ (3D) polynomial columns at degree four. The columns of discarded modes are set to zero and given a unit diagonal, which decouples their coefficients without changing the interpolant. Every system therefore has the padded width $62$ (2D) or $262$ (3D), and our costs and storage include the padding.

If the LU solution of a local system is not finite, we use the pseudoinverse of that system, also for its evaluation weights. The resulting field is still divergence free, but it need not interpolate or reproduce the full polynomial space. The pseudoinverse was never needed. The LU solutions are finite on every node stencil of all ten operator benchmarks, including the rank-deficient stencils of Table~\ref{tab:spatial-ranks}, and in the manufactured tests of Section~\ref{sec:spatial-convergence} every stencil kept the full polynomial space.

\begin{table}[pos=tbp]
\centering\small
\caption{Retained divergence-free polynomial ranks of the local systems on the nodes of each benchmark, at degree four and with periodic images, from the construction above. The full ranks are 20 in 2D and 85 in 3D. Taylor--Green vortices (spacetime) has the stencils of Taylor--Green vortices (space).}
\label{tab:spatial-ranks}
\begin{tabular}{lrrr}
\toprule
Data set & Stencils & Retained rank & Truncated stencils \\
\midrule
Turbulent species transport & 7000 & 45--85 & 4338 \\
Forced isotropic turbulence & 7000 & 85 & 0 \\
Buoyancy-driven cavity & 5000 & 20 & 0 \\
Lid-driven cavity & 1000 & 20 & 0 \\
Cylinder flow (no shedding) & 1000 & 20 & 0 \\
Cylinder flow (vortex shedding) & 1000 & 20 & 0 \\
Backward-facing step & 1000 & 20 & 0 \\
Taylor--Green vortices (space) & 500 & 19--20 & 1 \\
Merging vortices & 500 & 19--20 & 1 \\
\bottomrule
\end{tabular}
\end{table}

\subsection{Evaluation away from the nodes}
\label{app:stencil-assignment}
The operator benchmarks evaluate predictions at the nodes. To evaluate at another point, our GPU implementation selects a stencil centered at that point and builds its basis there. The comparisons of Table~\ref{tab:spatial-reconstruction-controls} use this rule. The manufactured tests of Section~\ref{sec:spatial-convergence} instead use the stencil of the nearest node, and their results between the nodes refer to that rule.

\subsection{Periodic stencils}
\label{app:periodic-correction}
The Taylor--Green flows and the merging vortices have period $2\pi$ in each direction. Forced isotropic turbulence has period $2\pi$ on the box $[0,2\pi]^3$, which we supply explicitly, since the cell centers of its grid do not reach the faces of the box.

\subsection{Output reconstruction between the nodes}
\label{app:spatial-supporting}
\label{app:multiscale-spatial}
Table~\ref{tab:spatial-reconstruction-controls} reconstructs the test predictions at the nodes of the model with the smallest validation error among the four of Section~\ref{sec:protocol}, at $N=10000$, so that the operator regression is that of Section~\ref{sec:regression-results} and only the output reconstruction changes. The evaluation points of Table~\ref{tab:spatial-reconstruction-controls} are all mesh or grid points of the data set that are not among the $m$ output points. The local output reconstruction selects a stencil centered at each of them (Appendix~\ref{app:stencil-assignment}). The global output reconstruction is the divergence-free interpolant of~\citet{sharma2026fluids} with the scalar kernel $3M_{5/2}(\varepsilon r)$ on all nodes, with the shape parameters of that work ($\varepsilon=0.3$ for the lid-driven cavity, $0.0636$ for both cylinder flows, $0.06$ for the backward-facing step, $0.8$ for the buoyancy-driven cavity, $0.02$ for merging vortices and Taylor--Green vortices (space), and $1.5$ for Taylor--Green vortices (spacetime)), applied at each sampled time for Taylor--Green vortices (spacetime). For the periodic data sets, it uses one representative of each set of periodically equivalent nodes.

We select the scale factor $c_\sigma$ of the multiscale kernel on the $100$ validation fields, by reconstructing their exact nodal values and measuring the error between the nodes. For turbulent species transport, $c_\sigma=2$, $4$, $8$, and $16$ give errors of $0.166$, $0.158$, $0.150$, and $0.147$, and the PHS gives $0.169$. For forced isotropic turbulence, they give $0.0228$, $0.0206$, $0.0178$, and $0.0165$, and the PHS gives $0.0227$. Both select $c_\sigma=16$, which the operator benchmarks use. For turbulent species transport, the error between the nodes remains much larger than at the nodes, so an accurate prediction at the nodes does not guarantee one everywhere.

We also compare the second-order structure functions
\begin{align}
S_2(r)=\left\langle\|v(x)-v(y)\|_2^2\right\rangle_{\|x-y\|\approx r}
\label{eq:multiscale_structure_function}
\end{align}
of the output reconstructions and of the reference fields on the same pairs of points between the nodes ($256$ anchors with up to four partners in each of $12$ logarithmic distance bins, with periodic distances for forced isotropic turbulence), averaged over the $100$ test fields (Figure~\ref{fig:multiscale-structure-functions}). The relative discrepancy of the binned structure function is $1.7\%$ for the PHS and $2.6\%$ for the multiscale kernel for turbulent species transport, and $0.40\%$ and $0.20\%$ for forced isotropic turbulence. The multiscale kernel thus lowers the pointwise error between the nodes for both flows, but improves the binned increments only for forced isotropic turbulence. Since the reference fields come from Reynolds-averaged (RANS) and coarse large-eddy (LES) simulations, this comparison measures agreement with the reference, not with physical turbulence.

\begin{figure}[pos=tbp]
\centering
\input{sections/fig_structure_functions}
\caption{Second-order structure functions between the nodes of the reference fields and of the local PHS and multiscale Mat\'ern output reconstructions of the test predictions at $N=10000$.}
\label{fig:multiscale-structure-functions}
\end{figure}

\section{Details of the operator regression}
\label{app:operator-details}

\subsection{Normalization and principal directions}
\label{app:residual-gate}
At every $N$, we compute the normalization and the principal directions from the first $N$ training inputs alone. Each coordinate is divided by its standard deviation, or set to zero if that vanishes, and the numerical rank uses the threshold $\max\{10^{-10},2u_{\rm mach}\max(N,nd_a)\}$ relative to the largest singular value, with $u_{\rm mach}$ the unit roundoff. The four models at one $N$ share the normalization, the principal directions, and the gate~\eqref{eq:residual-gate}, and the two models with augmentation share the affine matrix.

\subsection{Fitting and validation}
\label{app:gpu-selection}
The fitted-shape kernel starts from the radius $\varrho_0$ that gives $25\%$ density and from the regularization $\theta=10^{-6}$, and takes $60$ steps of the Adam optimizer~\citep{kingma2015adam} with learning rate $0.03$ for each of $c_d=0$ and $c_d=1$. Its log-radius is confined to $[\log(\varrho_0/8),\log(4\varrho_0)]$ and its regularization to $\theta=e^{t_\theta}+10^{-12}$ with $t_\theta\in[\log(10^{-12}),\log(0.1)]$. The anisotropy uses $n_z-1$ free parameters $b_k\in[-2,2]$, completed by a zero, through $\tilde\upsilon_k=\log(2)\tanh(b_k)$ and $\upsilon_k=\tilde\upsilon_k-n_z^{-1}\sum_{k'=1}^{n_z}\tilde\upsilon_{k'}$. The residual scale lies in $[1/4,4]$. It is inactive for the seven benchmarks with $n_z=1$ and stays at one for turbulent species transport.

Validation evaluates the radii of the five density budgets of Section~\ref{sec:kernel-training}. With $\varrho_{\max}$ the largest of these, it also searches radii in $[\varrho_{\max}/8,\varrho_{\max}]$ from four starts (radii $\varrho_{\max}/2$ and $\varrho_{\max}/4$, regularizations $10^{-6}$ and $10^{-3}$), each with $35$ Adam steps on all $100$ validation fields. A fitted-shape model repeats this for both of its fitted shapes, so its search is twice as long. We keep the model with the smallest validation error among those with density at most $0.90$, and break ties by density.

\subsection{Work, storage, and timing}
\label{app:cost-details}
We assemble the kernel matrix by visiting all pairs of training inputs in blocks of bounded size and keeping those within the support. This keeps the memory bounded, but the work of the search is still quadratic in $N$, since compact support alone does not give a faster neighbor search. The likelihood and its gradient use the factors through
\begin{equation}
\Pi_U=L_K^{-\mathsf T}(I-Q_PQ_P^{\mathsf T})L_K^{-1},
\end{equation}
with the entries of the inverse needed for the trace derivatives computed only on the support. When the outputs are wide, we replace $E$ by the triangular factor of a thin QR factorization $E^{\mathsf T}=Q_ER_E$, since $EE^{\mathsf T}=R_E^{\mathsf T}R_E$.

The storage we report is the sum of the stored model arrays (including the training fields), the sparse matrix arrays, cuDSS's estimate of the memory of its factors, and the solve buffers. It is computed from the stored arrays instead of measured at its peak, and it excludes compiled code and temporary workspace. A kernel with $\mathcal N_K$ nonzeros in compressed sparse row form takes $12\mathcal N_K+4(N+1)$ bytes.

We report the setup of the output reconstruction, which all four models share, separately from the fitting and validation of each model. The warm prediction time of Figure~\ref{fig:gpu-warm-main} is the median of $11$ repetitions of predicting all $100$ test fields after the model is set up and compiled. The four models of a benchmark run in one process and share compiled code, so small differences between their times are not significant.

\subsection{Batches and checks}
\label{app:query-batches}
The $100$ test inputs are solved in four batches of $32$ right-hand sides, the last one padded by repeated inputs. Repeated solves replay CUDA graphs (recorded sequences of GPU kernel launches). After every factorization, we check the residual of the solves, the reproduction of affine functions, and the agreement of the batched and compiled predictions with reference predictions. Every reported model passed these checks.

\section{Proofs}
\label{app:proofs}

\subsection{Completeness of the polynomial candidates}
\label{app:polynomial-span}
In 2D, every divergence-free vector polynomial of degree at most $\varpi$ has a polynomial stream function of degree at most $\varpi+1$, so the rotated gradients of the scalar basis span the space. In 3D, split a divergence-free polynomial into homogeneous parts $v_j$ of degree $j$, each of which is divergence free. The identity
\begin{equation}
\nabla_\xi\times(v_j\times\xi)=(j+2)v_j
\end{equation}
shows that $v_j$ is the curl of the potential $(v_j\times\xi)/(j+2)$ of degree $j+1$, and expanding the components of this potential in the scalar basis gives a combination of the candidates~\eqref{eq:dfpoly_candidates}. For the dimension, the divergence maps the vector polynomials of degree at most $\varpi$ onto the scalar polynomials of degree at most $\varpi-1$ (integrate in one coordinate to find a preimage), and rank--nullity gives $d\binom{\varpi+d}{d}-\binom{\varpi+d-1}{d}$.

\subsection{Positive definiteness of the kernel}
\label{app:dimension-safe-proof}
\begin{proof}[Proof of Proposition~\ref{prop:dimension-safe-pd}]
The scaled principal coordinates $D_\parallel z$ lie in $\mathbb R^{n_z}$ with $n_z\leq3$, so the Wendland factor is a positive semidefinite kernel. The Mat\'ern factor is positive semidefinite on the residual space in every finite dimension. The Gram matrix of their product is the Schur product of two positive semidefinite matrices, and multiplying it on both sides by the diagonal matrix of gates keeps it positive semidefinite.

For strictness, group distinct standardized inputs by their distinct principal coordinates $z^{(1)},\ldots,z^{(J)}$, and let $W$ be the Wendland Gram matrix on these, which is positive definite. Let $\Psi_\perp$ be a feature map of the residual Mat\'ern kernel, and for coefficients $c_i$ let
\begin{equation}
f_\iota=\sum_{i:z_i=z^{(\iota)}}c_i \eta(\ba_i)\Psi_\perp(\zeta_i).
\end{equation}
The quadratic form of the product Gram matrix is $\sum_{\iota,\iota'}W_{\iota\iota'}\langle f_\iota,f_{\iota'}\rangle\geq\lambda_{\min}(W)\sum_\iota\|f_\iota\|^2$. Inputs in the same group differ in their residuals, so strict positive definiteness of the Mat\'ern kernel gives $f_\iota\ne0$ whenever a coefficient of the group is nonzero and the gates are positive. (If the residual space is trivial, distinct inputs lie in distinct groups.) Finally, $c^{\mathsf T}(K_{UU}+\theta I)c\geq\theta\|c\|_2^2>0$ for every $c\ne0$ and $\theta>0$.
\end{proof}
The proof does not assume that the inputs lie in a low-dimensional affine subspace. Note that standardization can make two inputs equal if they differ only in coordinates with zero training variance. The regularization keeps the matrix positive definite in that case.

\subsection{Variational form of the operator regression}
\label{app:operator-variational}
Let $f_{\mathcal H}\in\mathcal H$, and split $f_{\mathcal H}=f_\parallel+f_\perp$, where $f_\parallel$ lies in the span of the functions $\lambda(\cdot,\ba_i)\mathbf e_k$ and $f_\perp$ is orthogonal to it. By the reproducing property, $f_\perp(\ba_i)=0$ for every $i$, so $f_\perp$ leaves the data term of~\eqref{eq:operator-variational} unchanged and only increases the norm. A minimizer therefore has the form $f_{\mathcal H}=\sum_i\lambda(\cdot,\ba_i)\bc_i$, with $\|f_{\mathcal H}\|_{\mathcal H}^2=\operatorname{tr}(C^{\mathsf T}K_{UU}C)$, where $C$ has the rows $\bc_i^{\mathsf T}$. With $B$ the matrix with rows $\bb_i^{\mathsf T}$ and $P=P_U$, the objective becomes the convex quadratic
\begin{equation}
J(C,\boldsymbol\beta)=\|K_{UU}C+P\boldsymbol\beta-B\|_F^2+\theta\operatorname{tr}(C^{\mathsf T}K_{UU}C),
\end{equation}
with gradients
\begin{equation}
\nabla_CJ=2K_{UU}\bigl(K_{UU}C+P\boldsymbol\beta-B+\theta C\bigr),\qquad
\nabla_{\boldsymbol\beta}J=2P^{\mathsf T}\bigl(K_{UU}C+P\boldsymbol\beta-B\bigr).
\end{equation}
The conditions~\eqref{eq:operator-interpolant-conditions} state that $K_{UU}C+P\boldsymbol\beta-B=-\theta C$ and $P^{\mathsf T}C=0$. Both gradients therefore vanish, and every solution of~\eqref{eq:operator-interpolant-conditions} minimizes $J$, and hence~\eqref{eq:operator-variational}. Without augmentation, the same argument applies with $P$ and $\boldsymbol\beta$ removed.

\subsection{Block elimination and reproduction}
\label{app:augmented-lagrange-proof}
\begin{proof}[Proof of Proposition~\ref{prop:operator-reproduction}]
Write $K=K_{UU}^{\theta}=L_KL_K^{\mathsf T}$ and $P=P_U$. Since $P$ has full column rank, $L_K^{-1}P=Q_PR_P$ with $R_P$ invertible, and
\begin{equation}
\mathcal E_UP=R_P^{-1}Q_P^{\mathsf T}L_K^{-1}P=I,\qquad
K\mathcal E_U^{\mathsf T}=PR_P^{-1}R_P^{-\mathsf T}.
\end{equation}
Let $\ell_0=K^{-1}k_U(\ba)$ and $\tilde p=p(\ba)-P^{\mathsf T}\ell_0$. Then $\ell=\ell_0+\mathcal E_U^{\mathsf T}\tilde p$ satisfies
\begin{equation}
P^{\mathsf T}\ell=P^{\mathsf T}\ell_0+\tilde p=p(\ba),\qquad
K\ell=k_U(\ba)+PR_P^{-1}R_P^{-\mathsf T}\tilde p,
\end{equation}
so~\eqref{eq:augmented-lagrange-system} holds with $\mu=-R_P^{-1}R_P^{-\mathsf T}\tilde p$. If $(v,\omega)$ solves the homogeneous system, then $P^{\mathsf T}v=0$ and $v^{\mathsf T}Kv=0$, so $v=0$, and full column rank gives $\omega=0$. The solution is unique.

For targets in the space of affine functions, $B_j=P_UT_j$, we get $B_j^{\mathsf T}\ell=T_j^{\mathsf T}p(\ba)$, which is reproduction. The coefficients of~\eqref{eq:operator-augmented-primal} are
\begin{equation}
C_j=\Pi_U B_j,\qquad \beta_j=\mathcal E_UB_j,\qquad
k_U(\ba)^{\mathsf T}C_j+p(\ba)^{\mathsf T}\beta_j
=\ell(\ba)^{\mathsf T}B_j,
\end{equation}
where the last identity follows by transposing the augmented system. Direct and Lagrange evaluation thus agree in exact arithmetic. Finally, linearity of $\chi$ and $\mathcal D$ gives~\eqref{eq:operator-property-preservation}. For a nonzero common constraint $\mathcal D\chi(\bb_i)=f$, the affine model also gives $\mathcal D\chi(\sum_i\ell_i\bb_i)=f$, since its constant column enforces $\sum_i\ell_i=1$.
\end{proof}
Without augmentation, $\ell=(K_{UU}^{\theta})^{-1}k_U(\ba)$. If $\theta=0$, then $k_U(\ba_i)$ is the $i$th column of $K_{UU}$, so $\ell(\ba_i)=\mathbf e_i$ and the weights are cardinal. With $\theta>0$, the vector $k_U$ excludes the regularization, and the weights are no longer cardinal.

\subsection{Rounding errors of direct and Lagrange evaluation}
\label{app:lagrange-proof}
\begin{proof}[Proof of Theorem~\ref{prop:lagrange-error}]
The first residual equation gives $\widehat C=(K_{UU}^{\theta})^{-1}(B-R_{\rm dir})$, and multiplying by $K_{U_*U}$ and $(\bA^\star)^{\mathsf T}$ gives~\eqref{eq:direct-error-identity}. The second gives $\widehat L=(K_{U_*U}-R_{\rm Lag})(K_{UU}^{\theta})^{-1}$, and hence~\eqref{eq:lagrange-error-identity}.

For the rounding, let $Z=\operatorname{fl}(M_1M_2)$ with $(M_1,M_2)=(K_{U_*U},\widehat C)$ or $(\widehat L,B)$. The standard bound for matrix products~\citep[Chapter~3]{higham2002accuracy} gives $|Z-M_1M_2|\leq\gamma_N|M_1||M_2|$, and the product with $(\bA^\star)^{\mathsf T}$ satisfies $|\operatorname{fl}(Z(\bA^\star)^{\mathsf T})-Z(\bA^\star)^{\mathsf T}|\leq\gamma_{r+d}|Z||(\bA^\star)^{\mathsf T}|$, also when it is computed in component blocks (dot products of length at most $r$ followed by at most $d-1$ additions). Together,
\begin{equation}
\big|\operatorname{fl}(\operatorname{fl}(M_1M_2)(\bA^\star)^{\mathsf T})-M_1M_2(\bA^\star)^{\mathsf T}\big|
\leq \gamma_{\rm mult} |M_1||M_2||(\bA^\star)^{\mathsf T}|.
\label{eq:two-product-rounding}
\end{equation}
The triangle inequality and submultiplicativity of the Frobenius norm then give~\eqref{eq:direct-full-error-bound}--\eqref{eq:lagrange-full-error-bound}.
\end{proof}

\section{Details of the numerical experiments}
\label{app:additional-results}

\subsection{Benchmarks}
\label{app:benchmark-configurations}
The ten data sets are those of our previous work~\citep{sharma2026fluids}, where their simulations are described in full. Each figure shows the first simulation of a data set. The color fields interpolate the simulation samples linearly (Appendix~\ref{app:reference-visualization}).

\subsubsection{Flow past a cylinder}
\label{sec:results_cylinder}
\label{app:cylinder-no-shedding}
Two-dimensional flow passes a cylinder of radius $0.8$ centered at $(1.5,7)$ in the channel $\Omega_a=\Omega_v=[0,20]\times[0,14]$, with an inflow on the left, freestream conditions at the top and bottom, no slip on the cylinder, and zero pressure at the outlet. The constant initial velocity varies within the vortex-shedding regime in one data set and below the onset of vortex shedding in the other. We learn $\G:\bu(x,0)\rightarrow\bu(y,10)$ on $m=1000$ approximate Fekete points (well-spaced points selected from the mesh as in~\citet{sharma2026fluids}) of the $9520$ mesh points.

\begin{figure}[pos=tbp]
\centering
\includegraphics{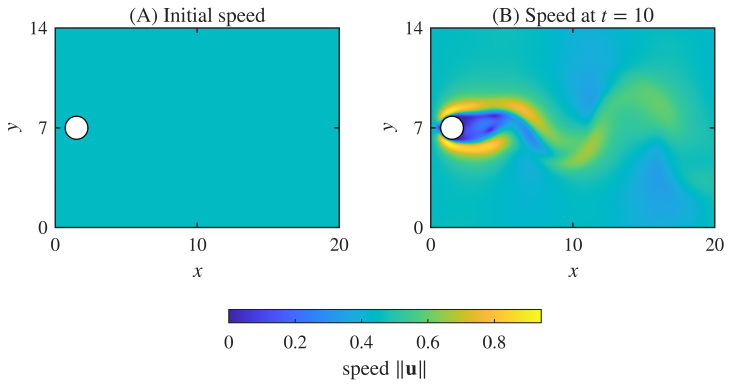}
\caption{Cylinder flow. (A) Initial speed. (B) Speed at $t=10$, on the same color scale.}
\label{fig:cylinder_problem}
\end{figure}

\subsubsection{Taylor--Green vortices}
\label{sec:results_taylor_green}
\label{app:taylor-green}
The analytical Taylor--Green vortices decay on the periodic domain $\Omega_a=\Omega_v=[0,2\pi]^2$, sampled at $500$ Fekete points. For the spatial data set, we learn $\G:\bu(x,0)\rightarrow\bu(y,1)$. For the spacetime data set, we learn the map from $\bu(x,0)$ to the velocity on $\Omega\times[0.7,1]$, with snapshots at $\mathcal T=\{0.7,0.8,0.9,1.0\}$ and cubic interpolation in time (Section~\ref{sec:spacetime}). Both use the amplitude normalization~\eqref{eq:tg-response-transform}.

\begin{figure}[pos=tbp]
\centering
\includegraphics{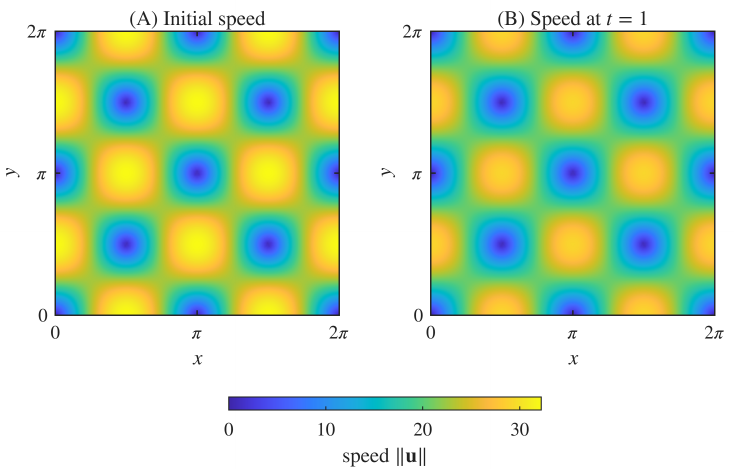}
\caption{Taylor--Green vortices. (A) Initial speed. (B) Speed at $t=1$, on the same color scale.}
\label{fig:taylor_green_problem}
\end{figure}

\subsubsection{Lid-driven cavity}
\label{app:lid-cavity}
In the cavity $\Omega_a=\Omega_v=[0,1]^2$, the speed of the lid is drawn from a truncated normal distribution, and the fluid starts at rest, so the initial velocity is nonzero only on the lid. We learn $\G:\bu(x,0)\rightarrow\bu(y,5)$ on $m=1000$ Fekete points selected from the $9566$ mesh points.
\begin{figure}[pos=tbp]
\centering
\includegraphics{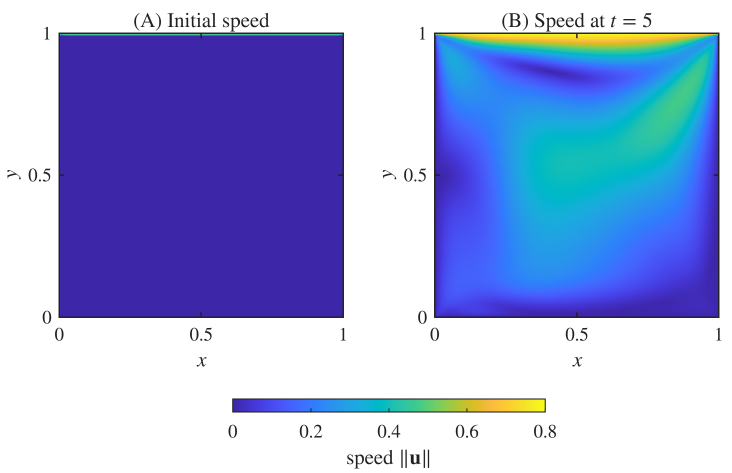}
\caption{Lid-driven cavity. (A) Initial speed. (B) Speed at $t=5$, on the same color scale.}
\label{fig:appendix-lid-cavity-problem}
\end{figure}

\subsubsection{Backward-facing step}
\label{app:backward-step}
A parabolic inflow, scaled by a factor drawn from a truncated normal distribution, drives the flow over a step into the channel $\Omega_a=\Omega_v=[0,15]\times[-0.5,0.5]$. We learn $\G:\bu(x,0)\rightarrow\bu(y,5)$ on $m=1000$ Fekete points selected from the $7359$ mesh points. The input has only $11$ coordinates, the samples of the inflow profile.
\begin{figure}[pos=tbp]
\centering
\includegraphics{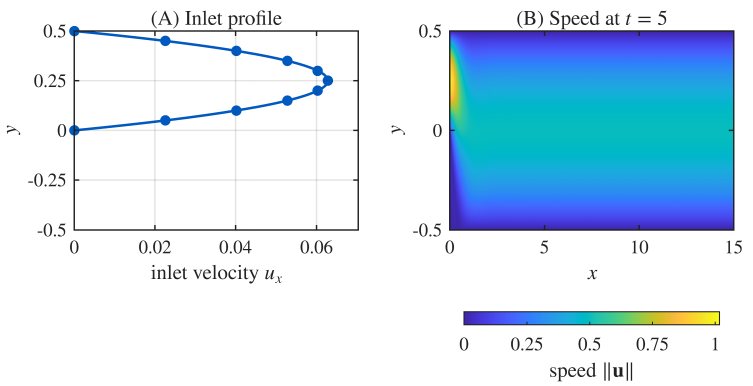}
\caption{Backward-facing step. (A) Inflow profile. (B) Speed at $t=5$, with the transverse direction expanded.}
\label{fig:appendix-backward-step-problem}
\end{figure}

\subsubsection{Buoyancy-driven cavity}
\label{app:buoyancy-cavity}
In the cavity $\Omega_a=\Omega_v=[0,1]^2$, an initial temperature field $T(x,0)$ drives the flow by buoyancy. We learn the map from $T(x,0)$ to the velocity at $t=5$ on $m=5000$ Fekete points selected from the $10514$ mesh points.
\begin{figure}[pos=tbp]
\centering
\includegraphics{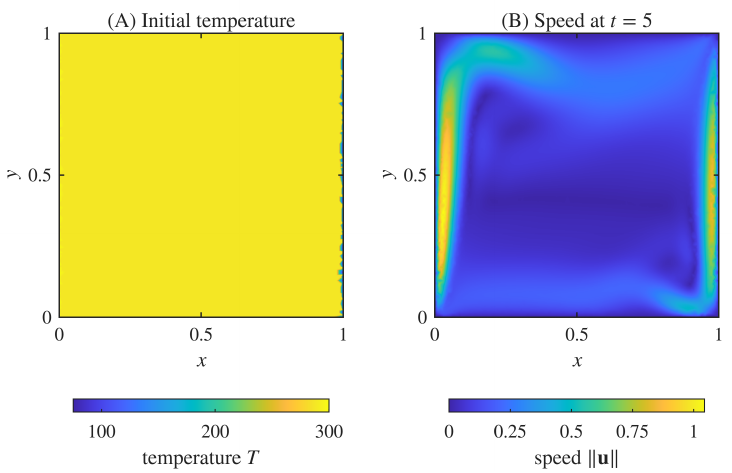}
\caption{Buoyancy-driven cavity. (A) Initial temperature. (B) Speed at $t=5$.}
\label{fig:appendix-buoyancy-problem}
\end{figure}

\subsubsection{Merging vortices}
\label{app:merging-vortices}
Two vortices merge on the periodic domain $\Omega_a=\Omega_v=[0,2\pi]^2$, and the width of their cores varies across the data set. We learn $\G:\omega(x,0)\rightarrow\bu(y,0.4)$ from the initial vorticity on $500$ Fekete points selected from the $7447$ mesh points.
\begin{figure}[pos=tbp]
\centering
\includegraphics{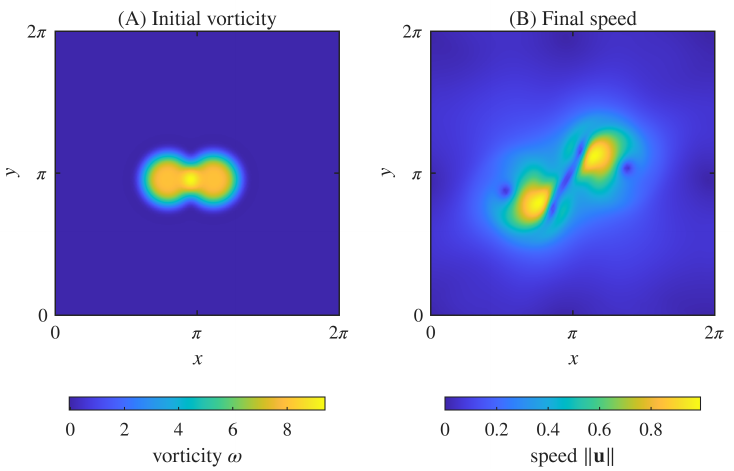}
\caption{Merging vortices. (A) Initial vorticity. (B) Speed at $t=0.4$.}
\label{fig:appendix-merging-vortices-problem}
\end{figure}

\subsubsection{Turbulent species transport}
\label{sec:results_species}
Air and methane mix turbulently in a static Kenics mixer, whose three blades twist along the $z$ axis, at Reynolds numbers between $10^5$ and $10^6$. We learn the map from the inlet velocity ($1388$ samples) to the velocity at $t=0.5$ on $7000$ Fekete points selected from $11388$ locations.

\begin{figure}[tbp]
    \centering
    \begin{subfigure}[t]{0.37\textwidth}
        \centering
        \caption{Inlet velocity}
        \includegraphics[width=\linewidth]{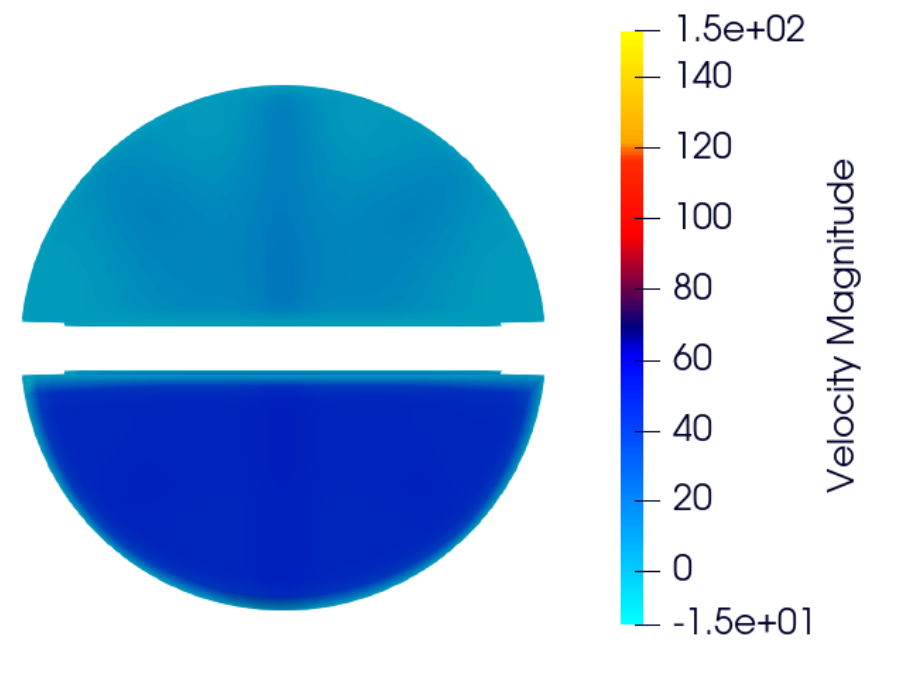}
        \label{fig:species_inlet}
    \end{subfigure}
    \hfill
    \begin{subfigure}[t]{0.65\textwidth}
        \centering
        \caption{Speed at $t=0.5$}
        \includegraphics[width=\linewidth]{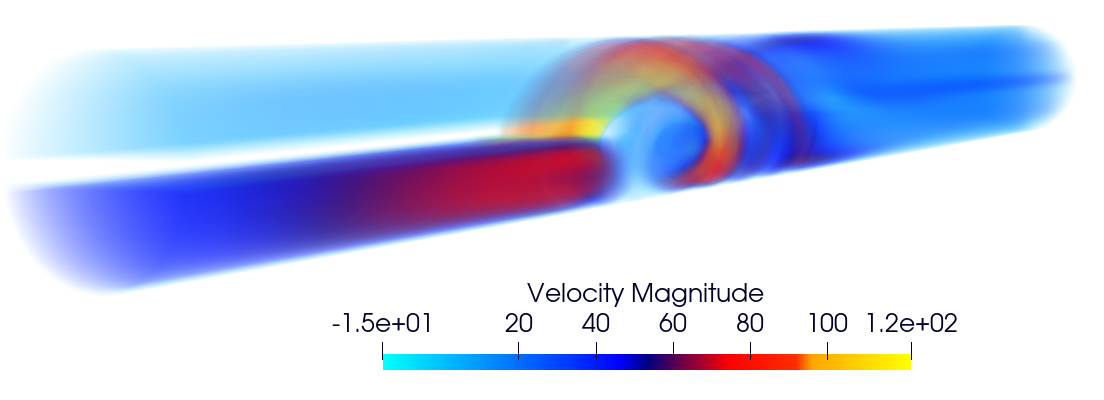}
        \label{fig:species_speed}
    \end{subfigure}
    \caption{Turbulent species transport. (A) Inlet velocity on the plane $z=0$. (B) Speed at $t=0.5$ on the whole domain.}
    \label{fig:species_problem}
\end{figure}

\subsubsection{Forced homogeneous isotropic turbulence}
\label{sec:results_forced_turbulence}
Forced isotropic turbulence evolves in the periodic box $\Omega_a=\Omega_v=[0,2\pi]^3$, simulated by large-eddy simulation with the Smagorinsky model on $32^3$ cells, with $\Delta t=0.002$, final time $T=0.5$, and Reynolds number $6\times10^6$. Every simulation starts from the divergence-free field
\begin{align}
u_x(x,0)&=\sin(2x_3)+\cos(2x_2)+\tfrac12\bigl(\sin(3x_3)+\cos(3x_2)\bigr)+\tfrac14\bigl(\sin(5x_3)+\cos(5x_2)\bigr),\notag\\
u_y(x,0)&=\sin(2x_1)+\cos(2x_3)+\tfrac12\bigl(\sin(3x_1)+\cos(3x_3)\bigr)+\tfrac14\bigl(\sin(5x_1)+\cos(5x_3)\bigr),\notag\\
u_z(x,0)&=\sin(2x_2)+\cos(2x_1)+\tfrac12\bigl(\sin(3x_2)+\cos(3x_1)\bigr)+\tfrac14\bigl(\sin(5x_2)+\cos(5x_1)\bigr),
\end{align}
and is driven by the steady divergence-free forcing
\begin{equation}
\mathbf f(x)=\begin{bmatrix}
c_1\bigl(\sin(q_1x_3)+\cos(q_2x_2)\bigr)\\
c_2\bigl(\sin(q_1x_1)+\cos(q_2x_3)\bigr)\\
c_3\bigl(\sin(q_1x_2)+\cos(q_2x_1)\bigr)
\end{bmatrix},
\end{equation}
whose amplitudes $c_1,c_2,c_3$ are drawn from a normal distribution with standard deviation $0.15$ truncated to $[-0.3,0.3]$, and whose distinct wavenumbers $q_1,q_2$ are drawn from $\{1,2,3\}$. We learn the map from the sampled forcing to the velocity at $t=0.5$ on $7000$ Fekete points of the grid.

\begin{figure}[tbp]
    \centering
    \begin{subfigure}[t]{0.46\textwidth}
        \centering
        \caption{Forcing}
        \includegraphics[width=\linewidth]{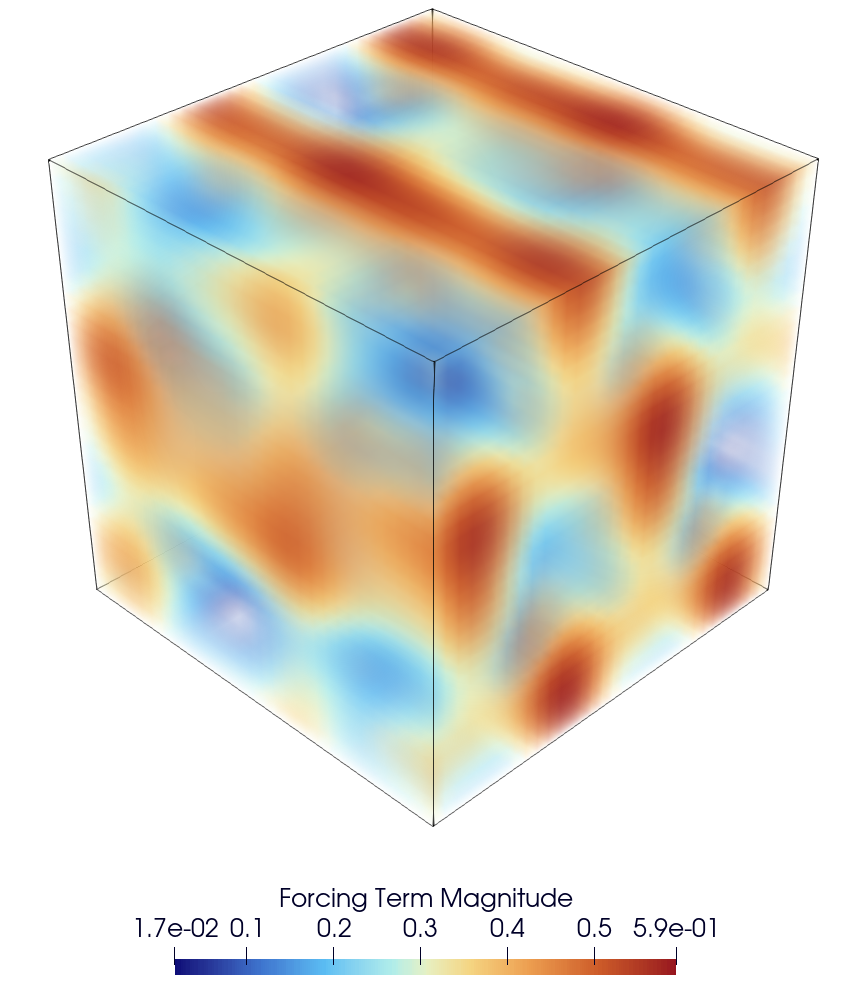}
        \label{fig:forced_turbulence_forcing}
    \end{subfigure}
    \hfill
    \begin{subfigure}[t]{0.46\textwidth}
        \centering
        \caption{Speed at $t=0.5$}
        \includegraphics[width=\linewidth]{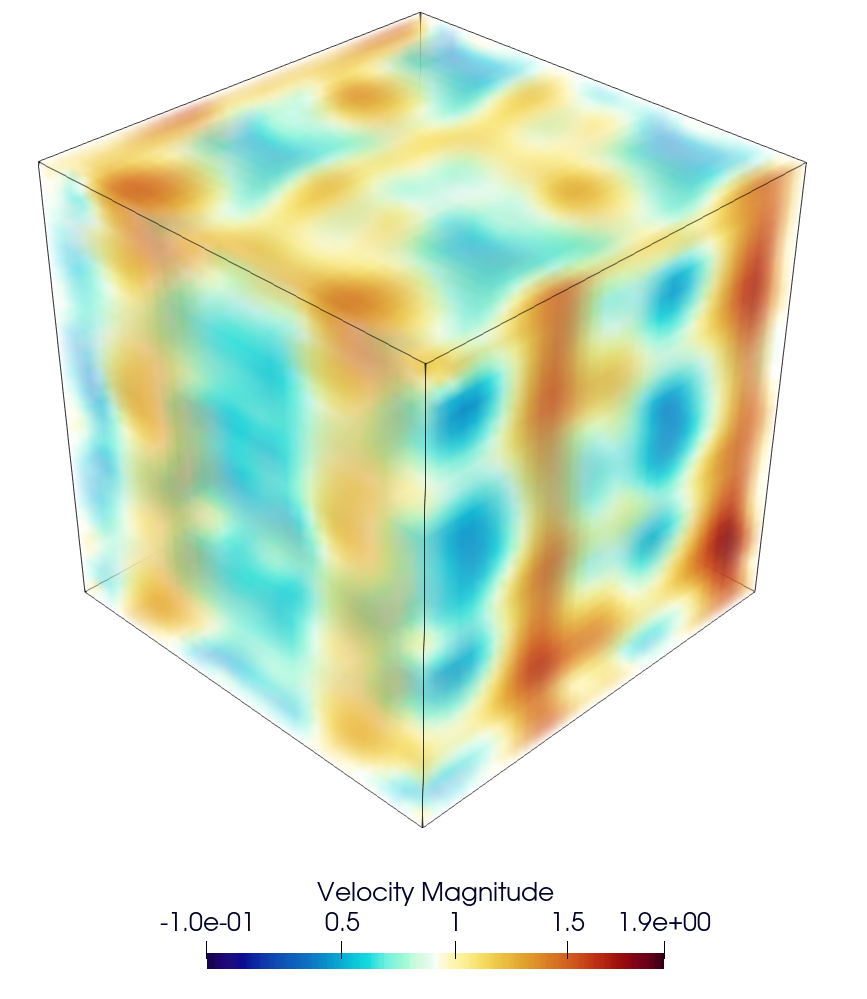}
        \label{fig:forced_turbulence_speed}
    \end{subfigure}
    \caption{Forced isotropic turbulence. (A) Forcing magnitude. (B) Speed at $t=0.5$.}
    \label{fig:forced_turbulence_problem}
\end{figure}

\input{sections/spatial_convergence_setup}

\subsection{Field figures}
\label{app:reference-visualization}
The field figures show the first simulation of each data set and are drawn by the MATLAB scripts in our repository. In two dimensions, the speed $\|\bu\|_2$ is interpolated linearly from the samples. Panels of the same quantity share a color scale, and the transverse direction of the backward-facing step is expanded.

\subsection{Direct and Lagrange evaluation}
\label{app:lagrange-numerics}
Table~\ref{tab:lagrange-all} evaluates seven unaugmented operator models for each data set and each $N\in\{500,1000,5000,10000\}$: the fixed-shape kernel at the radii of the five prescribed densities of Section~\ref{sec:kernel-training} (from $10\%$ to $90\%$, with $\theta=10^{-6}$) and at its validated radius and regularization, and the fitted-shape kernel at its validated parameters. We rebuild each model from its selected parameters and check that its Lagrange predictions at the nodes agree with those of Section~\ref{sec:regression-results} (largest relative difference $1.0\times10^{-10}$). Each model is then evaluated through its coefficients and through its Lagrange weights, with the same normalization, factorization (cuDSS with nested dissection), support, and regularization, and with the same spatial basis. With the local output reconstruction, direct evaluation solves the kernel system for the coefficients of all stencils, in blocks of stencils, and Lagrange evaluation applies the weights to the same coefficients. The global output reconstruction is that of Appendix~\ref{app:spatial-supporting}, on the nodes. The test errors are measured at the nodes on the $100$ test fields.

\input{sections/lagrange_matched_control_table}

Table~\ref{tab:lagrange-matched-control} follows the validated fixed-shape model of the lid-driven cavity with global output reconstruction as $N$ grows. The Frobenius norm of the spatial coefficients of the training fields grows from $1.6\times10^{9}$ to $7.1\times10^{9}$, which shows the cancellation that Theorem~\ref{prop:lagrange-error} warns about, and the ratio of the direct to the Lagrange error grows from $1.3$ to $5.1$. Direct evaluation applied to the sampled velocities instead of the coefficients is as accurate as Lagrange evaluation, so the gain arises in the output reconstruction, through the large coefficients.

\subsection{Software}
\label{app:software}
Our implementation and the scripts of this paper are part of KernelPack-JAX (\url{https://github.com/VarShankar/kernelpack-jax}). The operator models of Section~\ref{sec:regression-results} were fitted with \nolinkurl{scripts/run_fastol_gpu_campaign.py}. The comparisons of Section~\ref{sec:spatial-ranks}, Section~\ref{sec:lagrange_accuracy}, and Appendices~\ref{app:spatial-supporting} and~\ref{app:lagrange-numerics} rebuild these models with the released code, whose predictions agree with those of the original fits to $1.0\times10^{-10}$ (Appendix~\ref{app:lagrange-numerics}), and run on the same GPU with \nolinkurl{scripts/fastol_gpu_evaluation_study.py}. The directory \texttt{FastOLPaper} lists the scripts that produce every result and holds the scripts that build the tables and figures from them.

%% file: sections/fig_structure_functions.tex
\begin{tikzpicture}\begin{loglogaxis}[width=0.46\linewidth, height=0.36\linewidth, title={Turbulent species transport}, xlabel={$r$}, ylabel={$S_2(r)$}, legend pos=south east, log ticks with fixed point, xtick={0.003,0.01,0.03,0.1}, ytick={200,300,500,800}, minor tick num=0, legend style={font=\scriptsize}, tick label style={font=\scriptsize}, label style={font=\small}]
\addplot[black, thick] coordinates {(0.00267945,233.525) (0.00367148,300.044) (0.00503081,373.901) (0.0068934,477.463) (0.0094456,576.747) (0.0129427,711.577) (0.0177346,688.478) (0.0243006,674.043) (0.0332976,694.515) (0.0456257,683.447) (0.0625181,623.608) (0.0856646,786.494)};\addlegendentry{Reference}
\addplot[blue, mark=o, mark size=1.2pt] coordinates {(0.00267945,209.174) (0.00367148,295.604) (0.00503081,376.822) (0.0068934,478.038) (0.0094456,584.913) (0.0129427,720.365) (0.0177346,698.742) (0.0243006,686.879) (0.0332976,702.295) (0.0456257,691.376) (0.0625181,628.157) (0.0856646,787.672)};\addlegendentry{Local PHS}
\addplot[red, mark=triangle, mark size=1.4pt] coordinates {(0.00267945,225.466) (0.00367148,311.019) (0.00503081,388.105) (0.0068934,493.681) (0.0094456,593.289) (0.0129427,726.676) (0.0177346,704.944) (0.0243006,694.085) (0.0332976,713.454) (0.0456257,696.787) (0.0625181,635.143) (0.0856646,805.083)};\addlegendentry{Local multiscale}
\end{loglogaxis}\end{tikzpicture}\hfill\begin{tikzpicture}\begin{loglogaxis}[width=0.46\linewidth, height=0.36\linewidth, title={Forced isotropic turbulence}, xlabel={$r$}, ylabel={$S_2(r)$}, legend pos=south east, log ticks with fixed point, xtick={0.7,1,2,3}, ytick={0.5,0.7,1,1.4}, minor tick num=0, legend style={font=\scriptsize}, tick label style={font=\scriptsize}, label style={font=\small}]
\addplot[black, thick] coordinates {(0.696751,0.513738) (0.79759,0.590113) (0.913024,0.724874) (1.04516,0.829919) (1.19643,1.00962) (1.36959,1.18385) (1.5678,1.28983) (1.79471,1.40097) (2.05445,1.42484) (2.35179,1.38284) (2.69216,1.34243) (3.08179,1.33433)};\addlegendentry{Reference}
\addplot[blue, mark=o, mark size=1.2pt] coordinates {(0.696751,0.50815) (0.79759,0.58541) (0.913024,0.720852) (1.04516,0.825244) (1.19643,1.00516) (1.36959,1.17946) (1.5678,1.28609) (1.79471,1.39754) (2.05445,1.41939) (2.35179,1.37825) (2.69216,1.33845) (3.08179,1.3293)};\addlegendentry{Local PHS}
\addplot[red, mark=triangle, mark size=1.4pt] coordinates {(0.696751,0.510563) (0.79759,0.587428) (0.913024,0.723012) (1.04516,0.827274) (1.19643,1.00722) (1.36959,1.18167) (1.5678,1.28841) (1.79471,1.39948) (2.05445,1.42218) (2.35179,1.38054) (2.69216,1.34077) (3.08179,1.33206)};\addlegendentry{Local multiscale}
\end{loglogaxis}\end{tikzpicture}

%% file: sections/spatial_convergence_setup.tex
\subsection{Manufactured tests of the output reconstruction}
\label{app:manufactured-spatial}

\subsubsection{Domains and fields}
The convergence tests of Section~\ref{sec:spatial-convergence} use the domains $\Omega_d=\{F_d>0\}$ with
\begin{align}
F_2(x,y)&=1-x^2-1.4y^2-0.15x^4 \notag\\
&\quad-0.10x^2y^2-0.20y^4+0.10x^3,\\
F_3(x,y,z)&=1-x^2-1.3y^2-1.7z^2-0.10(x^4+y^4+z^4) \notag\\
&\quad-0.08x^2y^2-0.06y^2z^2+0.10xyz.
\end{align}
In 2D, the field is $v=(\partial_y\psi,-\partial_x\psi)$ with
\begin{align}
\psi=F_2^2\bigl[\sin(\pi x)\sin(2\pi y)+0.2\cos(3\pi x)\sin(\pi y)\bigr],
\end{align}
and in 3D, $v=\nabla\times A$ with
\begin{align}
A=F_3^2\bigl(\sin(\pi y)\sin(2\pi z),\sin(\pi z)\sin(2\pi x),
\sin(\pi x)\sin(2\pi y)\bigr)^{\mathsf T}.
\end{align}
Both fields are divergence free and vanish on the boundary.

The node generator of KernelPack-JAX~\citep{cattaneo2026kernelpack} produced four node sets in each domain, at spacings $0.103$, $0.0675$, $0.0445$, and $0.0295$ in 2D ($172$ to $1907$ nodes) and $0.22$, $0.16$, $0.118$, and $0.088$ in 3D ($197$ to $2570$ nodes). The errors use fixed sets of $4096$ (2D) and $32768$ (3D) uniformly random interior points. For degrees $\varpi=1,3,5$, the divergence-free polynomial spaces have dimensions $n_P=5,14,27$ in 2D and $11,50,133$ in 3D, and every stencil keeps the full space. At the finest level, the sixth-order errors are about $4.5\times10^{-6}$ in 2D and $1.6\times10^{-3}$ in 3D for both kernels.

\subsubsection{Divergence and periodicity}
On the evaluation points where the output reconstruction is differentiable (i.e., away from the centers of the cubic kernel), we measure the divergence relative to the size of the velocity gradient,
\begin{align}
D_{\rm rms}=
\frac{\bigl(|Q_D|^{-1}\sum_{y\in Q_D}|\nabla\!\cdot v_h(y)|^2\bigr)^{1/2}}
{\max\{1,\bigl(|Q_D|^{-1}\sum_{y\in Q_D}\|Dv_h(y)\|_F^2\bigr)^{1/2}\}},
\end{align}
with the derivative of each local expansion taken on its own stencil. Its largest value over all smooth tests is $1.5\times10^{-13}$.

The periodic tests on $[0,2\pi)^d$ use the potentials
\begin{align}
\psi&=\sin x\cos(2y)+0.2\cos(3x)\sin y,\qquad d=2,\\
A&=(\sin y\sin(2z),\sin z\sin(2x),\sin x\sin(2y))^{\mathsf T},\qquad d=3,
\end{align}
on $657$ (2D) and $1042$ (3D) scattered nodes. We compare the reconstructed velocities and Jacobians at points that differ by integer multiples of the period, on the faces, edges, and corners of the box and at $128$ interior points. The largest differences are $1.4\times10^{-12}$ for the velocities and $1.6\times10^{-12}$ for the Jacobians.

\subsubsection{The turbulent ensemble}
\label{app:k41-ensemble}
The statistical test of Section~\ref{sec:spatial-convergence} uses the solenoidal Fourier ensemble
\begin{align}
v(y)=\sum_{\mathbf k\in\mathcal K}\bigl[\mathbf c_{\mathbf k}\cos(\mathbf k\cdot y)+\mathbf s_{\mathbf k}\sin(\mathbf k\cdot y)\bigr],
\qquad
\mathbb E[\mathbf c_{\mathbf k}\mathbf c_{\mathbf k}^{\mathsf T}]=\mathbb E[\mathbf s_{\mathbf k}\mathbf s_{\mathbf k}^{\mathsf T}]
=w_{\mathbf k}\left(I-\frac{\mathbf k\mathbf k^{\mathsf T}}{\|\mathbf k\|_2^2}\right),
\end{align}
with independent Gaussian coefficients perpendicular to their wave vectors, $1514$ modes up to wavenumber $1024$, and weights $w_{\mathbf k}$ that integrate a $\|\mathbf k\|_2^{-5/3}$ energy spectrum over radial shells, normalized so that $2\sum_{\mathbf k}w_{\mathbf k}=1$. Its covariance is divergence free, and its structure function approximately follows the Kolmogorov law over the separations $0.03\leq r\leq0.8$, which we fixed before any output reconstruction. At smaller separations, the finite bandwidth makes the increments smooth. We restrict the ensemble to $\Omega_3$ and reconstruct it on six node sets with spacings $0.26$, $0.19$, $0.14$, $0.101$, $0.075$, and $0.05$ ($128$ to $13567$ nodes). The structure functions~\eqref{eq:spatial-k41-s2} use $32$ separations from $10^{-5}$ to $0.8$ in eight fixed directions and are computed exactly from the covariance, not from samples. For degree five, $E_{S_2}(0.8)$ falls from $14\%$ on the coarsest set to $4.6\%$ on the finest, while the worst error over the whole range remains $64\%$.

For the comparison of local and global output reconstruction, we use the node set with $1770$ nodes and one fixed five-scale Mat\'ern kernel for both, with the same polynomials. The global system uses the covariance anchored at a point $a$, $K(x,y)-K(x,a)-K(a,y)+K(a,a)$, which is positive definite and gives the same interpolant, since the constant polynomials absorb the difference. For degree five, the errors $E_{S_2}(0.8)$ are $9.9\%$ (local) and $10.6\%$ (global), the normalized mean square errors of the increments are $0.168$ and $0.166$, and the worst errors over the whole range are about $87\%$ for both.

%% file: sections/lagrange_matched_control_table.tex
\begin{table}[pos=!htbp]
\centering\small
\caption{Test error at the nodes (mean relative velocity error over the 100 test inputs) of direct and Lagrange evaluation of the validated unaugmented fixed-shape model of the lid-driven cavity with global output reconstruction, of direct evaluation applied to the sampled velocities instead of the coefficients, and the Frobenius norm of the spatial coefficients of the training fields. Bold: smallest test error in each row.}
\label{tab:lagrange-matched-control}
\begin{tabular}{rrrrrr}
\toprule
$N$ & Direct & Lagrange & Direct/Lagrange & Direct on samples & $\|B\|_F$ \\
\midrule
500 & $1.54\times10^{-5}$ & $\mathbf{1.20\times10^{-5}}$ & 1.28 & $\mathbf{1.20\times10^{-5}}$ & $1.58\times10^{9}$ \\
1000 & $8.39\times10^{-6}$ & $4.25\times10^{-6}$ & 1.97 & $\mathbf{4.24\times10^{-6}}$ & $2.24\times10^{9}$ \\
5000 & $3.19\times10^{-6}$ & $9.54\times10^{-7}$ & 3.34 & $\mathbf{9.32\times10^{-7}}$ & $5.00\times10^{9}$ \\
10000 & $3.42\times10^{-6}$ & $6.64\times10^{-7}$ & 5.14 & $\mathbf{6.42\times10^{-7}}$ & $7.08\times10^{9}$ \\
\bottomrule
\end{tabular}
\end{table}

%% file: sections/gpu_results_appendix.tex
\subsection{All operator benchmarks}
\label{app:gpu-results}

This section collects the results of all ten benchmarks for every training size. Figures~\ref{fig:gpu-error-additional} and~\ref{fig:gpu-warm-additional} complete Figures~\ref{fig:gpu-error-main} and~\ref{fig:gpu-warm-main}, Figures~\ref{fig:gpu-density-additional} and~\ref{fig:gpu-storage-additional} complete Figures~\ref{fig:gpu-density-main} and~\ref{fig:gpu-storage-main}, and Table~\ref{tab:gpu-decay-rates} gives the rates at which the errors decrease with $N$. Tables~\ref{tab:gpu-accuracy-n10000} and~\ref{tab:gpu-cost-n10000} collect the errors, storage, and prediction times at $N=10000$, and Tables~\ref{tab:gpu-all-forced-turb}--\ref{tab:gpu-all-buoyancy-cavity-flow} list all $160$ fitted models. Two entries show why we report storage separately from density. The fixed-shape and fitted-shape augmented species models have densities of $9.1\%$ and $21.7\%$ but $26$ and $35$ million factor entries, and the factor of the backward-facing step, at $33\%$ density, has $38$ million entries, three quarters of a dense one. All cost columns use the nested-dissection ordering of Section~\ref{sec:operator-factors}, measured by a replay of the selected models (\texttt{scripts/fastol\_gpu\_cost\_replay.py}). The test errors were computed with the approximate-minimum-degree ordering, and the predictions with the two orderings agree to $1.2\times10^{-10}$.

\input{sections/gpu_decay_rates}
\input{sections/gpu_accuracy_n10000}
\input{sections/gpu_cost_n10000}

\begin{figure}[pos=tbp]
\centering
\includegraphics{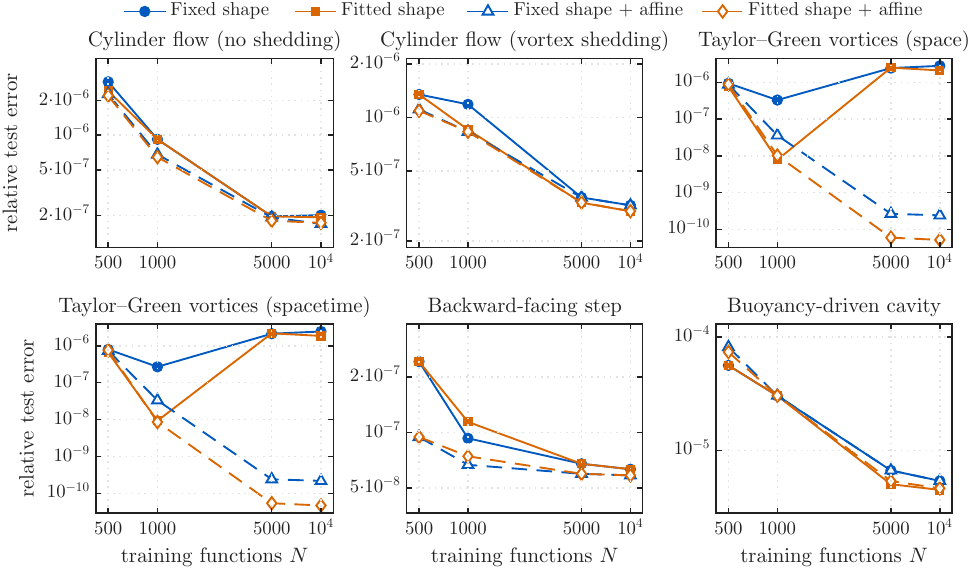}
\caption{Test error of the other six benchmarks. Dashed: with affine augmentation.}
\label{fig:gpu-error-additional}
\end{figure}

\begin{figure}[pos=tbp]
\centering
\includegraphics{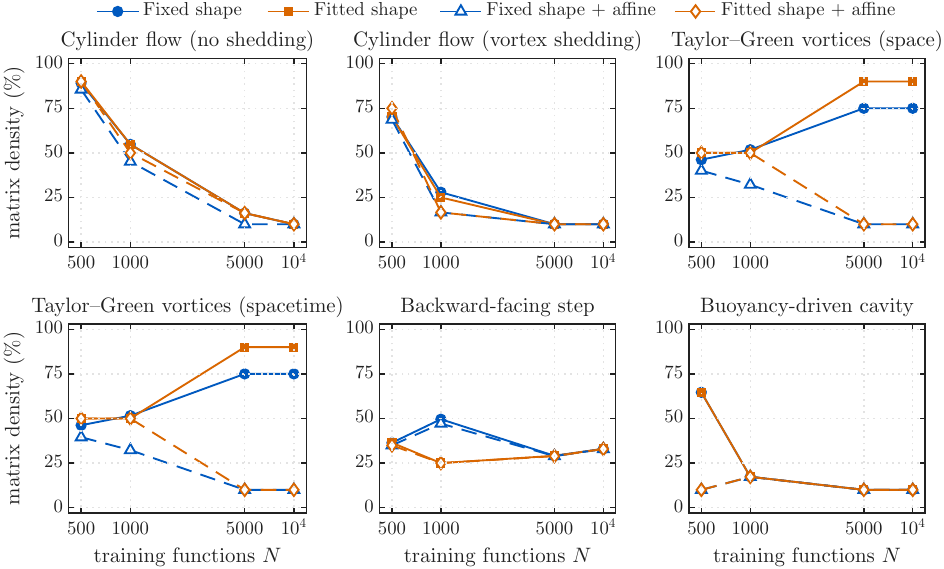}
\caption{Selected density of the other six benchmarks. With augmentation, the Taylor--Green models select about $10\%$ density at the two largest $N$, against $75\%$ or $90\%$ without it.}
\label{fig:gpu-density-additional}
\end{figure}

\begin{figure}[pos=tbp]
\centering
\includegraphics{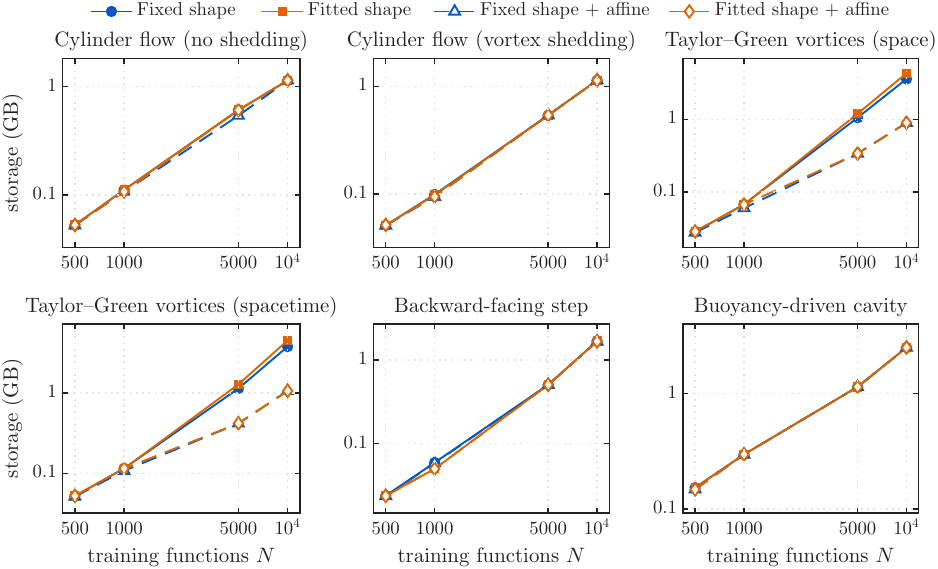}
\caption{Accounted storage of the other six benchmarks (Appendix~\ref{app:cost-details}).}
\label{fig:gpu-storage-additional}
\end{figure}

\begin{figure}[pos=tbp]
\centering
\includegraphics{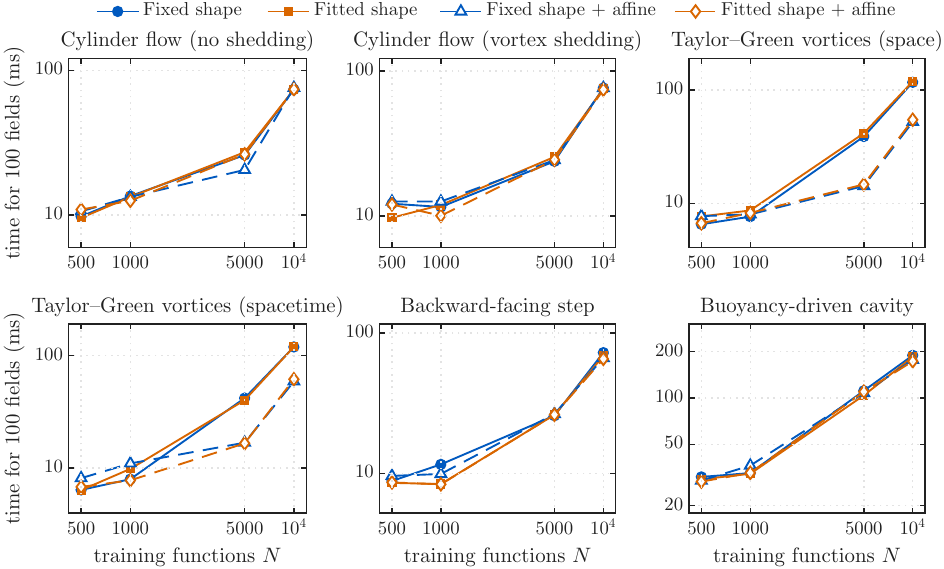}
\caption{Median warm time to predict all $100$ test fields for the other six benchmarks.}
\label{fig:gpu-warm-additional}
\end{figure}

\input{sections/gpu_all_models_tables}

\subsubsection{Offline cost}
\label{app:gpu-offline}
Table~\ref{tab:gpu-offline-n10000} gives the offline costs at $N=10000$. Fitting and validating a fitted-shape model takes $11$--$27$ minutes, against $2$--$7$ minutes for a fixed-shape model. Once the parameters are chosen, rebuilding the factorization takes about one to five seconds for most benchmarks and about $43$ seconds for the dense forced-turbulence models. The setup of the output reconstruction, which the four models share, takes $0.03$--$2.9$ seconds after compilation, and a first setup, which also compiles, takes $1.6$--$24$ seconds.

\input{sections/gpu_offline_n10000}
\FloatBarrier

%% file: sections/gpu_decay_rates.tex
\begin{table}[pos=!htbp]
\centering\small
\setlength{\tabcolsep}{4pt}
\caption{Least-squares exponent $\tau$ of the test error $\propto N^{-\tau}$ over $N\in\{500,1000,5000,10000\}$. Bold: fastest decay in each row.}
\label{tab:gpu-decay-rates}
\begin{tabular}{lrrrr}
\toprule
Data set & Fixed & Fitted & Fixed + affine & Fitted + affine \\
\midrule
Forced isotropic turbulence & 0.68 & 0.69 & \textbf{0.76} & 0.66 \\
Turbulent species transport & 0.85 & 1.09 & 1.03 & \textbf{1.26} \\
Merging vortices & \textbf{0.77} & 0.73 & 0.57 & 0.53 \\
Lid-driven cavity & 0.97 & \textbf{1.09} & 0.96 & 1.08 \\
Cylinder flow (no shedding) & \textbf{0.91} & 0.87 & 0.85 & 0.84 \\
Cylinder flow (vortex shedding) & \textbf{0.54} & 0.53 & 0.44 & 0.47 \\
Backward-facing step & 0.39 & \textbf{0.42} & 0.14 & 0.15 \\
Buoyancy-driven cavity & 0.82 & 0.91 & 0.92 & \textbf{0.96} \\
Taylor--Green vortices (space) & $-$0.56 & $-$1.06 & 2.81 & \textbf{3.24} \\
Taylor--Green vortices (spacetime) & $-$0.58 & $-$1.03 & 2.79 & \textbf{3.22} \\
\bottomrule
\end{tabular}
\end{table}

%% file: sections/gpu_accuracy_n10000.tex
\begin{table}[pos=!htbp]
\centering\small
\setlength{\tabcolsep}{4pt}
\caption{Test error (mean relative velocity error of the predictions for the 100 test inputs) at $N=10000$. The two augmented models use the same affine columns. Each model is fitted and validated separately, and all four use the same local output reconstruction. Both Taylor--Green cases use the amplitude normalization~\eqref{eq:tg-response-transform}. Bold: smallest test error in each row.}
\label{tab:gpu-accuracy-n10000}
\begin{tabular}{lrrrr}
\toprule
Data set & Fixed & Fitted & Fixed + affine & Fitted + affine \\
\midrule
Forced isotropic turbulence & $2.80\times10^{-3}$ & $1.62\times10^{-3}$ & $9.77\times10^{-5}$ & $\mathbf{7.14\times10^{-5}}$ \\
Turbulent species transport & $4.34\times10^{-4}$ & $6.86\times10^{-5}$ & $1.89\times10^{-4}$ & $\mathbf{3.69\times10^{-5}}$ \\
Merging vortices & $\mathbf{3.02\times10^{-7}}$ & $3.07\times10^{-7}$ & $3.39\times10^{-7}$ & $3.19\times10^{-7}$ \\
Lid-driven cavity & $6.42\times10^{-7}$ & $4.76\times10^{-7}$ & $6.42\times10^{-7}$ & $\mathbf{4.72\times10^{-7}}$ \\
Cylinder flow (no shedding) & $2.02\times10^{-7}$ & $1.94\times10^{-7}$ & $\mathbf{1.70\times10^{-7}}$ & $1.73\times10^{-7}$ \\
Cylinder flow (vortex shedding) & $3.19\times10^{-7}$ & $\mathbf{2.95\times10^{-7}}$ & $3.19\times10^{-7}$ & $2.96\times10^{-7}$ \\
Taylor--Green vortices (space) & $2.82\times10^{-6}$ & $2.11\times10^{-6}$ & $2.42\times10^{-10}$ & $\mathbf{5.22\times10^{-11}}$ \\
Taylor--Green vortices (spacetime) & $2.46\times10^{-6}$ & $1.86\times10^{-6}$ & $2.18\times10^{-10}$ & $\mathbf{4.70\times10^{-11}}$ \\
Backward-facing step & $6.33\times10^{-8}$ & $6.33\times10^{-8}$ & $\mathbf{5.86\times10^{-8}}$ & $\mathbf{5.86\times10^{-8}}$ \\
Buoyancy-driven cavity & $5.38\times10^{-6}$ & $\mathbf{4.47\times10^{-6}}$ & $5.38\times10^{-6}$ & $4.62\times10^{-6}$ \\
\bottomrule
\end{tabular}
\end{table}

%% file: sections/gpu_cost_n10000.tex
\begin{table}[pos=!htbp]
\centering\small
\setlength{\tabcolsep}{3pt}
\caption{Accounted storage in decimal GB / median warm prediction time in ms for 100 fields at $N=10000$. Storage includes the stored model arrays, the solve buffers, and cuDSS's estimate of the memory of its factors. It is not measured peak GPU memory. Times include operator Lagrange evaluation and local output reconstruction. Bold: smallest storage and smallest time in each row.}
\label{tab:gpu-cost-n10000}
\begin{tabular}{lrrrr}
\toprule
Data set & Fixed & Fitted & Fixed + affine & Fitted + affine \\
\midrule
Forced isotropic turbulence & \textbf{10.680} / 235.2 & \textbf{10.680} / 234.0 & 10.685 / 244.1 & 10.682 / \textbf{233.4} \\
Turbulent species transport & 2.662 / 126.6 & 3.130 / 126.0 & \textbf{2.630} / \textbf{123.3} & 3.137 / 137.4 \\
Merging vortices & 1.397 / 79.6 & 1.029 / 62.4 & 0.772 / \textbf{37.7} & \textbf{0.767} / 39.0 \\
Lid-driven cavity & \textbf{1.135} / 75.9 & \textbf{1.135} / 71.5 & 1.136 / \textbf{70.2} & 1.136 / 70.3 \\
Cylinder flow (no shedding) & \textbf{1.141} / 73.8 & \textbf{1.141} / \textbf{73.7} & 1.142 / 75.7 & 1.142 / 74.0 \\
Cylinder flow (vortex shedding) & \textbf{1.140} / 76.2 & \textbf{1.140} / 75.3 & 1.141 / 76.2 & 1.141 / \textbf{74.2} \\
Taylor--Green vortices (space) & 3.606 / 117.0 & 4.334 / 119.0 & \textbf{0.899} / \textbf{52.3} & \textbf{0.899} / 54.6 \\
Taylor--Green vortices (spacetime) & 3.766 / 119.8 & 4.494 / 120.1 & \textbf{1.059} / \textbf{59.0} & \textbf{1.059} / 61.7 \\
Backward-facing step & 1.679 / 72.4 & 1.679 / 68.8 & \textbf{1.657} / 66.3 & \textbf{1.657} / \textbf{65.3} \\
Buoyancy-driven cavity & \textbf{2.504} / 189.4 & \textbf{2.504} / 184.9 & 2.505 / 177.0 & 2.505 / \textbf{173.4} \\
\bottomrule
\end{tabular}
\end{table}

%% file: sections/gpu_all_models_tables.tex
\begin{table}[pos=!htbp]
\centering\small
\setlength{\tabcolsep}{3.5pt}
\caption{Forced isotropic turbulence: all models. Test error is the mean relative velocity error of the predictions for the 100 test inputs. Density is $d_K=\mathrm{nnz}(K_{UU})/N^2$ in percent. Factor is the number of entries of the cuDSS Cholesky factor in millions. GB is the accounted storage and ms the median warm time for all 100 fields. Bold: smallest test error at each $N$.}
\label{tab:gpu-all-forced-turb}
\begin{tabular}{rlrrrrr}
\toprule
$N$ & Operator model & Test error & Density (\%) & Factor ($10^6$) & GB & ms \\
\midrule
500 & Fixed shape & 0.0214 & 90.00 & 0.125 & 0.465 & 60.9 \\
500 & Fitted shape & 0.013 & 90.00 & 0.125 & 0.465 & 64.3 \\
500 & Fixed shape + affine & $9.37\times10^{-4}$ & 90.00 & 0.125 & 0.467 & 58.5 \\
500 & Fitted shape + affine & $\mathbf{4.90\times10^{-4}}$ & 90.00 & 0.126 & 0.467 & 56.5 \\
\addlinespace[3pt]
1000 & Fixed shape & 0.0157 & 90.00 & 0.990 & 0.750 & 64.8 \\
1000 & Fitted shape & 0.0139 & 90.00 & 0.991 & 0.750 & 70.2 \\
1000 & Fixed shape + affine & $6.05\times10^{-4}$ & 90.00 & 0.990 & 0.752 & 65.2 \\
1000 & Fitted shape + affine & $\mathbf{3.43\times10^{-4}}$ & 90.00 & 0.991 & 0.752 & 65.5 \\
\addlinespace[3pt]
5000 & Fixed shape & $5.16\times10^{-3}$ & 90.00 & 24.722 & 4.390 & 126.8 \\
5000 & Fitted shape & $4.69\times10^{-3}$ & 90.00 & 24.748 & 4.390 & 127.2 \\
5000 & Fixed shape + affine & $1.67\times10^{-4}$ & 90.00 & 24.722 & 4.394 & 126.9 \\
5000 & Fitted shape + affine & $\mathbf{1.10\times10^{-4}}$ & 90.00 & 24.751 & 4.394 & 126.1 \\
\addlinespace[3pt]
10000 & Fixed shape & $2.80\times10^{-3}$ & 90.00 & 98.899 & 10.680 & 235.2 \\
10000 & Fitted shape & $1.62\times10^{-3}$ & 90.00 & 98.984 & 10.680 & 234.0 \\
10000 & Fixed shape + affine & $9.77\times10^{-5}$ & 90.00 & 98.899 & 10.685 & 244.1 \\
10000 & Fitted shape + affine & $\mathbf{7.14\times10^{-5}}$ & 90.00 & 98.651 & 10.682 & 233.4 \\
\bottomrule
\end{tabular}
\end{table}

\begin{table}[pos=!htbp]
\centering\small
\setlength{\tabcolsep}{3.5pt}
\caption{Turbulent species transport: all models. Test error is the mean relative velocity error of the predictions for the 100 test inputs. Density is $d_K=\mathrm{nnz}(K_{UU})/N^2$ in percent. Factor is the number of entries of the cuDSS Cholesky factor in millions. GB is the accounted storage and ms the median warm time for all 100 fields. Bold: smallest test error at each $N$.}
\label{tab:gpu-all-species-transport}
\begin{tabular}{rlrrrrr}
\toprule
$N$ & Operator model & Test error & Density (\%) & Factor ($10^6$) & GB & ms \\
\midrule
500 & Fixed shape & $4.39\times10^{-3}$ & 63.94 & 0.116 & 0.226 & 54.3 \\
500 & Fitted shape & $2.50\times10^{-3}$ & 90.00 & 0.129 & 0.228 & 52.8 \\
500 & Fixed shape + affine & $3.35\times10^{-3}$ & 40.98 & 0.097 & 0.224 & 53.1 \\
500 & Fitted shape + affine & $\mathbf{2.48\times10^{-3}}$ & 44.60 & 0.096 & 0.224 & 54.0 \\
\addlinespace[3pt]
1000 & Fixed shape & $1.25\times10^{-3}$ & 89.71 & 1.000 & 0.278 & 57.1 \\
1000 & Fitted shape & $9.53\times10^{-4}$ & 85.62 & 0.962 & 0.276 & 57.8 \\
1000 & Fixed shape + affine & $1.49\times10^{-3}$ & 22.92 & 0.313 & 0.250 & 55.6 \\
1000 & Fitted shape + affine & $\mathbf{8.75\times10^{-4}}$ & 75.00 & 0.988 & 0.273 & 59.1 \\
\addlinespace[3pt]
5000 & Fixed shape & $2.02\times10^{-4}$ & 69.06 & 24.743 & 1.858 & 98.2 \\
5000 & Fitted shape & $2.95\times10^{-4}$ & 90.00 & 24.744 & 2.036 & 101.7 \\
5000 & Fixed shape + affine & $\mathbf{1.96\times10^{-4}}$ & 69.56 & 24.689 & 1.862 & 98.5 \\
5000 & Fitted shape + affine & $2.55\times10^{-4}$ & 66.71 & 24.746 & 1.838 & 98.3 \\
\addlinespace[3pt]
10000 & Fixed shape & $4.34\times10^{-4}$ & 10.40 & 24.431 & 2.662 & 126.6 \\
10000 & Fitted shape & $6.86\times10^{-5}$ & 21.95 & 33.613 & 3.130 & 126.0 \\
10000 & Fixed shape + affine & $1.89\times10^{-4}$ & 9.12 & 25.740 & 2.630 & 123.3 \\
10000 & Fitted shape + affine & $\mathbf{3.69\times10^{-5}}$ & 21.71 & 35.385 & 3.137 & 137.4 \\
\bottomrule
\end{tabular}
\end{table}

\begin{table}[pos=!htbp]
\centering\small
\setlength{\tabcolsep}{3.5pt}
\caption{Merging vortices: all models. Test error is the mean relative velocity error of the predictions for the 100 test inputs. Density is $d_K=\mathrm{nnz}(K_{UU})/N^2$ in percent. Factor is the number of entries of the cuDSS Cholesky factor in millions. GB is the accounted storage and ms the median warm time for all 100 fields. Bold: smallest test error at each $N$.}
\label{tab:gpu-all-merge-vortices-easier}
\begin{tabular}{rlrrrrr}
\toprule
$N$ & Operator model & Test error & Density (\%) & Factor ($10^6$) & GB & ms \\
\midrule
500 & Fixed shape & $2.85\times10^{-6}$ & 75.06 & 0.104 & 0.025 & 6.0 \\
500 & Fitted shape & $2.51\times10^{-6}$ & 75.00 & 0.106 & 0.025 & 6.8 \\
500 & Fixed shape + affine & $1.94\times10^{-6}$ & 37.36 & 0.065 & 0.021 & 7.9 \\
500 & Fitted shape + affine & $\mathbf{1.66\times10^{-6}}$ & 33.64 & 0.060 & 0.021 & 6.9 \\
\addlinespace[3pt]
1000 & Fixed shape & $1.41\times10^{-6}$ & 63.81 & 0.386 & 0.060 & 7.3 \\
1000 & Fitted shape & $1.45\times10^{-6}$ & 60.31 & 0.387 & 0.058 & 8.6 \\
1000 & Fixed shape + affine & $8.60\times10^{-7}$ & 29.73 & 0.225 & 0.047 & 9.1 \\
1000 & Fitted shape + affine & $\mathbf{8.46\times10^{-7}}$ & 29.04 & 0.217 & 0.047 & 7.4 \\
\addlinespace[3pt]
5000 & Fixed shape & $\mathbf{3.54\times10^{-7}}$ & 24.58 & 6.404 & 0.433 & 24.9 \\
5000 & Fitted shape & $3.91\times10^{-7}$ & 25.00 & 6.356 & 0.436 & 25.2 \\
5000 & Fixed shape + affine & $3.66\times10^{-7}$ & 24.58 & 6.404 & 0.434 & 24.9 \\
5000 & Fitted shape + affine & $4.08\times10^{-7}$ & 18.33 & 4.892 & 0.369 & 21.7 \\
\addlinespace[3pt]
10000 & Fixed shape & $\mathbf{3.02\times10^{-7}}$ & 24.67 & 26.353 & 1.397 & 79.6 \\
10000 & Fitted shape & $3.07\times10^{-7}$ & 15.85 & 18.009 & 1.029 & 62.4 \\
10000 & Fixed shape + affine & $3.39\times10^{-7}$ & 10.00 & 10.703 & 0.772 & 37.7 \\
10000 & Fitted shape + affine & $3.19\times10^{-7}$ & 10.00 & 10.047 & 0.767 & 39.0 \\
\bottomrule
\end{tabular}
\end{table}

\begin{table}[pos=!htbp]
\centering\small
\setlength{\tabcolsep}{3.5pt}
\caption{Lid-driven cavity: all models. Test error is the mean relative velocity error of the predictions for the 100 test inputs. Density is $d_K=\mathrm{nnz}(K_{UU})/N^2$ in percent. Factor is the number of entries of the cuDSS Cholesky factor in millions. GB is the accounted storage and ms the median warm time for all 100 fields. Bold: smallest test error at each $N$.}
\label{tab:gpu-all-lid-cavity-flow}
\begin{tabular}{rlrrrrr}
\toprule
$N$ & Operator model & Test error & Density (\%) & Factor ($10^6$) & GB & ms \\
\midrule
500 & Fixed shape & $1.20\times10^{-5}$ & 22.85 & 0.053 & 0.046 & 9.2 \\
500 & Fitted shape & $1.20\times10^{-5}$ & 22.85 & 0.053 & 0.046 & 10.8 \\
500 & Fixed shape + affine & $\mathbf{1.15\times10^{-5}}$ & 21.49 & 0.048 & 0.046 & 10.4 \\
500 & Fitted shape + affine & $\mathbf{1.15\times10^{-5}}$ & 25.00 & 0.055 & 0.047 & 9.2 \\
\addlinespace[3pt]
1000 & Fixed shape & $4.24\times10^{-6}$ & 19.99 & 0.190 & 0.096 & 10.6 \\
1000 & Fitted shape & $4.11\times10^{-6}$ & 25.00 & 0.223 & 0.098 & 11.4 \\
1000 & Fixed shape + affine & $4.13\times10^{-6}$ & 20.00 & 0.189 & 0.096 & 10.8 \\
1000 & Fitted shape + affine & $\mathbf{4.10\times10^{-6}}$ & 25.00 & 0.223 & 0.098 & 11.1 \\
\addlinespace[3pt]
5000 & Fixed shape & $9.32\times10^{-7}$ & 10.00 & 2.600 & 0.538 & 34.4 \\
5000 & Fitted shape & $\mathbf{6.73\times10^{-7}}$ & 10.00 & 2.600 & 0.538 & 34.6 \\
5000 & Fixed shape + affine & $9.32\times10^{-7}$ & 10.00 & 2.600 & 0.539 & 34.2 \\
5000 & Fitted shape + affine & $6.80\times10^{-7}$ & 10.00 & 2.600 & 0.539 & 34.7 \\
\addlinespace[3pt]
10000 & Fixed shape & $6.42\times10^{-7}$ & 10.00 & 11.198 & 1.135 & 75.9 \\
10000 & Fitted shape & $4.76\times10^{-7}$ & 10.00 & 11.198 & 1.135 & 71.5 \\
10000 & Fixed shape + affine & $6.42\times10^{-7}$ & 10.00 & 11.198 & 1.136 & 70.2 \\
10000 & Fitted shape + affine & $\mathbf{4.72\times10^{-7}}$ & 10.00 & 11.198 & 1.136 & 70.3 \\
\bottomrule
\end{tabular}
\end{table}

\begin{table}[pos=!htbp]
\centering\small
\setlength{\tabcolsep}{3.5pt}
\caption{Cylinder flow (no shedding): all models. Test error is the mean relative velocity error of the predictions for the 100 test inputs. Density is $d_K=\mathrm{nnz}(K_{UU})/N^2$ in percent. Factor is the number of entries of the cuDSS Cholesky factor in millions. GB is the accounted storage and ms the median warm time for all 100 fields. Bold: smallest test error at each $N$.}
\label{tab:gpu-all-flow-cylinder-laminar}
\begin{tabular}{rlrrrrr}
\toprule
$N$ & Operator model & Test error & Density (\%) & Factor ($10^6$) & GB & ms \\
\midrule
500 & Fixed shape & $2.91\times10^{-6}$ & 89.11 & 0.129 & 0.053 & 9.8 \\
500 & Fitted shape & $2.42\times10^{-6}$ & 90.00 & 0.128 & 0.053 & 9.6 \\
500 & Fixed shape + affine & $2.27\times10^{-6}$ & 85.52 & 0.128 & 0.052 & 10.6 \\
500 & Fitted shape + affine & $\mathbf{2.20\times10^{-6}}$ & 90.00 & 0.128 & 0.053 & 10.9 \\
\addlinespace[3pt]
1000 & Fixed shape & $9.21\times10^{-7}$ & 54.74 & 0.674 & 0.111 & 13.5 \\
1000 & Fitted shape & $9.21\times10^{-7}$ & 54.74 & 0.674 & 0.111 & 13.3 \\
1000 & Fixed shape + affine & $6.80\times10^{-7}$ & 45.10 & 0.581 & 0.107 & 13.4 \\
1000 & Fitted shape + affine & $\mathbf{6.45\times10^{-7}}$ & 50.00 & 0.394 & 0.107 & 12.6 \\
\addlinespace[3pt]
5000 & Fixed shape & $1.97\times10^{-7}$ & 16.22 & 4.555 & 0.607 & 26.0 \\
5000 & Fitted shape & $1.97\times10^{-7}$ & 16.22 & 4.555 & 0.607 & 27.2 \\
5000 & Fixed shape + affine & $1.93\times10^{-7}$ & 10.00 & 2.726 & 0.540 & 20.5 \\
5000 & Fitted shape + affine & $\mathbf{1.81\times10^{-7}}$ & 16.22 & 4.555 & 0.608 & 26.3 \\
\addlinespace[3pt]
10000 & Fixed shape & $2.02\times10^{-7}$ & 10.00 & 11.981 & 1.141 & 73.8 \\
10000 & Fitted shape & $1.94\times10^{-7}$ & 10.00 & 11.981 & 1.141 & 73.7 \\
10000 & Fixed shape + affine & $\mathbf{1.70\times10^{-7}}$ & 10.00 & 11.981 & 1.142 & 75.7 \\
10000 & Fitted shape + affine & $1.73\times10^{-7}$ & 10.00 & 11.981 & 1.142 & 74.0 \\
\bottomrule
\end{tabular}
\end{table}

\begin{table}[pos=!htbp]
\centering\small
\setlength{\tabcolsep}{3.5pt}
\caption{Cylinder flow (vortex shedding): all models. Test error is the mean relative velocity error of the predictions for the 100 test inputs. Density is $d_K=\mathrm{nnz}(K_{UU})/N^2$ in percent. Factor is the number of entries of the cuDSS Cholesky factor in millions. GB is the accounted storage and ms the median warm time for all 100 fields. Bold: smallest test error at each $N$.}
\label{tab:gpu-all-flow-cylinder-shedding}
\begin{tabular}{rlrrrrr}
\toprule
$N$ & Operator model & Test error & Density (\%) & Factor ($10^6$) & GB & ms \\
\midrule
500 & Fixed shape & $1.35\times10^{-6}$ & 70.21 & 0.122 & 0.051 & 12.2 \\
500 & Fitted shape & $1.35\times10^{-6}$ & 70.21 & 0.122 & 0.051 & 9.7 \\
500 & Fixed shape + affine & $1.11\times10^{-6}$ & 68.61 & 0.122 & 0.051 & 12.6 \\
500 & Fitted shape + affine & $\mathbf{1.09\times10^{-6}}$ & 75.00 & 0.125 & 0.051 & 12.0 \\
\addlinespace[3pt]
1000 & Fixed shape & $1.19\times10^{-6}$ & 27.92 & 0.262 & 0.099 & 11.6 \\
1000 & Fitted shape & $8.54\times10^{-7}$ & 25.00 & 0.231 & 0.098 & 11.9 \\
1000 & Fixed shape + affine & $\mathbf{8.31\times10^{-7}}$ & 16.61 & 0.164 & 0.094 & 12.6 \\
1000 & Fitted shape + affine & $\mathbf{8.31\times10^{-7}}$ & 16.61 & 0.164 & 0.094 & 10.1 \\
\addlinespace[3pt]
5000 & Fixed shape & $3.53\times10^{-7}$ & 10.00 & 2.710 & 0.539 & 23.8 \\
5000 & Fitted shape & $\mathbf{3.30\times10^{-7}}$ & 10.00 & 2.710 & 0.539 & 25.6 \\
5000 & Fixed shape + affine & $3.53\times10^{-7}$ & 10.00 & 2.710 & 0.540 & 24.3 \\
5000 & Fitted shape + affine & $3.31\times10^{-7}$ & 10.00 & 2.710 & 0.540 & 24.4 \\
\addlinespace[3pt]
10000 & Fixed shape & $3.19\times10^{-7}$ & 10.00 & 11.865 & 1.140 & 76.2 \\
10000 & Fitted shape & $\mathbf{2.95\times10^{-7}}$ & 10.00 & 11.865 & 1.140 & 75.3 \\
10000 & Fixed shape + affine & $3.19\times10^{-7}$ & 10.00 & 11.865 & 1.141 & 76.2 \\
10000 & Fitted shape + affine & $2.96\times10^{-7}$ & 10.00 & 11.865 & 1.141 & 74.2 \\
\bottomrule
\end{tabular}
\end{table}

\begin{table}[pos=!htbp]
\centering\small
\setlength{\tabcolsep}{3.5pt}
\caption{Taylor--Green vortices (space): all models. Test error is the mean relative velocity error of the predictions for the 100 test inputs. Density is $d_K=\mathrm{nnz}(K_{UU})/N^2$ in percent. Factor is the number of entries of the cuDSS Cholesky factor in millions. GB is the accounted storage and ms the median warm time for all 100 fields. Bold: smallest test error at each $N$.}
\label{tab:gpu-all-taylor-green}
\begin{tabular}{rlrrrrr}
\toprule
$N$ & Operator model & Test error & Density (\%) & Factor ($10^6$) & GB & ms \\
\midrule
500 & Fixed shape & $9.41\times10^{-7}$ & 46.17 & 0.094 & 0.028 & 6.5 \\
500 & Fitted shape & $\mathbf{7.72\times10^{-7}}$ & 50.00 & 0.097 & 0.029 & 7.7 \\
500 & Fixed shape + affine & $8.76\times10^{-7}$ & 40.07 & 0.083 & 0.028 & 7.7 \\
500 & Fitted shape + affine & $8.83\times10^{-7}$ & 50.00 & 0.097 & 0.029 & 6.7 \\
\addlinespace[3pt]
1000 & Fixed shape & $3.31\times10^{-7}$ & 51.76 & 0.422 & 0.068 & 7.6 \\
1000 & Fitted shape & $\mathbf{7.86\times10^{-9}}$ & 50.00 & 0.414 & 0.067 & 8.6 \\
1000 & Fixed shape + affine & $3.63\times10^{-8}$ & 32.16 & 0.288 & 0.060 & 8.0 \\
1000 & Fitted shape + affine & $1.03\times10^{-8}$ & 50.00 & 0.414 & 0.067 & 8.2 \\
\addlinespace[3pt]
5000 & Fixed shape & $2.45\times10^{-6}$ & 75.00 & 22.923 & 1.056 & 38.9 \\
5000 & Fitted shape & $2.51\times10^{-6}$ & 90.00 & 24.889 & 1.199 & 41.5 \\
5000 & Fixed shape + affine & $2.67\times10^{-10}$ & 10.00 & 2.644 & 0.339 & 14.2 \\
5000 & Fitted shape + affine & $\mathbf{6.00\times10^{-11}}$ & 10.00 & 2.644 & 0.339 & 14.7 \\
\addlinespace[3pt]
10000 & Fixed shape & $2.82\times10^{-6}$ & 75.00 & 72.677 & 3.606 & 117.0 \\
10000 & Fitted shape & $2.11\times10^{-6}$ & 90.00 & 99.800 & 4.334 & 119.0 \\
10000 & Fixed shape + affine & $2.42\times10^{-10}$ & 10.00 & 11.728 & 0.899 & 52.3 \\
10000 & Fitted shape + affine & $\mathbf{5.22\times10^{-11}}$ & 10.00 & 11.728 & 0.899 & 54.6 \\
\bottomrule
\end{tabular}
\end{table}

\begin{table}[pos=!htbp]
\centering\small
\setlength{\tabcolsep}{3.5pt}
\caption{Taylor--Green vortices (spacetime): all models. Test error is the mean relative velocity error of the predictions for the 100 test inputs. Density is $d_K=\mathrm{nnz}(K_{UU})/N^2$ in percent. Factor is the number of entries of the cuDSS Cholesky factor in millions. GB is the accounted storage and ms the median warm time for all 100 fields. Bold: smallest test error at each $N$.}
\label{tab:gpu-all-taylor-green-time}
\begin{tabular}{rlrrrrr}
\toprule
$N$ & Operator model & Test error & Density (\%) & Factor ($10^6$) & GB & ms \\
\midrule
500 & Fixed shape & $8.01\times10^{-7}$ & 46.23 & 0.093 & 0.052 & 6.4 \\
500 & Fitted shape & $\mathbf{6.58\times10^{-7}}$ & 50.00 & 0.097 & 0.053 & 6.3 \\
500 & Fixed shape + affine & $7.38\times10^{-7}$ & 39.51 & 0.082 & 0.052 & 8.1 \\
500 & Fitted shape + affine & $7.65\times10^{-7}$ & 50.00 & 0.097 & 0.053 & 6.8 \\
\addlinespace[3pt]
1000 & Fixed shape & $2.71\times10^{-7}$ & 51.58 & 0.419 & 0.116 & 7.9 \\
1000 & Fitted shape & $9.30\times10^{-9}$ & 50.00 & 0.414 & 0.115 & 9.8 \\
1000 & Fixed shape + affine & $3.34\times10^{-8}$ & 32.27 & 0.284 & 0.108 & 11.0 \\
1000 & Fitted shape + affine & $\mathbf{8.61\times10^{-9}}$ & 50.00 & 0.414 & 0.115 & 7.7 \\
\addlinespace[3pt]
5000 & Fixed shape & $2.14\times10^{-6}$ & 75.00 & 22.923 & 1.136 & 41.9 \\
5000 & Fitted shape & $2.19\times10^{-6}$ & 90.00 & 24.889 & 1.279 & 40.2 \\
5000 & Fixed shape + affine & $2.41\times10^{-10}$ & 10.00 & 2.644 & 0.419 & 16.8 \\
5000 & Fitted shape + affine & $\mathbf{5.40\times10^{-11}}$ & 10.00 & 2.644 & 0.419 & 16.6 \\
\addlinespace[3pt]
10000 & Fixed shape & $2.46\times10^{-6}$ & 75.00 & 72.677 & 3.766 & 119.8 \\
10000 & Fitted shape & $1.86\times10^{-6}$ & 90.00 & 99.800 & 4.494 & 120.1 \\
10000 & Fixed shape + affine & $2.18\times10^{-10}$ & 10.00 & 11.728 & 1.059 & 59.0 \\
10000 & Fitted shape + affine & $\mathbf{4.70\times10^{-11}}$ & 10.00 & 11.728 & 1.059 & 61.7 \\
\bottomrule
\end{tabular}
\end{table}

\begin{table}[pos=!htbp]
\centering\small
\setlength{\tabcolsep}{3.5pt}
\caption{Backward-facing step: all models. Test error is the mean relative velocity error of the predictions for the 100 test inputs. Density is $d_K=\mathrm{nnz}(K_{UU})/N^2$ in percent. Factor is the number of entries of the cuDSS Cholesky factor in millions. GB is the accounted storage and ms the median warm time for all 100 fields. Bold: smallest test error at each $N$.}
\label{tab:gpu-all-backward-facing-step}
\begin{tabular}{rlrrrrr}
\toprule
$N$ & Operator model & Test error & Density (\%) & Factor ($10^6$) & GB & ms \\
\midrule
500 & Fixed shape & $2.43\times10^{-7}$ & 36.41 & 0.073 & 0.024 & 8.8 \\
500 & Fitted shape & $2.43\times10^{-7}$ & 36.41 & 0.073 & 0.024 & 8.6 \\
500 & Fixed shape + affine & $\mathbf{9.46\times10^{-8}}$ & 34.94 & 0.068 & 0.024 & 9.6 \\
500 & Fitted shape + affine & $\mathbf{9.46\times10^{-8}}$ & 34.94 & 0.068 & 0.024 & 8.6 \\
\addlinespace[3pt]
1000 & Fixed shape & $9.31\times10^{-8}$ & 49.59 & 0.350 & 0.059 & 11.6 \\
1000 & Fitted shape & $1.14\times10^{-7}$ & 25.00 & 0.236 & 0.050 & 8.4 \\
1000 & Fixed shape + affine & $\mathbf{6.68\times10^{-8}}$ & 47.15 & 0.342 & 0.058 & 9.9 \\
1000 & Fitted shape + affine & $7.44\times10^{-8}$ & 25.00 & 0.236 & 0.050 & 8.4 \\
\addlinespace[3pt]
5000 & Fixed shape & $6.77\times10^{-8}$ & 29.08 & 7.711 & 0.504 & 25.8 \\
5000 & Fitted shape & $6.77\times10^{-8}$ & 29.08 & 7.711 & 0.504 & 26.1 \\
5000 & Fixed shape + affine & $\mathbf{5.98\times10^{-8}}$ & 28.90 & 7.785 & 0.503 & 26.5 \\
5000 & Fitted shape + affine & $\mathbf{5.98\times10^{-8}}$ & 28.90 & 7.785 & 0.503 & 26.2 \\
\addlinespace[3pt]
10000 & Fixed shape & $6.33\times10^{-8}$ & 32.98 & 40.506 & 1.679 & 72.4 \\
10000 & Fitted shape & $6.33\times10^{-8}$ & 32.98 & 40.506 & 1.679 & 68.8 \\
10000 & Fixed shape + affine & $\mathbf{5.86\times10^{-8}}$ & 32.88 & 38.228 & 1.657 & 66.3 \\
10000 & Fitted shape + affine & $\mathbf{5.86\times10^{-8}}$ & 32.88 & 38.228 & 1.657 & 65.3 \\
\bottomrule
\end{tabular}
\end{table}

\begin{table}[pos=!htbp]
\centering\small
\setlength{\tabcolsep}{3.5pt}
\caption{Buoyancy-driven cavity: all models. Test error is the mean relative velocity error of the predictions for the 100 test inputs. Density is $d_K=\mathrm{nnz}(K_{UU})/N^2$ in percent. Factor is the number of entries of the cuDSS Cholesky factor in millions. GB is the accounted storage and ms the median warm time for all 100 fields. Bold: smallest test error at each $N$.}
\label{tab:gpu-all-buoyancy-cavity-flow}
\begin{tabular}{rlrrrrr}
\toprule
$N$ & Operator model & Test error & Density (\%) & Factor ($10^6$) & GB & ms \\
\midrule
500 & Fixed shape & $\mathbf{5.63\times10^{-5}}$ & 64.68 & 0.118 & 0.154 & 30.7 \\
500 & Fitted shape & $\mathbf{5.63\times10^{-5}}$ & 64.68 & 0.118 & 0.154 & 29.6 \\
500 & Fixed shape + affine & $8.18\times10^{-5}$ & 10.00 & 0.024 & 0.148 & 29.1 \\
500 & Fitted shape + affine & $7.39\times10^{-5}$ & 10.00 & 0.024 & 0.148 & 28.6 \\
\addlinespace[3pt]
1000 & Fixed shape & $\mathbf{3.03\times10^{-5}}$ & 17.33 & 0.170 & 0.298 & 32.5 \\
1000 & Fitted shape & $\mathbf{3.03\times10^{-5}}$ & 17.33 & 0.170 & 0.298 & 32.2 \\
1000 & Fixed shape + affine & $3.04\times10^{-5}$ & 17.27 & 0.166 & 0.298 & 36.4 \\
1000 & Fitted shape + affine & $3.04\times10^{-5}$ & 17.27 & 0.166 & 0.298 & 32.6 \\
\addlinespace[3pt]
5000 & Fixed shape & $6.65\times10^{-6}$ & 10.00 & 2.838 & 1.143 & 111.5 \\
5000 & Fitted shape & $\mathbf{5.02\times10^{-6}}$ & 10.00 & 2.838 & 1.143 & 104.1 \\
5000 & Fixed shape + affine & $6.65\times10^{-6}$ & 10.00 & 2.838 & 1.144 & 107.7 \\
5000 & Fitted shape + affine & $5.35\times10^{-6}$ & 10.00 & 2.838 & 1.144 & 109.9 \\
\addlinespace[3pt]
10000 & Fixed shape & $5.38\times10^{-6}$ & 10.00 & 11.947 & 2.504 & 189.4 \\
10000 & Fitted shape & $\mathbf{4.47\times10^{-6}}$ & 10.00 & 11.947 & 2.504 & 184.9 \\
10000 & Fixed shape + affine & $5.38\times10^{-6}$ & 10.00 & 11.947 & 2.505 & 177.0 \\
10000 & Fitted shape + affine & $4.62\times10^{-6}$ & 10.00 & 11.947 & 2.505 & 173.4 \\
\bottomrule
\end{tabular}
\end{table}

%% file: sections/gpu_offline_n10000.tex
\begin{table}[pos=tbp]
\centering\small
\setlength{\tabcolsep}{4pt}
\caption{Offline and initialization costs at $N=10000$. Selection includes kernel fitting, where applicable, and validation. Rebuild assembles and factors the selected operator. Initialization includes two synchronized 100-field prediction passes, compilation, and CUDA graph preparation. The four models share the setup of the output reconstruction and may reuse compiled code. The dash marks the selection time of the fixed-shape backward-facing-step model, which contains an external pause. Its raw time is in the released data. Bold: smallest time in each column for each data set.}
\label{tab:gpu-offline-n10000}
\begin{tabular}{llrrr}
\toprule
Data set & Operator model & Selection (min) & Rebuild (s) & Initialization (s) \\
\midrule
Forced isotropic turbulence & Fixed shape & \textbf{3.83} & 43.27 & 3.71 \\
 & Fitted shape & 26.32 & \textbf{41.63} & 3.28 \\
 & Fixed shape + affine & 6.45 & 43.70 & 4.15 \\
 & Fitted shape + affine & 27.01 & 42.77 & \textbf{3.05} \\
\addlinespace[3pt]
Turbulent species transport & Fixed shape & \textbf{3.24} & \textbf{1.48} & 4.17 \\
 & Fitted shape & 25.89 & 4.64 & \textbf{3.70} \\
 & Fixed shape + affine & 5.79 & 1.76 & 6.05 \\
 & Fitted shape + affine & 24.01 & 4.93 & 3.79 \\
\addlinespace[3pt]
Merging vortices & Fixed shape & \textbf{4.81} & 0.91 & 3.35 \\
 & Fitted shape & 15.26 & 2.92 & \textbf{3.20} \\
 & Fixed shape + affine & 4.94 & 1.12 & 4.94 \\
 & Fitted shape + affine & 11.70 & \textbf{0.81} & 3.26 \\
\addlinespace[3pt]
Lid-driven cavity & Fixed shape & \textbf{2.12} & 0.86 & 2.37 \\
 & Fitted shape & 12.97 & 0.87 & \textbf{2.20} \\
 & Fixed shape + affine & 3.06 & 0.91 & 3.54 \\
 & Fitted shape + affine & 10.90 & \textbf{0.85} & 2.26 \\
\addlinespace[3pt]
Cylinder flow (no shedding) & Fixed shape & \textbf{2.32} & 0.87 & 2.58 \\
 & Fitted shape & 12.77 & \textbf{0.86} & 2.54 \\
 & Fixed shape + affine & 4.28 & 0.92 & 3.77 \\
 & Fitted shape + affine & 14.53 & 0.90 & \textbf{2.50} \\
\addlinespace[3pt]
Cylinder flow (vortex shedding) & Fixed shape & \textbf{2.12} & \textbf{0.85} & 2.47 \\
 & Fitted shape & 13.58 & \textbf{0.85} & 2.61 \\
 & Fixed shape + affine & 3.07 & 0.91 & 3.43 \\
 & Fitted shape + affine & 12.86 & 0.88 & \textbf{2.37} \\
\addlinespace[3pt]
Taylor--Green vortices (space) & Fixed shape & \textbf{3.41} & 3.18 & 3.61 \\
 & Fitted shape & 12.80 & 5.54 & \textbf{3.43} \\
 & Fixed shape + affine & 5.32 & 1.25 & 5.47 \\
 & Fitted shape + affine & 15.30 & \textbf{0.68} & 3.54 \\
\addlinespace[3pt]
Taylor--Green vortices (spacetime) & Fixed shape & \textbf{3.49} & 3.30 & 3.87 \\
 & Fitted shape & 16.10 & 3.60 & \textbf{3.60} \\
 & Fixed shape + affine & 4.82 & 0.88 & 5.22 \\
 & Fitted shape + affine & 15.47 & \textbf{0.69} & 3.64 \\
\addlinespace[3pt]
Backward-facing step & Fixed shape & --- & 0.95 & 1.48 \\
 & Fitted shape & 14.23 & 0.99 & \textbf{1.31} \\
 & Fixed shape + affine & \textbf{3.25} & 3.27 & 2.53 \\
 & Fitted shape + affine & 13.42 & \textbf{0.94} & 1.36 \\
\addlinespace[3pt]
Buoyancy-driven cavity & Fixed shape & \textbf{2.11} & 1.65 & 2.72 \\
 & Fitted shape & 17.67 & \textbf{1.48} & \textbf{2.53} \\
 & Fixed shape + affine & 3.16 & 1.64 & 4.06 \\
 & Fitted shape + affine & 17.00 & 1.51 & 2.63 \\
\addlinespace[3pt]
\bottomrule
\end{tabular}
\end{table}